\documentclass[jair,twoside,11pt,theapa]{article}
\usepackage{jair, theapa, rawfonts}

\jairheading{47}{2013}{649-695}{02/13}{08/13}
\ShortHeadings{Protecting Privacy thru Distributed Computation in Multi-agent Decision Making}
{L\'eaut\'e \& Faltings}
\firstpageno{649}
	
\newcommand{\citet}[1]{\citeauthor{#1}~\citeyear{#1}}

\usepackage {algorithm}
\usepackage[noend]{algorithmic}

\algsetup{indent=2em}

\usepackage {graphicx}
\usepackage {subfigure}
\usepackage{amsthm}
\usepackage{colortbl}
\usepackage {xfrac}
\usepackage {url}
\usepackage{amssymb}
\usepackage{tikz}

\setlength{\marginparwidth}{2.5cm}
\let\oldmarginpar\marginpar
\renewcommand\marginpar[1]{\-\oldmarginpar[\raggedleft\footnotesize #1]%
{\raggedright\footnotesize #1}}

\newtheorem{theorem}{Theorem}

\newtheorem{definition}{Definition}

\begin{document}


\title{Protecting Privacy through Distributed Computation \\in Multi-agent Decision Making}

\author{\name Thomas L\'eaut\'e \email thomas.leaute@a3.epfl.ch \\
       \name Boi Faltings \email boi.faltings@epfl.ch \\
       \addr 
       Ecole Polytechnique F\'ed\'erale de Lausanne (EPFL) \\
       Artificial Intelligence Laboratory (LIA)\\
       Station 14 \\
       CH-1015 Lausanne, Switzerland}
\maketitle

\begin {abstract}
As large-scale theft of data from corporate servers is becoming increasingly
common, it becomes interesting to examine alternatives to the paradigm of centralizing
sensitive data into large databases. Instead, one could use cryptography and distributed
computation so that sensitive data can be supplied and processed in encrypted form, and
only the final result is made known. In this paper, we examine how such a paradigm can be used to implement \emph{constraint satisfaction}, a
technique that can solve a broad class of AI problems such as resource allocation, planning,
scheduling, and diagnosis. Most previous work on privacy in constraint satisfaction only attempted to protect specific types of information, in particular the feasibility of particular combinations of decisions. We formalize and extend these restricted notions of privacy by introducing four types of private information, including the feasibility of decisions and the final decisions made, but also the identities of the participants and the topology of the problem. We present distributed algorithms that allow
computing solutions to constraint satisfaction problems while maintaining these four types of
privacy. We formally prove the privacy properties of these algorithms, and show experiments
that compare their respective performance on benchmark problems. 
\end {abstract}

\section {Introduction} 

Protecting the privacy of information is becoming a crucial concern to many users of the increasingly ubiquitous Information and Communication Technologies. Companies invest a lot of effort into keeping secret their internal costs and their future development strategies from other actors on the market, most importantly from their competitors. Individuals also have a need for privacy of their personal information: for instance, carelessly disclosing one's activity schedule or location might reveal to burglars opportunities to break into one's home. On the other hand, accessing and using such private information is often necessary to solve problems that depend on these data. In the context of supply chain management, companies need to exchange information with their contractors and subcontractors about the quantities of goods that must be produced, and at what price. When scheduling meetings or various events with friends or co-workers, individuals are confronted with the challenge of taking coordinated scheduling decisions, while protecting their respective availability schedules. 

Artificial Intelligence can be a crucial tool to help people make better decisions under privacy concerns, by delegating part or all of the decision problem to personal \emph {intelligent agents} executing carefully chosen algorithms that are far too complex to be performed by the human alone. In particular, the framework of \emph {Constraint Satisfaction Problems (CSPs)} is a core AI technology that has been successfully applied to many decision-making problems, from configuration to scheduling, to solving strategic games. Here we show how distributed AI algorithms can be used to solve such CSPs, while providing strong guarantees on the privacy of the problem knowledge, through the use of techniques borrowed from cryptography. This makes it possible to solve coordination problems that depend on secret data, without having to reveal these data to other parties. On the other hand, distributed, encrypted computation involving message exchange has a cost in terms of performance, such that a suitable tradeoff between privacy and scalability must be found.

\subsection {Motivating Examples}

In this paper, we present a set of novel, privacy-protecting algorithms for \emph {Distributed Constraint Satisfaction Problems (DisCSPs)}, a wide class of multi-agent decision-making problems with applications to many problems such as configuration, scheduling, planning, design and diagnosis. We consider three examples to illustrate the privacy requirements that might arise: \emph {meeting scheduling}, \emph {airport slot allocation}, and \emph {computing game equilibria}. 

In a meeting scheduling problem \cite{Maheswaran04}, a number of meetings need to be scheduled, involving possibly overlapping sets of participants. Taking into account their respective availability constraints, all participants to any given meeting must agree on a time for the meeting. One given participant can be involved in multiple meetings, which creates constraints between meetings. In this problem class, participants usually want to protect the privacy of their respective availability schedules, as well as the lists of meetings they are involved in. 

Another problem class is airport slot allocation \cite {Rassenti82}, where airlines express interests in combinations of takeoff and landing time slots at airports, corresponding to possible travel routes for their aircraft. While the end goal for the airports is to efficiently allocate their slots to airlines, from the point of view of the airlines it is crucial that the combinations of slots they are interested in remain private, because they indicate the routes they intend to fly, which is sensitive strategic information that they want to hide from their competitors. 

Finally, consider the general class of one-shot strategic games, such as the \emph {party game} \cite {Singh04}: the players are invited to a party, and must decide whether to attend, based on their respective intrinsic costs of attendance, and on whether the people they like or dislike also choose to attend. Players would best play strategies that form a \emph {Nash equilibrium}, where no single player can be better off by deviating from its chosen strategy. The problem of computing such an equilibrium is a typical example of a multi-agent decision-making problem, in which privacy is an issue: players do not necessarily want to reveal their attendance costs, nor whether they like or dislike another invitee.

\subsection {Four Types of Private Information}

As can be seen in the previous examples, the information that participants would like to keep private can differ in nature; we propose to classify it into four privacy types. We only briefly introduce and illustrate them here; more formal definitions are given in Section~\ref {sec:privacy_defs}. 

\begin{enumerate}
\item \emph{Agent privacy} relates to the identities of the participants. Consider for instance a CEO who wants to schedule two meetings respectively with a journalist and with another company's CEO. Revealing to the journalist the other CEO's involvement in the decision-making problem could leak out the companies' plans to merge. In this case agent privacy can be considered critical. 

\item \emph{Topology privacy} covers information about the \emph {presence} of constraints. This is the type of critical information that airline companies want to keep secret in the airport slot allocation problem: the presence of a constraint between an airline and a specific airport reveals the airline's strategic plans to offer flights to and from this airport. 

\item \emph{Constraint privacy} is about the \emph {nature} of the constraints. This covers for instance the participants' availability schedules in the meeting scheduling problem, and, in the party game, whether a player likes or dislikes other invitees. 

\item \emph{Decision privacy} has to do with the solution that is eventually chosen to the problem. Depending on the problem class, this type of privacy may or may not be relevant. In the meeting scheduling problem, the time chosen for each meeting necessarily has to be revealed to all participants of the meeting; however it can be desirable to hide this information from non-attendees. 
\end{enumerate}

Like in previous work on privacy in DisCSP, we assume that the participants are \emph {honest, but curious} \cite {Goldreich04}, in that they honestly follow the algorithm, but are interested in learning as much as possible from other agents' private information based on the messages exchanged. Note that this honesty assumption does not mean that all agents are assumed to faithfully report their true constraints to the algorithm; they may be tempted to strategize by reporting slightly different constraints, hoping that this would lead the algorithm to select a solution to the problem that they deem preferable to them. This issue of \emph {incentive-compatibility} has been addressed in related work such as by \citet {Petcu08}, and is orthogonal to the issue of privacy addressed in this paper. Furthermore, an agent would take a risk in reporting constraints different from its true constraints: reporting relaxed constraints could yield a solution that violates its true constraints and would therefore not be viable, while reporting tighter constraints could make the overall problem infeasible and the algorithm fail to find any solution at all. 

On the other hand, our algorithms depart from previous work in two respects. First, previous work almost exclusively focused on constraint privacy, most often ignoring agent, topology and decision privacy. We show how to address all four types, and the algorithms we propose correspond to various points in the tradeoff between different levels of privacy and efficiency. Second, while most of the literature focuses on \emph{quantitatively measuring and reducing} the amount of privacy loss in various DisCSP algorithms, we have developed algorithms that give \emph {strong guarantees} that certain pieces of private information will not be leaked. In contrast, in previous privacy-protecting algorithms, it is typically the case that any piece of private information may be leaked with some (small) probability. 

The rest of this paper is organized as follows. Section~\ref {sec:preliminaries} first formally defines the DisCSP framework and the four aforementioned types of privacy. Section~\ref {sec:P_DPOP} then presents a first algorithm, called \emph {P-DPOP$^+$}. Section~\ref {sec:P_DPOP_value} then describes the \emph{P$^{\sfrac{3}{2}}$-DPOP$^+$} algorithm, which is a variant that achieves a higher level of decision privacy, at the expense of an additional computational overhead. Another variant, called \emph {P$^2$-DPOP$^+$}, is introduced in Section~\ref {sec:ElGamal_UTILpropagation} in order to further improve constraint privacy. Finally, Section~\ref {sec:results} compares the performance of these algorithms with the previous state of the art, on several classes of benchmarks.

\section {Preliminaries} 
\label {sec:preliminaries}

This section first formally defines the DisCSP framework (Section~\ref {sec:DisCSP}), and then introduces four types of privacy (Section~\ref {sec:prelim:privacy}).

\subsection {Distributed Constraint Satisfaction}
\label {sec:DisCSP}

After providing a formal definition of Distributed Constraint Satisfaction (Section~\ref {sec:DisCSPdef}), we recall some existing algorithms for DisCSP and its optimization variant (Section~\ref {sec:algos}).

\subsubsection {Definition}
\label {sec:DisCSPdef}

A Distributed Constraint Satisfaction Problem can be formally defined as follows. 

\begin{definition}[DisCSP]
A discrete {\em DisCSP} is a tuple $< \mathcal {A}, \mathcal {X}, a, \mathcal {D}, \mathcal {C} >$:
\begin{itemize}
	\item ${\cal A} = \{a_1,...,a_k\}$ is a set of \emph{agents}; 
	\item ${\cal X} = \{x_1,...,x_n\}$ is a set of \emph{variables}; 
	\item $a: \mathcal {X} \rightarrow \mathcal {A}$ is a mapping that assigns the control of each variable $x_i$ to an agent~$a(x_i)$; 
	\item ${\cal D} = \{D_1,...,D_n\}$ is a set of finite variable \emph{domains}; variable~$x_i$ takes values in~$D_i$; 
	\item ${\cal C} = \{c_1,...,c_m\}$ is a set of \emph{constraints}, where each $c_i$ is a $s(c_i)$-ary function of scope $(x_{i_1}, \cdots,x_{i_{s(c_i)}})$, $c_i : D_{i_1} \times .. \times D_{i_{s(c_i)}} \rightarrow \{\mathtt{false}, \mathtt{true}\}$, assigning $\mathtt{false}$ to infeasible tuples, and $\mathtt{true}$ to feasible ones. 
\label{def:DCOP}
\end{itemize}
A \emph{solution} is a complete assignment such that the conjunction $\bigwedge_{c_i \in {\cal C}}c_i = \mathtt {true}$, which is the case exactly when the assignment is consistent with all constraints. 
\end{definition}

Some of  the important assumptions of the DisCSP framework are the following. First, we assume that all the details of a given constraint~$c_i$ are known to all agents involved; if an agent wants to keep some constraints private, it should formulate them in such a way that they only involve variables it controls. Furthermore, we assume that two neighboring agents (i.e. agents that share at least one constraint) are able to communicate with each other securely, and that messages are delivered in FIFO order and in finite time. On the other hand, we assume that two non-neighboring agents initially ignore everything about each other, even including their involvement in the problem. In particular, a DisCSP algorithm that protects \emph {agent privacy} should not require them to communicate directly, nor should it even allow them to discover each other's presence. Finally, we assume each agent honestly follows the protocol, and we focus on preventing private information leaks to other agents. 

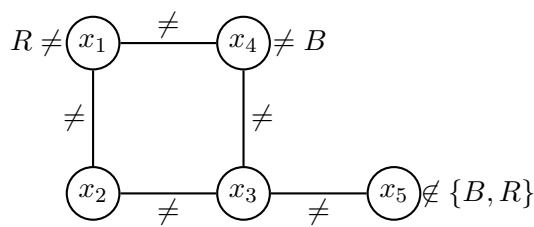
\begin{figure}
\begin{center}
\begin {tikzpicture}

\tikzstyle{var}=[circle, draw = black, thick, fill = white, minimum size = 0.7cm, inner sep=0pt, draw]

\node[var] (x2) at (0, 0) {$x_2$};
\node[var] (x3) at (2, 0) {$x_3$};
\draw (x2) -- (x3) [thick];
\node at (1, -.25) {$\neq$};

\node[var] (x5) at (4, 0) {$x_5$};
\draw (x3) -- (x5) [thick];
\node at (3, -.25) {$\neq$};
\node at (5.12, 0) {$\not\in \{B, R\}$};

\node[var] (x1) at (0, 2) {$x_1$};
\draw (x2) -- (x1) [thick];
\node at (-.25, 1) {$\neq$};
\node at (-.75, 2) {$R \neq$};

\node[var] (x4) at (2, 2) {$x_4$};
\draw (x1) -- (x4) [thick];
\node at (2.75, 2) {$\neq B$};
\node at (1, 2.25) {$\neq$};

\draw (x4) -- (x3) [thick];
\node at (2.25, 1) {$\neq$};

\end {tikzpicture}

\caption{The DisCSP constraint graph for a simple graph coloring problem instance. }
\label{fig:constraint_graph}
\end{center}
\end{figure}

Figure~\ref {fig:constraint_graph} introduces a simple graph coloring problem instance that will be used to illustrate the algorithms throughout the rest of this paper. We assume that the five nodes in the graph correspond to five different agents, which must each choose a color among red, blue and green. These decisions are modeled by the five variables $x_1, \ldots, x_5$ with domains $\{ R, B, G \}$. Each agent may express a secret, unary constraint on its variable; for instance, $x_1$~does not want to be assigned the color red. Binary, inequality constraints are imposed between each pair of neighboring nodes, and are only known to the two agents involved. 

\emph {Distributed Constraint Optimization (DCOP)} is an extension of the DisCSP formalism, in which constraints specify not only which variable assignments are feasible or infeasible, but also assign \emph {costs} (or \emph {utilities}) to these assignments. An (optimal) solution to such a DCOP is then one that minimizes the sum of all costs (or maximizes the sum of all utilities). The algorithms in this paper can easily be generalized to solve DCOPs, with a complexity increase that is at most linear in an upper bound on the (assumed integer) cost of the optimal solution. Such a generalization is left outside the scope of this paper for the sake of conciseness, and has been addressed by \citet {Leaute11} and \citet {Leaute11b}.

\subsubsection {Complete Algorithms for DisCSPs}
\label {sec:algos}

A  range of distributed algorithms exist in the literature to solve DisCSPs and DCOPs. They can be seen as belonging to two classes, depending on how they order variables. The largest class consists of algorithms that order the variables along a \emph {linear order}, such as \emph{ABT}~\cite{Yokoo92}, \emph{AWC}~\cite{Yokoo95}, SynchBB~\cite{Hirayama97}, \emph{AAS}~\cite{Silaghi00}, AFC \cite {Meisels03}, DisFC~\cite{Brito03}, \emph{(Comp)APO} \cite{Mailler03,Grinshpoun08}, ConcDB \cite {Zivan04}, AFB~\cite{Gershman06} and ConcFB \cite {Netzer10}. The linear order may be chosen and fixed initially before the algorithm is run, or dynamically revised online. 

In the second class, variables are ordered along a tree-based \emph{partial order}. This includes \emph{ADOPT}~\cite{Modi05a} and its variants such as BnB-ADOPT \cite{Yeoh10} and BnB-ADOPT$^+$ \cite {Gutierrez10}, \emph{DPOP}~\cite{Petcu05} and its countless variants, and NCBB~\cite{Chechetka06}, which all order the variables following a \emph {pseudo-tree} (Definition~\ref {def:DFS}). Among the aforementioned pseudo-tree-based algorithms, DPOP is the only one using \emph {Dynamic Programming (DP)}, while all others are based on \emph {search}. Other algorithms have been proposed that perform DP on different partial variable orders: Action-GDL uses \emph {junction trees} \cite {Vinyals10}, and DCTE \emph {cluster trees} \cite {Brito10a}.

\subsubsection {The DPOP Algorithm}
\label {sec:DPOP}

The DPOP algorithm was originally designed to solve optimization problems (DCOPs) and described in terms of utility maximization. One way to apply it to pure satisfaction problems (DisCSPs) is to first reformulate the DisCSP into a \emph {Max-DisCSP}, in which the constraints are no longer boolean but rather take values in $\{ 0, 1 \}$, where $0$~stands for feasibility and $1$~for infeasibility. The cost-minimizing variant of DPOP (described below) can then be applied to find a solution with minimal cost, where the cost (hereafter called \emph {feasibility value}) corresponds to the number of constraint violations (which we want to be equal to~$0$). 

\begin {algorithm}[b!]
\begin {algorithmic}[1]
\REQUIRE a pseudo-tree ordering of the variables; $p_x$ denotes $x$'s parent

\STATE \COMMENT {(\emph {UTIL propagation}) Propagate feasibility values up the pseudo-tree:} 	\label {algo:DPOP:UTILpropagation}

\STATE $m(x, p_x, \cdot) \leftarrow \Sigma_{c \in \left\{ c' \in \mathcal {C} ~|~ x \in scope(c') ~\wedge~ scope(c') \cap \left( children_x \cup pseudo\_children_x \right) = \emptyset \right\}} c (x, \cdot)$ 	\label {algo:DPOP:UTILpropagation:local_join}

\vspace{5pt}
\STATE \COMMENT {Join with received messages:}
\FOR {each $y_i \in children_x$}
	\STATE Wait for the message (FEAS, $m_i(x, \cdot)$) from $y_i$ 		\label {algo:DPOP:UTILpropagation:get_message}
	\STATE $sep_{y_i} \leftarrow scope(m_i)$ 	\label {algo:DPOP:UTILpropagation:separator}
	\STATE $m( x, p_x, \cdot ) \leftarrow m(x, p_x, \cdot) + m_i(x, \cdot )$ 	\label {algo:DPOP:UTILpropagation:join} 
\ENDFOR

\vspace{5pt}
\STATE \COMMENT {Project out $x$:}
\IF{$x$ is not the root variable}

	\STATE $x^*(p_x, \cdot) \leftarrow \arg\min_x \left\{ m( x, p_x, \cdot ) \right\}$ \label {algo:DPOP:UTILpropagation:argmin}
	
	\STATE Send the message (FEAS, $m( x^*(p_x, \cdot), p_x, \cdot )$) to $p_x$ 	\label {algo:DPOP:UTILpropagation:sendUTIL}
\ENDIF
\STATE \textbf {else} $x^* \leftarrow \arg\min_x \left\{ m( x ) \right\}$ \label {algo:DPOP:UTILpropagation:root_value} \COMMENT {$m( x, p_x, \cdot )$ actually only depends on $x$}

\vspace {10pt}
\STATE \COMMENT {(\emph {VALUE propagation}) Propagate decisions top-down along the pseudo-tree:} 	\label {algo:DPOP:VALUEpropagation_start}
\IF {$x$ is not the root}
	\STATE Wait for message (DECISION, $p_x^*, \cdot$) from parent $p_x$
	\STATE $x^* \leftarrow x^*(p_x = p_x^*, \cdot)$ 	\label {algo:DPOP:lookup}
\ENDIF
\STATE \textbf {for} each $y_i \in children_x$ \textbf {do} send message (DECISION, $sep_{y_i}^*$) to $y_i$	\label {algo:DPOP:VALUEpropagation_end}
\end {algorithmic}
\caption {Overal DPOP algorithm, for variable~$x$}
\label {algo:DPOP}
\end {algorithm}

\paragraph {Overview of the Algorithm}

DPOP is an instance of the general bucket elimination scheme by \citet{Dechter03}, performed distributedly (Algorithm~\ref {algo:DPOP}). It requires first arranging the constraint graph into a \emph {pseudo-tree}, formally defined as follows. 

\begin {definition}[Pseudo-tree]
\label {def:DFS}
A \emph {pseudo-tree} is a generalization of a tree, in which a node is allowed to have links \emph {(back-edges)} with remote ancestors \emph {(pseudo-parents)} and with remote descendants \emph {(pseudo-children)}, but never with nodes in other branches of the tree. 
\end {definition}

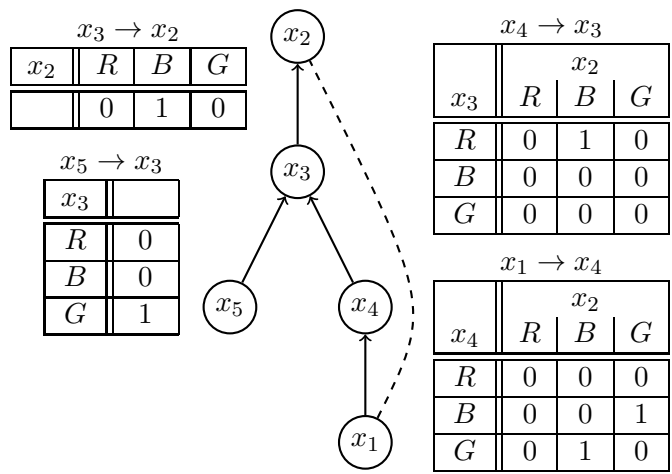
\begin{figure}[tp]
\begin{center}
\begin {tikzpicture} [scale=.9]

\tikzstyle{var}=[circle, draw = black, thick, fill = white, minimum size = 0.7cm, inner sep=0pt, draw]

\node[var] (x1) at (1, 0) {$x_1$};
\node[var] (x4) at (1, 2) {$x_4$};
\draw[->] (x1) -- (x4) [thick];

\node[var] (x5) at (-1, 2) {$x_5$};
\node[var] (x3) at (0, 4) {$x_3$};
\draw[->] (x5) -- (x3) [thick];
\draw[->] (x4) -- (x3) [thick];

\node[var] (x2) at (0, 6) {$x_2$};
\draw[->] (x3) -- (x2) [thick];

\draw (x1) .. controls (2, 2) .. (x2) [thick,dashed];

\node at (-2.75, 3) {
\begin{tabular}{|c||c|}
\multicolumn {2}{c}{$x_5 \rightarrow x_3$}	\\
\hline
$x_3$	&	\\
\hline \hline
$R$		&	$0$	\\
\hline
$B$		&	$0$		\\
\hline
$G$		&	$1$		\\
\hline
\end{tabular}
};

\node at (3.75, 1.25) {
\begin{tabular}{|c||c|c|c|}
\multicolumn {4}{c}{$x_1 \rightarrow x_4$}	\\
\hline
		&	\multicolumn{3}{c|}{$x_2$}	\\
$x_4$	&	$R$		&	$B$		& 	$G$ 	\\
\hline \hline
$R$		&	$0$		&	$0$		&	$0$	\\
\hline
$B$		&	$0$		&	$0$		&	$1$	\\
\hline
$G$		&	$0$		&	$1$	&	$0$	\\
\hline
\end{tabular}
};

\node at (3.75, 4.75) {
\begin{tabular}{|c||c|c|c|}
\multicolumn {4}{c}{$x_4 \rightarrow x_3$}	\\
\hline
		&	\multicolumn{3}{c|}{$x_2$}	\\
$x_3$	&	$R$				&	$B$				& 	$G$ 	\\
\hline \hline
$R$		&	$0$		&	$1$	&	$0$	\\
\hline
$B$		&	$0$		&	$0$		&	$0$	\\
\hline
$G$		&	$0$		&	$0$		&	$0$	\\
\hline
\end{tabular}
};

\node at (-2.5, 5.5) {
\begin{tabular}{|c||c|c|c|}
\multicolumn {4}{c}{$x_3 \rightarrow x_2$}	\\
\hline
$x_2$	&	$R$		&	$B$	& 	$G$ 	\\
\hline \hline
		&	$0$		&	$1$	&	$0$	\\
\hline
\end{tabular}
};

\end {tikzpicture}

\caption{Multiparty dynamic programming computation of a feasible value for $x_2$, based on a pseudo-tree arrangement of the constraint graph in Figure~\ref {fig:constraint_graph}. The dashed edge represents a back-edge with a pseudo-parent in the pseudo-tree. }
\label{fig:dfs}
\end{center}
\end{figure}

A  pseudo-tree arrangement of the constraint graph in Figure~\ref {fig:constraint_graph} is illustrated in Figure~\ref {fig:dfs}. This pseudo-tree naturally decomposes the original problem into two, loosely coupled subproblems, corresponding to the two branches, which will perform the rest of the algorithm in parallel. Figure~\ref {fig:dfs} also shows the \emph {FEAS} messages (originally called \emph {UTIL} messages in the context of utility maximization) that are exchanged during the propagation of feasibility values, following a multi-party dynamic programming computation (lines \ref {algo:DPOP:UTILpropagation} to~\ref {algo:DPOP:UTILpropagation:root_value}). In this part of the algorithm, all messages travel bottom-up along tree edges. Consider for instance the message sent by agent~$a(x_5)$ to its parent agent~$a(x_3)$. This message is the result of the \emph {projection} (lines \ref {algo:DPOP:UTILpropagation:argmin} and~\ref {algo:DPOP:UTILpropagation:sendUTIL}) of variable~$x_5$ out of the conjunction (line~\ref {algo:DPOP:UTILpropagation:local_join}) of $x_5$'s two constraints $x_5 \neq x_3$ and $x_5 \neq B$, and summarizes the minimal number of constraint violations that $a(x_5)$ can achieve, as a function of the ancestor variable~$x_3$. More generally, each message sent by a variable~$x$ summarizes the minimal number of constraint violations achievable for the aggregate subproblem owned by the entire subtree rooted at~$x$, as a function whose scope is called the \emph {separator} of~$x$ (line~\ref {algo:DPOP:UTILpropagation:separator}). In DPOP, the separator of~$x$ necessarily includes $x$'s parent~$p_x$, and potentially other ancestor variables; this is indicated by the notation $m(p_x, \cdot)$. For instance, the message $x_4 \rightarrow x_3$ summarizes the minimal number of constraint violations achievable for the entire subtree rooted at~$x_4$, as a function of $x_4$'s separator $\{ p_{x_4} = x_3, x_2 \}$. Notice that the separator of a variable~$x$ can contain variables that are not neighbors of~$x$; for example, $x_2$ is in $x_4$'s separator because a descendent of~$x_4$ has a constraint with~$x_2$. In the privacy-aware algorithms presented later in this paper, this notion of separator is extended to allow for separators that do not necessarily include the parent variable, and that may include multiple \emph {codenames} referring to the same variables, which might not necessarily be ancestors in the pseudo-tree. 

Upon receiving the messages $x_5 \rightarrow x_3$ and $x_4 \rightarrow x_3$ (line~\ref {algo:DPOP:UTILpropagation:get_message}), agent~$a(x_3)$ \emph {joins} them (line~\ref {algo:DPOP:UTILpropagation:join}) with its constraint $x_3 \neq x_2$. Variable~$x_3$ is then projected out of the resulting joint table, which produces the message $x_3 \rightarrow x_2$ (lines \ref {algo:DPOP:UTILpropagation:argmin} and~\ref {algo:DPOP:UTILpropagation:sendUTIL}). At the end of this feasibility propagation (line~\ref {algo:DPOP:UTILpropagation:root_value}), the root variable~$x_2$ chooses a value~$x_2^*$ for itself that minimizes the number of constraint violations over the entire problem (e.g. $x_2^* = R$). This decision can then be propagated downwards along tree-edges via \emph{DECISION} messages (originally called \emph {VALUE} messages) until all variables have been assigned optimal values (lines \ref {algo:DPOP:VALUEpropagation_start} to~\ref {algo:DPOP:VALUEpropagation_end}).

\paragraph {Complexity}

Given a pseudo-tree ordering of the $n$~variables, DPOP's bottom-up and top-down phases each exchange exactly $(n-1)$ messages (one through each tree edge). However, while each DECISION message contains at most $(n-1)$ variable assignments, the FEAS message sent by a given variable~$x$ can contain exponentially many feasibility values, because it contains a table representation of a function~$m$ of $| sep_x |$ variables. The size of the largest FEAS message is therefore $O(D_{\max}^{sep_{\max}})$, where $D_{\max}$ is the size of the largest variable domain, and $sep_{\max} = \max_x | sep_x | < n - 1$ is the \emph {width} of the pseudo-tree. In the best case, the width is equal to the \emph {treewidth} of the constraint graph; however finding a pseudo-tree that achieves this minimal width is NP-hard. In practice, the pseudo-tree is generated by a heuristic, distributed, depth-first traversal of the constraint graph (Online Appendix~1), producing a so-called \emph {DFS tree} that is a pseudo-tree in which all parent-child relationships are between neighbors in the constraint graph. Since DPOP exchanges $(n-1)$ FEAS messages, its overall complexity in terms of runtime (measured in number of constraint checks), memory, and information exchange is~$O(n \cdot D_{\max}^{sep_{\max}})$.

\paragraph {Privacy Properties}

The privacy-aware algorithms in Section~\ref {sec:P_DPOP} are based on DPOP, because of two desirable properties that allow for higher levels of privacy. First, DPOP only requires message exchanges between neighboring agents, provided that the pseudo-tree used is a DFS tree; this is necessary to protect \emph {agent privacy}. \citet {Greenstadt06} made the opposite claim that pseudo-trees are detrimental to privacy compared to linear orderings; however this claim is only valid if the only type of privacy considered is \emph {constraint privacy}, and does not hold if \emph {agent privacy} and \emph {topology privacy} are guaranteed, i.e. if the pseudo-tree is not publicly known to all agents. The second, DP-inherited property is that DPOP's performance does not depend on \emph {constraint tightness}, i.e. how easy or hard it is to satisfy each constraint. For all other, search-based algorithms, inferences on the constraint tightness can be made by observing the runtime or the amount of information exchanged~\cite{Silaghi04}. In the case of meeting scheduling problems, constraint tightness maps directly to the participants' levels of availability, which is private information. In application domains where this leak of constraint tightness is tolerable, algorithms based on search rather than DP can be used, and many of the privacy-enhancing techniques presented in this paper for DPOP are also applicable to search-based algorithms.

\subsection {Privacy in DisCSPs}
\label {sec:prelim:privacy}

Section~\ref {sec:privacy_defs} formally defines the four types of privacy considered in this paper. Section~\ref {sec:previous_privacy} then recalls previous work that attempted to address various subsets of these privacy types.

\subsubsection {Privacy Definitions}
\label {sec:privacy_defs}

Definition~\ref {def:semi_private} introduces the concept of \emph {semi-private information} \cite {Faltings08a}, which may \emph {inevitably} be leaked by \emph {any} DisCSP algorithm. 

\begin {definition} [Semi-private information]
\label {def:semi_private}
\emph {Semi-private information} refers to information about the problem and/or its solution that an agent might consider private, but that can \emph {inevitably} be leaked to other agents by their views of the chosen solution to the DisCSP. 
\end {definition}

In other words, semi-private information covers everything a given agent can discover about other agents by making inferences simply based on its initial knowledge of the problem and on the values its variables take in the solution. For instance, in a  graph coloring problem involving only two colors, each node can infer the color of each of its neighbors from the color it was assigned in the chosen solution, provided that the solution is correct. Excluding semi-private information, we now distinguish four types of private information that agents may desire to protect~\cite {Faltings08a}. 

\begin {definition} [Agent privacy]
\label {def:agent_privacy}
No agent should be able to discover the identity, or even the existence of non-neighboring agents. A particular consequence of this type of privacy is that two agents should only be allowed to communicate directly if they share a constraint. 
\end {definition}

In Figure~\ref {fig:constraint_graph}, this means for instance that agent~$a(x_1)$ should not be able to discover the existence and identities of agents $a(x_3)$ and~$a(x_5)$. Even if no two non-neighboring agents communicate directly, agent privacy might still be leaked by the contents of messages; in this paper we propose a method based on \emph {codenames} to fully protect agent privacy. 

\begin {definition} [Topology privacy]
\label {def:topology}
No agent should be able to discover the existence of topological constructs in the constraint graph, such as nodes (i.e. variables), edges (i.e. constraints), or cycles, unless it owns a variable involved in the construct. 
\end {definition}

In Figure~\ref {fig:constraint_graph}, topology privacy means for instance that agent~$a(x_1)$ should not discover how many other neighbors $x_2$ has besides itself. However, $a(x_1)$ might discover the existence of a cycle involving $x_1$, $x_2$ and $x_4$. This is tolerated because $x_1$ is involved in this cycle, but $a(x_1)$ should not discover the length of the cycle (i.e. that $x_2$ and $x_4$ share a neighbor). 

\begin {definition} [Constraint privacy]
\label {def:constraint}
No agent should be able to discover the nature of a constraint that does not involve a variable it owns. 
\end {definition}

In Figure~\ref {fig:constraint_graph}, an example of a breach in constraint privacy would be if agent~$a(x_1)$ were able to discover that agent~$a(x_4)$ does not want to be assigned the color \emph {blue}. This is the type of privacy that the DisCSP literature mostly focuses on. 

\begin {definition} [Decision privacy]
\label {def:decision}
No agent should be able to discover the value that another agent's variable takes in the chosen solution (modulo semi-private information). 
\end {definition}

In a distributed graph coloring problem, this means that no agent can discover the color of any neighbor (let alone any non-neighboring agent) in the solution chosen to the problem.

\subsubsection {Previous Work on Privacy in DisCSP}
\label {sec:previous_privacy}

Before discussing what information may be leaked by a given algorithm, and how to prevent it, it is important to clarify what information is assumed to be initially known to each agent. 

\paragraph {Initial Knowledge Assumptions}

In this paper, we use the following three assumptions, which are currently the most widely used in the DisCSP literature. 
\begin{enumerate}
\item Each agent~$a$ knows all agents that own variables that are neighbors of $a$'s variables, but does not know any of the other agents (not even their existence); 
\item A variable and its domain are known only to its owner agent and to the agents owning neighboring variables, but the other agents ignore the existence of the variable; 
\item A constraint is fully known to all agents owning variables in its scope, and no other agent knows anything about the constraint (not even its existence). 
\end{enumerate}

\citet {Brito03} introduced \emph {Partially Known Constraints (PKCs}), whose scopes are known to all agents involved, but the knowledge of whose nature (which assignments are allowed or disallowed) is distributed among these agents. This is a relaxation of Assumption~3; however it is worth noting that the algorithms presented in this paper can still support PKCs without introducing privacy leaks by enforcing this assumption, because any PKC can be decomposed into a number of constraints over \emph{copy variables} such that Assumption~3 holds. For instance, if agents $a_1 \ldots a_n$ share the knowledge of a unary PKC over variable~$x$, then this constraint can be decomposed into $n$~unary constraints, such that each constraint~$c_i$ is known fully and only to agent~$a_i$ and is expressed over a copy variable~$x_i$ owned by $a_i$. Equality constraints are added to the problem to enforce equality of all copy variables. However, the introduction of copy variables can be detrimental to decision privacy. \citet {Grubshtein09} later proposed the similar concept of \emph {asymmetric} constraints, which can also be reformulated as symmetric constraints over copy variables for the purpose of applying our algorithms. 

Other previous work adopted a dual approach, assuming that variables are public and known to all agents, but each constraint is known to only one agent \cite {Silaghi00,Yokoo02a,Silaghi05a}. \citet {Silaghi05} even proposed a framework in which the constraints are secret to everyone. This dual approach has the disadvantage of necessarily violating topology privacy, since all variables are public.

\paragraph {Measuring Constraint Privacy Loss}

Most of the literature on privacy in DisCSPs focuses on constraint privacy. Metrics have been proposed to evaluate constraint privacy loss in algorithms, in particular for distributed meeting scheduling~\cite {Franzin04,Wallace05}. \citet {Maheswaran06} designed a framework called \emph {Valuation of Possible States (VPS)} that they used to measure constraint privacy loss in the OptAPO and SynchBB algorithms, and they considered the impact of whether the problem topology is public or only partially known to the agents. \citet {Greenstadt06} also applied VPS to evaluate DPOP and ADOPT on meeting scheduling problems, under the assumption that the problem topology is public. \citet {Doshi08} proposed to consider the cost of privacy loss in optimization problems, in order to elegantly balance privacy and optimality.

\paragraph {Preventing Constraint Privacy Loss}

Some previous work also proposed approaches to partially reduce constraint privacy loss. For instance, \citet {Brito07} described a modification of the \emph{Distributed Forward Checking (DisFC)} algorithm for DisCSPs in which agents are allowed to \emph{lie} for a finite time in order to achieve higher levels of privacy. However, the performance of most search-based algorithms like DisFC leaks information about constraint tightness, as explained at the end of Section~\ref {sec:DPOP}. To avoid this subtle privacy leak, one must either perform full exhaustive search, which is the option chosen by \citeauthor {Silaghi05}, or resort to Dynamic Programming, which is the option we have chosen in this paper. 

The cryptographic technique of \emph {secret sharing} \cite {Shamir79,Ben-Or88} was also applied by \citet {Silaghi06} and \citet {Greenstadt07} to lower constraint privacy in DPOP, assuming that the constraint graph topology is public knowledge. Cryptography has also been applied to provide strong guarantees on constraint privacy preservation in multi-agent decision making. For instance, \citet {Yokoo02}, \citet {Yokoo02a} and \citet {Yokoo05} showed how a public key encryption scheme can be used to solve DisCSPs using multiple servers, while protecting both constraint privacy and decision privacy. \citet {Bilogrevic11} solved single-meeting scheduling problems using similar techniques, and one semi-trusted server. In this paper however, we only consider algorithms that do not make use of third parties, as such third parties might not be available. \citet {Herlea01} showed how to use \emph {Secure Multiparty Computation (SMC)}\footnote {\citeauthor {Silaghi05} uses the different acronym \emph {MPC} for the same concept. } to securely schedule a single meeting, without relying on servers. In SMC, agents collaboratively compute the value of a given, publicly known function on private inputs, without revealing the inputs. For \citet {Herlea01}, the  inputs are each participant's availability at a given time, and the function outputs whether they are all available.

\paragraph {The MPC-DisCSP4 Algorithm}
\citet {Silaghi05a} also applied SMC to solve general DisCSPs, where the private inputs are the agents' constraint valuations, and the function returns a randomly chosen solution. The algorithm proceeds as follows \cite {Leaute11b}. Each agent~$a_i$ first creates a vector~$F_i$ with one entry per candidate solution to the DisCSP, equal to~$1$ if the candidate solution satisfies $a_i$'s private constraints, and to~$0$ otherwise. To reduce the size of~$F_i$, the candidate solutions may be filtered through publicly known constraints, if there exists any. Using Shamir's polynomial secret sharing technique \cite {Shamir79,Ben-Or88}, agent~$a_i$ then sends one secret share~$F_{i \rightarrow j}$ of its vector~$F_i$ to each other agent~$a_j$, and receives corresponding secret shares~$F_{j \rightarrow i}$ of their respective vectors. Agent~$a_i$ then multiplies together all the secret shares it received. The multiplication of Shamir secret shares is a non-trivial operation, because each secret share is the value of a polynomial, and multiplying two polynomials increases the degree of the output, which must always remain lower than the number $|\mathcal {A}|$ of agents to be resolvable. Therefore, after each multiplication of two secret shares, agent~$a_i$ must perform a complex sequence of operations involving the exchange of messages in order to reduce the degree of the output. 

After performing $(|\mathcal {A}| - 1)$ such pairwise multiplications of secret shares, agent~$a_i$'s vector~$F_i$ contains secret shares of~$1$ at the entries corresponding to globally feasible solutions. Agent~$a_i$ then performs a transformation on~$F_i$ so that only \emph {one} such secret share of~$1$ remains, identifying one particular feasible solution (if there exists one). Just selecting the first such entry would a posteriori reveal that all previous entries correspond to infeasible solutions to the DisCSP; to prevent this privacy leak, the vector~$F_i$ is first collaboratively, randomly permuted using a \emph {mix-net}. Agent~$a_i$ then performs a sequence of iterative operations on~$F_i$ (including communication-intensive multiplications) to set all its entries to secret shares of~$0$, except for one secret share of~$1$ corresponding to the chosen solution to the DisCSP (if any). The vector~$F_i$ is then un-shuffled by re-traversing the mix-net in reverse. Finally, agent~$a_i$ can compute secret shares of the domain index of each variable's chosen assignment, and reveal these secret shares only to the owners of the variables. 

This algorithm has numerous drawbacks. First, each agent must know all variables and their domains to construct its initial vector~$F_i$, which immediately violates agent privacy and topology privacy (Table~\ref {table:privacy_summary}, page~\pageref {table:privacy_summary}). Second, Shamir's secret sharing scheme is a majority threshold scheme, which means that if at least half of the agents collude, they can discover everyone's private information. Even though, in this paper, we are assuming that agents are honest and do not collude, a consequence of this threshold is that this scheme does not provide any privacy guarantee when the problem involves only two agents. Third, this algorithm is often only practical for very small problems, because it performs full exhaustive search; this is demonstrated by our experimental results in Section~\ref {sec:results}.

\section {P-DPOP$^+$: Full Agent Privacy and Partial Topology, Constraint and Decision Privacy}
\label {sec:P_DPOP}

This section describes a variant of the DPOP algorithm that guarantees full agent privacy. It also partially protects topology, constraint, and decision privacy. Algorithm~\ref {algo:P_DPOP} is an improvement over the \emph{P-DPOP} algorithm we originally proposed \cite {Faltings08a}. Like DPOP, the algorithm performs dynamic programming on a DFS-tree ordering of the variables (Figure~\ref {fig:dfs}). Algorithms to first elect one variable, and then generate a DFS tree rooted at this variable are given in Online Appendices 1 and~2. These algorithms do not reveal the pseudo-tree in its entirety to any agent; instead, each agent only discovers the (pseudo-)parents and (pseudo-)children of its own variables. For the sake of simplicity, we will hereafter assume without loss of generality that the constraint graph consists of a single component. If the problem actually consisted of two or more fully decoupled subproblems, then each subproblem would be solved in parallel, independently from the others. 

\begin {algorithm}[b!]
\begin {algorithmic}[1]
\REQUIRE a DFS-tree ordering of the variables

\STATE \COMMENT {Choose and exchange codenames for $x$ and its domain $D_x$:}
\STATE Wait for a message (CODES, $\tilde {y_i^x}, \tilde {D_{y_i}^x}, \sigma_{y_i}^x$) from each $y_i \in \{parent_x\} \cup pseudo\_parents_x$\label {algo:P_DPOP:codenames}
\FOR {each $y_i \in children_x \cup pseudo\_children_x$}
	\STATE $\tilde {x^{y_i}} \leftarrow $ large random number		\label {algo:P_DPOP:choose_var_codename}
	\STATE $\tilde {D_x^{y_i}} \leftarrow $ list of $|D_x|$ random, unique identifiers
	\STATE $\sigma_x^{y_i} \leftarrow $ random permutation of $[1, \ldots, |D_x|]$
	\STATE Send message (CODES, $\tilde {x^{y_i}}, \tilde {D_x^{y_i}}, \sigma_x^{y_i}$) to~$y_i$  		\label {algo:P_DPOP:send_codenames}
\ENDFOR

\vspace {5pt}
\STATE \COMMENT {Choose and exchange obfuscation key for $x$:}
\STATE Wait for and record a message (KEY, $key_{y_i}^x$) from each $y_i \in pseudo\_parents_x$ (if any) 	\label {algo:P_DPOP:got_keys}
\FOR {each $y_i \in pseudo\_children_x$}
	\STATE $key^{y_i}_x \leftarrow $ vector of large random numbers of $B$ bits, indexed by $D_x$ 	\label {algo:P_DPOP:key}
	\STATE Send message (KEY, $key^{y_i}_x$) to $y_i$ 	\label {algo:P_DPOP:send_keys}
\ENDFOR

\vspace {5pt}
\STATE Propagate feasibility values up the pseudo-tree (Algorithm~\ref {algo:UTILpropagation}, Section~\ref {sec:UTILpropagation}) 	\label {algo:P_DPOP:UTILpropagation}

\vspace {5pt}
\STATE \COMMENT {Propagate decisions top-down along the pseudo-tree (Section~\ref {sec:VALUEpropagation}):} 	\label {algo:P_DPOP:VALUEpropagation_start}
\IF {$x$ is not the root}
	\STATE Wait for message (DECISION, $\tilde {p_x^*}, \cdot$) from parent $p_x$
	\STATE $x^* \leftarrow x^*(\tilde {p_x} = \tilde {p_x^*}, \cdot)$ \COMMENT {where $x^*(\cdot)$ was computed in Algorithm~\ref {algo:UTILpropagation}, line~\ref {algo:UTILpropagation:argmin}} 	\label {algo:P_DPOP:lookup}
\ENDIF
\FOR {each $y_i \in children_x$}
	\STATE Send message (DECISION, $\tilde {sep_{y_i}^*}$) to $y_i$, with $sep_{y_i}$ from Algorithm~\ref {algo:UTILpropagation}, line~\ref {algo:UTILpropagation:separator} 	\label {algo:P_DPOP:VALUEpropagation_end}
\ENDFOR
\end {algorithmic}
\caption {Overal P-DPOP$^+$ algorithm, for variable~$x$}
\label {algo:P_DPOP}
\end {algorithm}

\subsection {Finding a Feasible Value for the Root Variable}
\label {sec:UTILpropagation}

As already illustrated for DPOP in Section~\ref {sec:DPOP}, the agents perform a bottom-up propagation of feasibility values along the pseudo-tree. This is done in Algorithm~\ref {algo:UTILpropagation}, which is an extension of DPOP's UTIL propagation phase (the extensions are indicated by comments in bold), and improves over the algorithm we originally proposed \cite {Faltings08a} by patching an important constraint privacy leak in the single-variable FEAS messages sent by variables with singleton separators. The following sections describe the obfuscation techniques used to protect the private information that could be leaked by the feasibility messages, using codenames (Section~\ref {sec:codenames}) and addition of random numbers (Section~\ref {sec:obfuscation}).

\begin {algorithm}[b!]
\begin{algorithmic}[1]

\REQUIRE a DFS-tree ordering of the variables; $p_x$ denotes $x$'s parent
\STATE \COMMENT {Join local constraints:} 	\label {algo:UTILpropagation:start}
\STATE $m(x, p_x, \cdot) \leftarrow \Sigma_{c \in \left\{ c' \in \mathcal {C} ~|~ x \in scope(c') ~\wedge~ scope(c') \cap \left( children_x \cup pseudo\_children_x \right) = \emptyset \right\}} c (x, \cdot)$ 	\label {algo:UTILpropagation:local_join}

\vspace{5pt}
\STATE \COMMENT {\textbf {Apply codenames:}}
\FOR {each $y_i \in \{p_x\} \cup pseudo\_parents_x$}
	\STATE $m( x, \tilde {p_x}, \cdot ) \leftarrow$ replace $(y_i, D_{y_i})$ in $m( x, p_x, \cdot )$ with $(\tilde {y_i^x}, \tilde {D_{y_i}^x})$ from Algorithm~\ref {algo:P_DPOP}, line~\ref {algo:P_DPOP:codenames}, and apply the permutation~$\sigma_{y_i}^x$ to~$\tilde {D_{y_i}^x}$ 	\label {algo:UTILpropagation:codenames}
\ENDFOR

\vspace{5pt}
\STATE \COMMENT {\textbf{Obfuscate infeasible entries:}}
\STATE $r \leftarrow $ large, positive, random number of $B$ bits
\STATE $m(x,  \tilde {p_x}, \cdot ) \leftarrow 
\left\{
\begin{array}{lcl}
m(x, \tilde {p_x}, \cdot) 		& if 	& m(x, \tilde {p_x}, \cdot) = 0   \\
m(x, \tilde {p_x}, \cdot) + r 	& if 	& m(x, \tilde {p_x}, \cdot) > 0
\end{array}
\right.$ 		\label {algo:UTILpropagation:new_obfuscation}

\vspace{5pt}
\STATE \COMMENT {Join with received messages:}
\FOR {each $y_i \in children_x$}
	\STATE Wait for the message (FEAS, $m_i(\tilde{x}, \cdot)$) from $y_i$ 		\label {algo:UTILpropagation:get_message}
	\STATE $sep_{y_i} \leftarrow scope(m_i)$ 	\label {algo:UTILpropagation:separator}
	\FOR [\textbf {resolve codenames}] {each $z \in children_x \cup pseudo\_children_x$}
		\STATE $m_i(x, \cdot) \leftarrow $ identify $(\tilde{x^{z}}, \tilde {D_x^{z}})$ as $(x, D_x)$ in $m_i(\tilde{x}, \cdot)$ (if $\tilde{x^{z}}$ is present) 		\label {algo:UTILpropagation:resolve_codenames}
	\ENDFOR
	\STATE $m( x, \tilde {p_x}, \cdot ) \leftarrow m(x, \tilde {p_x}, \cdot) + m_i(x, \cdot )$ 	\label {algo:UTILpropagation:join} 
\ENDFOR

\vspace{5pt}
\STATE \COMMENT {\textbf{De-obfuscate feasibility values with respect to $x$:}}
\FOR {each $y_i \in pseudo\_children_x$}
	\STATE $m( x, \tilde {p_x}, \cdot ) \leftarrow m( x, \tilde {p_x}, \cdot ) - key_x^{y_i} (x)$ \COMMENT {with $key_x^{y_i}$ from Algorithm~\ref {algo:P_DPOP}, line~\ref {algo:P_DPOP:key}}		\label {algo:UTILpropagation:deobfuscation}
\ENDFOR

\vspace{5pt}
\STATE \COMMENT {Project out $x$:}
\IF{$x$ is not the root variable}

	\STATE $x^*(\tilde {p_x}, \cdot) \leftarrow \arg\min_x \left\{ m( x, \tilde {p_x}, \cdot ) \right\}$ \label {algo:UTILpropagation:argmin}
	\STATE $m(\tilde {p_x}, \cdot) \leftarrow \min_x \left\{ m( x, \tilde {p_x}, \cdot ) \right\}$ 	\label {algo:UTILpropagation:min}
	
	\vspace{5pt}
	\STATE \COMMENT {\textbf{Obfuscate feasibility values}:}
	\FOR {each $y_i \in pseudo\_parents_x$}
		\STATE $m( \tilde {p_x}, \cdot ) \leftarrow m( \tilde {p_x}, \cdot ) + key_{y_i}^x (\tilde {y_i^x})$ \COMMENT {with $key_{y_i}^x$ from Algorithm~\ref {algo:P_DPOP}, line~\ref {algo:P_DPOP:got_keys}} 		\label {algo:UTILpropagation:obfuscation}
	\ENDFOR

	\vspace{5pt}
	\STATE Send the message (FEAS, $m( \tilde {p_x}, \cdot )$) to $p_x$
\ENDIF
\STATE \textbf {else} $x^* \leftarrow \arg\min_x \left\{ m( x ) \right\}$ \label {algo:UTILpropagation:root_value} // $m( x, \tilde {p_x}, \cdot )$ actually only depends on $x$
\end{algorithmic}
\caption{Algorithm to find a feasible value for the root of a DFS tree, for variable~$x$}
\label{algo:UTILpropagation}
\end {algorithm}

\subsubsection {Hiding Variable Names and Values Using Codenames}
\label {sec:codenames}

Consider the feasibility message $x_1 \rightarrow x_4$ sent by agent~$a(x_1)$ to its parent variable~$x_4$ in Figure~\ref {fig:dfs}. This message is recalled in Figure~\ref {fig:msg_to_c33:cleartext}, reformulated in terms of minimizing the number of constraint violations. If this message were actually received in cleartext, it would breach agent privacy and topology privacy: agent~$a(x_4)$ would be able to infer from the dependency of the message on variable~$x_2$ both the existence of agent~$a(x_2)$ (which violates agent privacy) and the fact that $x_2$ is a neighbor of one or more unknown nodes below~$x_1$. 

\begin{figure}[ht]
\begin {center}
\subfigure [in cleartext] {
\begin{tabular}{|c||c|c|c|}
\multicolumn {4}{c}{$x_1 \rightarrow x_4$}	\\
\hline
		&	\multicolumn{3}{c|}{$x_2$}	\\
$x_4$	&	$R$		&	$B$		& 	$G$ 	\\
\hline \hline
$R$		&	$0$		&	$0$		&	$0$	\\
\hline
$B$		&	$0$		&	$0$		&	$1$	\\
\hline
$G$		&	$0$		&	$1$		&	$0$	\\
\hline
\end{tabular}
\label {fig:msg_to_c33:cleartext}
}
\subfigure [partly obfuscated] {
\begin{tabular}{|c||c|c|c|}
\multicolumn {4}{c}{$x_1 \rightarrow x_4$}	\\
\hline
		&	\multicolumn{3}{c|}{$928372$}	\\
$x_4$	&	$\alpha$		&	$\beta$		& 	$\gamma$ 	\\
\hline \hline
$R$		&	$0$		&	$0$		&	$0$	\\
\hline
$B$		&	$0$		&	$0$		&	$1$	\\
\hline
$G$		&	$0$		&	$1$		&	$0$	\\
\hline
\end{tabular}
\label {fig:msg_to_c33:cyphertext}
}
\subfigure [fully obfuscated] {
\begin{tabular}{|c||c|c|c|}
\multicolumn {4}{c}{$x_1 \rightarrow x_4$}	\\
\hline
		&	\multicolumn{3}{c|}{$928372$}	\\
$x_4$	&	$\alpha$		&	$\beta$		& 	$\gamma$ 	\\
\hline \hline
$R$		&	$620961$		&	$983655$		&	$534687$		\\
\hline
$B$		&	$620961$		&	$983655$		&	$534688$		\\
\hline
$G$		&	$620961$		&	$983656$		&	$534687$		\\
\hline
\end{tabular}
\label {fig:msg_to_c33:cyphertext2}
}
\caption {The message sent by agent~$a(x_1)$ to its parent variable~$x_4$ in Figure~\ref {fig:dfs}. }
\label{fig:msg_to_c33}
\end {center}
\end{figure}

In order to patch these privacy leaks, variable~$x_2$ and its domain~$D_2 = \{ R, B, G \}$ are replaced with random codenames $\tilde {x_2^{x_1}} = 928372$ and $\tilde {D_2^{x_1}} = \{ \alpha, \beta, \gamma \}$ (Figure~\ref {fig:msg_to_c33}b) preliminarily generated by~$a(x_2)$ and communicated directly to the leaf of the back-edge (Algorithm~\ref {algo:P_DPOP}, lines \ref {algo:P_DPOP:codenames} to~\ref {algo:P_DPOP:send_codenames}). The leaf applies these codenames to its output message (Algorithm~\ref {algo:UTILpropagation}, line~\ref {algo:UTILpropagation:codenames}), and they are only resolved once the propagation reaches the root of the back-edge (Algorithm~\ref {algo:UTILpropagation}, line~\ref {algo:UTILpropagation:resolve_codenames}). Not knowing these codenames, the agents in between, such as~$a(x_4)$, can only infer the existence of a cycle in the constraint graph involving some unknown ancestor and descendent. This is tolerated by the definition of topology privacy (Definition~\ref {def:topology}) since they are also involved in this cycle. A secret, random permutation~$\sigma_2^{x_1}$ is also applied to~$\tilde {D_2^{x_1}}$; this is useful for problem classes in which variable domains are public. Notice that if $x_4$ also had a constraint with~$x_2$, the above reasoning would still hold, because $x_2$ would then have sent a \emph {different} codename $\tilde {x_2^{x_4}}$ to~$x_4$, which would then not be able to resolve the unknown codename~$\tilde {x_2^{x_1}}$ to~$x_2$. In this case, $x_4$'s separator would be $\{ x_3, \tilde {x_2^{x_1}}, \tilde {x_2^{x_4}} \}$, and its message sent to~$x_3$ would be three-dimensional instead of two-dimensional.

\subsubsection {Obfuscating Feasibility Values}
\label {sec:obfuscation}

Hiding variable names and values using codenames addresses the leaks of agent and topology privacy. However, this does not address the fact that the feasibility values in the message $x_1 \rightarrow x_4$ in Figure~\ref {fig:msg_to_c33:cyphertext} violate constraint privacy, because they reveal to $x_4$ that its subtree can always find a feasible solution to its subproblem when $x_4 = R$, regardless of the value of the obfuscated variable~$928372$. To patch this privacy leak, feasibility values are obfuscated by adding large, random numbers that are generated by the root of the back-edge ($x_2$) and sent over a secure channel to the leaf of the back-edge (Algorithm~\ref {algo:P_DPOP}, lines \ref {algo:P_DPOP:got_keys} to~\ref {algo:P_DPOP:send_keys}). 
The number of bits~$B$ of the random numbers is a problem-independent parameter of the algorithm. 
The obfuscation is performed in such a way that a different random number is added to all feasibility values associated with each value of~$x_2$, as in Figure~\ref {fig:msg_to_c33:cyphertext2}, using the obfuscation key $[ 620961, 983655, 534687 ]$. These random numbers are added by the leaf of the back-edge to its outgoing message (Algorithm~\ref {algo:UTILpropagation}, line~\ref {algo:UTILpropagation:obfuscation}), and they are only eventually subtracted when the propagation reaches the root of the back-edge (Algorithm~\ref {algo:UTILpropagation}, line~\ref {algo:UTILpropagation:deobfuscation}). 

Notice that this obfuscation scheme achieves two objectives: 1) it hides from $x_4$ the absolute feasibility values of its subtree, and 2) it hides the relative dependencies of these values on the obfuscated variable~$928372$, because different random numbers were used for each value in its obfuscated domain $\{ \alpha, \beta, \gamma \}$. Agent~$a(x_4)$ is still able to infer the relative dependencies on its own variable~$x_4$, which is necessary to perform the projection of this variable, but it is unable to tell, for each value of the other (obfuscated) variable, whether the subtree's problem is feasible, and if not, how many constraints are violated. Notice in particular that, for a given value of the obfuscated variable (i.e. for any column), agent~$a(x_4)$ does not know whether any of the assignments to~$x_4$ is feasible, and therefore it would be incorrect to simply assume that the lowest of the obfuscated feasibility entries decrypts to~$0$. Similarly, equal entries in the same column correspond with a high probability to entries that have the same number of constraint violations, but this number is not necessarily~0, so it would be incorrect to infer they correspond to feasible entries. 

Notice also that this obfuscation scheme is only applicable in the presence of a back-edge, i.e. when the message contains more than just the parent variable. Consider for instance the single-variable message $x_5 \rightarrow x_3$, recalled in Figure~\ref {fig:msg_from_c21:cleartext}. If agent~$a(x_3)$ knew that $x_5$ is a leaf of the pseudo-tree, the cleartext message would reveal agent~$a(x_5)$'s private local constraint $x_5 \not\in \{ B, R \}$ to agent~$a(x_3)$, and the previous obfuscation scheme does not apply because of the absence of back-edges. Notice that this threat to constraint privacy is tempered by the fact that P-DPOP$^+$'s guarantees in terms of topology privacy prevent agent~$a(x_3)$ from discovering that $x_5$ is indeed a leaf. From $a(x_3)$'s point of view, a larger subproblem might be hanging below variable~$x_5$ in Figure~\ref {fig:dfs}, and the message could actually be an aggregation of multiple agents' subproblems. 

\begin{figure}[ht]
\begin {center}
\subfigure [in cleartext] {
\begin{tabular}{|c||c|}
\multicolumn {2}{c}{$x_5 \rightarrow x_3$}	\\
\hline
$x_3$	&	\# conflicts \\
\hline \hline
$R$		&	$0$	\\
\hline
$B$		&	$0$		\\
\hline
$G$		&	$1$		\\
\hline
\end{tabular}
\label {fig:msg_from_c21:cleartext}
}
\subfigure [obfuscated] {
\begin{tabular}{|c||c|}
\multicolumn {2}{c}{$x_5 \rightarrow x_3$}	\\
\hline
$x_3$	&	\# conflicts \\
\hline \hline
$R$		&	$0$	\\
\hline
$B$		&	$0$		\\
\hline
$G$		&	$730957$		\\
\hline
\end{tabular}
\label {fig:msg_from_c21:cyphertext}
}
\caption {The message received by agent~$a(x_3)$ in Figure~\ref {fig:dfs}. }
\label{fig:msg_from_c21}
\end {center}
\end{figure}

To reduce this privacy leak present in the original algorithm \cite {Faltings08a}, we propose a new additional obfuscation scheme that consists in adding large ($B$-bit), positive, random numbers to positive entries in single-variable messages, in order to obfuscate the true numbers of constraint violations (Algorithm~\ref {algo:UTILpropagation}, line~\ref {algo:UTILpropagation:new_obfuscation} and Figure~\ref {fig:msg_from_c21:cyphertext}). Because these random numbers are never subtracted back, they must not be added to zero entries, otherwise the algorithm would fail to find a solution with no violation. Feasible entries are still revealed, but the numbers of constraint violations for infeasible entries remain obfuscated.

\subsection {Propagating Final Decisions}
\label {sec:VALUEpropagation}

Once the feasibility values have propagated up all the way to the root of the pseudo-tree, and a feasible assignment to this root variable has been found (if there exists one), this assignment is propagated down the pseudo-tree (Algorithm~\ref {algo:P_DPOP}, lines \ref {algo:P_DPOP:VALUEpropagation_start} to~\ref {algo:P_DPOP:VALUEpropagation_end}). Each variable uses the assignments contained in the message from its parent, in order to look up a corresponding assignment for itself (line~\ref {algo:P_DPOP:lookup}). It then sends to each child the assignments for the variables in its separator (line~\ref {algo:P_DPOP:VALUEpropagation_end}), using the same codenames as before so as to protect agent and topology privacy. Decision privacy is only partially guaranteed, because each variable learns the values chosen for its parent and pseudo-parents --- but not for other, non-neighboring variables in its separator because they are hidden by unknown codenames.

\subsection {Algorithm Properties}
\label {sec:P_DPOP_properties}

This section first formally proves that the algorithm is complete, and analyses its complexity. We then present an algorithm variant with lower complexity. Finally, the privacy guarantees provided by both algorithms (summarized in Table~\ref {table:privacy_summary}) are formally described. 

\begin{table}[ht]
\begin{center}
\begin{tabular}{|l||c|c|c|c|}
\hline
privacy type: 					& agent		 		& topology	& constraint	 		& decision \\
\hline
\hline
P-DPOP$^{(+)}$				& \cellcolor[gray]{0.8}full	& partial 		& partial 				& partial \\
\hline
P$^{\sfrac{3}{2}}$-DPOP$^{(+)}$ 	& \cellcolor[gray]{0.8}full	& partial 		& partial 				& \cellcolor[gray]{0.8}full	 \\
\hline
P$^2$-DPOP$^{(+)}$ 			& \cellcolor[gray]{0.8}full	& partial 		& \cellcolor[gray]{0.8}full	& \cellcolor[gray]{0.8}full	 \\
\hline
\hline
MPC-DisCSP4		 			& -					& partial 		& partial 				& partial \\
\hline
\end{tabular}
\label{table:privacy_summary}
\end{center}
\caption{Privacy guarantees of various algorithms. }
\end{table}

\subsubsection {Completeness and Complexity}

\begin {theorem}
\label {thm:P_DPOP}
Provided that there are no codename clashes, P-DPOP$^+$ (Algorithm~\ref {algo:P_DPOP}) terminates and returns a feasible solution to the DisCSP, if there exists one. 
\end {theorem}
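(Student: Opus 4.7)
The plan is to reduce the claim to the corresponding correctness of plain DPOP, by showing that each obfuscation layer introduced in P-DPOP$^+$ is either exactly reversed before it can affect a $\arg\min$ operation, or leaves the set of zero-valued (i.e.\ feasible) entries invariant. If both are true, then the FEAS and DECISION propagations compute, up to a relabelling, the same quantities they would in DPOP on the underlying Max-DisCSP, so completeness follows from DPOP's well-known completeness.

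First I would dispose of termination. All five phases of Algorithm~\ref{algo:P_DPOP} (codename exchange, key exchange, FEAS propagation, root decision, DECISION propagation) traverse the DFS pseudo-tree a bounded number of times; under the FIFO, finite-delivery assumption from Section~\ref{sec:DisCSPdef}, each agent waits only for a finite and a priori known set of messages from its (pseudo-)parents and children, so a simple induction from the leaves upward (for FEAS) and from the root downward (for DECISION) shows that every agent eventually halts.

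Next I would establish that codenames are transparent to the computation. By the no-clash hypothesis, the maps $y_i \mapsto \tilde{y_i^x}$ and the domain permutations $\sigma_{y_i}^x$ are bijections. A codename $\tilde{y^x}$ is chosen by $a(y)$ and shipped directly to its pseudo-child $a(x)$ (lines \ref{algo:P_DPOP:choose_var_codename}--\ref{algo:P_DPOP:send_codenames}), installed in the outgoing FEAS table at that pseudo-child (line~\ref{algo:UTILpropagation:codenames}), and resolved only at the pseudo-parent $a(y)$, since $a(y)$ is the unique agent that knows the codename and is also an ancestor in the pseudo-tree to which the message can propagate (line~\ref{algo:UTILpropagation:resolve_codenames}). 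By induction on the depth of the subtree one shows that at every intermediate agent the FEAS table is, after codename resolution, exactly the DPOP table it would be in the cleartext algorithm, with the variables in the separator possibly represented by codenames that are eventually undone. The same argument carries over to DECISION messages, which use the same codenames in reverse.

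The crux, and the step I expect to take most care, is showing that the additive obfuscations cancel where they must and preserve the zero entries where they cannot cancel. For each back-edge $(x,y)$ with $y$ a pseudo-parent of $x$, the vector $key_y^x$ is picked once by $a(y)$ (line~\ref{algo:P_DPOP:key}) and added, indexed by the codename of $y$, to exactly one outgoing FEAS message at $a(x)$ (line~\ref{algo:UTILpropagation:obfuscation}). Because the FEAS operations between $x$ and $y$ consist only of join (pointwise addition) and projection (minimization over variables other than those in separators containing $\tilde{y^x}$), the same additive term $key_y^x(\tilde{y^x})$ appears as a constant offset in every table entry sharing that codename value, and is subtracted off at $a(y)$ (line~\ref{algo:UTILpropagation:deobfuscation}) before $y$ is projected out. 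Hence the offset never influences an $\arg\min$. For the separate single-variable obfuscation at line~\ref{algo:UTILpropagation:new_obfuscation}, the definition of the piecewise rule guarantees $m(x,\tilde{p_x},\cdot)=0$ if and only if the unobfuscated value was $0$, so the set of feasible entries, and thus the outcome of every downstream $\arg\min$, is unchanged.

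Combining these three observations, the computation carried out by P-DPOP$^+$ coincides, at every $\arg\min$ step, with the computation carried out by DPOP on the underlying Max-DisCSP. Since DPOP on a DFS tree is complete for Max-DisCSP, the root chooses a value of cost~$0$ whenever one exists; the DECISION propagation then extends this choice to a full assignment by table lookup (line~\ref{algo:P_DPOP:lookup}), and by the same reasoning this assignment is feasible. This gives the desired conclusion.
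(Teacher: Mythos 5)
Your proposal is correct and follows essentially the same route as the paper: termination by counting messages along the tree edges, plus an induction up the pseudo-tree showing that, absent codename clashes, each FEAS message is the cleartext DPOP table up to a relabelling and additive offsets that either cancel at the pseudo-parent before projection or preserve the set of zero (feasible) entries. The paper states this induction only in outline (``left to the reader''), so your argument simply supplies the details it omits; the one nuance worth noting is that the unreversed single-variable obfuscation can perturb the $\arg\min$ among \emph{infeasible} entries, which is harmless because the theorem only requires correctness when a zero-cost assignment exists.
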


\begin {proof}
After exchanging codenames and obfuscation keys, which is guaranteed to require a number of messages at most quadratic in the number~$n$ of variables, the bottom-up propagation of feasibility values (Algorithm~\ref {algo:UTILpropagation}) terminates after sending exactly $(n-1)$ messages (one up each tree-edge). One can prove by induction (left to the reader) that this multi-party dynamic programming computation almost surely correctly reveals to each variable~$x$ the (obfuscated) feasibility of its subtree's subproblem, as a function of~$x$ and possibly of ancestor variables in the pseudo-tree. This process may only fail in case of collisions of codenames, when the roots of two overlapping back-edges choose the same codenames. Such codename clashes are inherent to most privacy-protecting algorithms, and can be made as improbable as desired by augmenting the size of the codename space. 

Finally, the top-down decision propagation phase (Algorithm~\ref {algo:P_DPOP}, lines \ref {algo:P_DPOP:VALUEpropagation_start} to~\ref {algo:P_DPOP:VALUEpropagation_end}) is guaranteed to yield a feasible assignment to each variable (if there exists one), after the exchange of exactly $(n-1)$ messages (one down each tree-edge). 
\end {proof}

When it comes to the complexity of the algorithm in terms of number of messages exchanged, the bottleneck is in the election of the root variable (Online Appendix~1), which requires $O(\phi \cdot d \cdot n^2)$ messages, where $\phi$ is the diameter of the constraint graph, $d$~its degree, and $n$~is the number of variables. However, the $(n-1)$ messages containing feasibility values can be exponentially large: the message sent by variable~$x$ is expressed over $|sep_x|$ variable codenames (Algorithm~\ref {algo:UTILpropagation}, line~\ref {algo:UTILpropagation:separator}), and therefore contains $O(D_{\max}^{|sep_x|})$ feasibility values, where $D_{\max}$ is the size of the largest variable domain. The overall complexity in terms of information exchange, memory and runtime (measured in number of constraint checks) is therefore $O(n \cdot D_{\max}^{sep_{\max}})$, where $sep_{\max} = \max_x |sep_x|$. This is the same as DPOP, except that in P-DPOP$^+$ each variable may appear multiple times under different codenames in the same separator, hereby increasing the value of~$sep_{\max}$. However, this increase is only by a multiplicative factor that is upper bounded by the degree of the constraint graph, since the number of codenames for a given variable is at most equal to its number of neighbors. Empirically, our experimental results in Section~\ref {sec:results} suggest that, on almost all the problem classes we considered, the median value of $sep_{\max}$ tends to grow rather linearly in~$n$.

\subsubsection {P-DPOP: Trading off Topology Privacy for Performance}
\label {sec:mergeBack}

It is possible to reduce the sizes~$|sep_{x_i}|$ of the separators, by enforcing that each agent~$a(x)$ send the \emph {same} codename~$\tilde {x}$ for~$x$ to \emph {all} of $x$'s (pseudo-)children, unlike in Algorithm~\ref {algo:P_DPOP} (lines \ref {algo:P_DPOP:codenames} to~\ref {algo:P_DPOP:send_codenames}). This variant will be identified by the absence of the plus sign in exponent; P-DPOP is the version of the algorithm that was initially proposed by \citet{Faltings08a}. 

As a result of this change, variables that previously may have occurred multiple times in the same feasibility message under different codenames can now only appear at most once, such that we now have $sep_{\max} < n$. The worst-case complexity of P-DPOP then becomes the same as DPOP \cite {Petcu05}, in which $sep_{\max}$ is equal to the \emph {width} of the pseudo-tree, which is bounded below by the \emph {treewidth} of the constraint graph. However, privacy considerations prevents the use in P-DPOP of DPOP's more efficient, but less privacy-aware pseudo-tree generation heuristics, resulting in higher-width pseudo-trees.  

While the complexity of P-DPOP is hereby decreased compared to P-DPOP$^+$, sending the same codename~$\tilde {x}$ for variable~$x$ to all its (pseudo-)children has drawbacks in terms of topology privacy, as analyzed below.

\subsubsection {Full Agent Privacy}
\label {sec:agent_privacy_proof:P_DPOP}

There are only two ways the identity of an agent~$A$ could be leaked to a non-neighbor~$B$: 1) the algorithm can require $A$ and~$B$ to exchange messages with each other, or 2) Agent~$A$ can receive a message whose content refers identifiably to~$B$. Case~1 can never happen in any of our algorithms, because they only ever involve exchanging messages with neighboring agents. Case~2 is addressed mainly through the use of codenames. 

\begin {theorem}
\label {thm:agent_privacy:P_DPOP}
The P-DPOP$^{(+)}$ algorithms guarantee full agent privacy. 
\end {theorem}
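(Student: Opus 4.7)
My plan is to follow the two-pronged attack structure suggested in the paragraph preceding the theorem, refining it into an exhaustive case analysis over every message type that P-DPOP$^{(+)}$ sends. For Case~1 (direct communication), I would enumerate all message sends in Algorithm~\ref{algo:P_DPOP} and Algorithm~\ref{algo:UTILpropagation}, namely the CODES, KEY, FEAS and DECISION messages, and verify that in each case the recipient is a $parent$, $child$, $pseudo\_parent$ or $pseudo\_child$ in the DFS tree. Since the pseudo-tree used is a DFS tree (this is precisely the assumption noted in the DPOP privacy-properties paragraph), every (pseudo-)parent/(pseudo-)child pair is by construction an edge of the constraint graph, so sender and recipient share a constraint and hence are neighbors in the sense of Assumption~1 of the initial-knowledge model. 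The pseudo-tree construction itself (Online Appendices 1 and~2) is assumed to have the same property, so I would simply cite it.

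For Case~2 (identifiable references in message contents), I would again walk through the same four message types. CODES messages carry only the sender's own variable codename together with a codename list and a permutation of its own domain, none of which reference any other agent. KEY messages carry only a vector of uniformly random integers, independent of any identity. DECISION messages carry, by line~\ref{algo:P_DPOP:VALUEpropagation_end} of Algorithm~\ref{algo:P_DPOP}, assignments to the separator variables expressed under the codenames the recipient already shares with the sender, so the only plain-text identifiers they contain are the sender's own variable (a neighbor of the recipient). The nontrivial case is the FEAS messages: here I need to argue that the replacement step on line~\ref{algo:UTILpropagation:codenames}, executed before every ancestor variable is forwarded upward, strips every reference to a non-neighbor. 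Concretely, for any variable~$z$ that appears in the scope of a FEAS message received by an agent~$B$, either $z$ is $B$'s own variable, or $z$ is a (pseudo-)parent of $B$ already known to $B$, or $z$ is referenced in the message only by a codename $\tilde{z^{x}}$ generated independently and uniformly at random by $a(z)$ on line~\ref{algo:P_DPOP:choose_var_codename} and delivered directly to the leaf of the back-edge, never to~$B$.

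The key lemma underlying this last bullet point, and the step I expect to be the main obstacle, is that these codenames are information-theoretically decoupled from any identifying information about~$a(z)$. I would prove it by noting that $\tilde{z^{x}}$ is drawn from a distribution that depends only on public parameters (the size of the codename space), so the joint distribution of the messages received by~$B$ conditioned on the identity of~$a(z)$ being any particular agent is independent of that identity, up to the possibility of a codename collision. This collision probability can be made arbitrarily small by enlarging the codename space, exactly as argued in the proof of Theorem~\ref{thm:P_DPOP}. Agent~$B$ can at best infer that \emph{some} unknown ancestor, which it is already entitled to infer exists because $B$ sits in a cycle covered by that back-edge, is involved; by Definition~\ref{def:agent_privacy} this is not a violation since $B$ never learns the identity or even recovers a stable identifier that links the observation to a specific non-neighbor.

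Finally, I would tie the two cases together: having shown that (i) $B$ never receives a message from any non-neighbor and (ii) every message $B$ does receive can be simulated without knowing the identity of any non-neighbor, I conclude that $B$'s view of the protocol is statistically independent of the identities of non-neighboring agents (up to negligible codename-collision probability), which is exactly the guarantee demanded by Definition~\ref{def:agent_privacy}. The same argument applies verbatim to both P-DPOP and P-DPOP$^+$, since the only difference between them (reusing a single codename per variable across its pseudo-children) affects topology privacy, not the use of codenames for ancestors in FEAS messages.
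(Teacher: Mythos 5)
Your proposal is correct and follows essentially the same route as the paper's proof: both split the argument into the no-direct-communication case and the message-content case, and both rest on the observation that every non-neighboring variable (and its domain) appearing in a FEAS or DECISION message is represented only by a random codename communicated exclusively to the relevant neighbor over a secure channel, so the recipient cannot link it to the owner's identity. The only detail the paper adds that you omit is that variable domain \emph{sizes} could themselves identify an agent, which it handles by assuming equal domain sizes or padding with fake values; this is a side remark rather than a structural difference.
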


\begin {proof}
The P-DPOP$^{(+)}$ algorithms proceed in the following sequential phases (the preliminary phases of root election and pseudo-tree generation are addressed in online appendices): 
\begin{description}
\item [Bottom-up feasibility propagation (Algorithm~\ref {algo:UTILpropagation})] Each feasibility message contains only a function (line~\ref {algo:UTILpropagation:get_message}) over a set of variables, whose names, if transmitted in clear text, could identify their owner agent. To prevent this agent privacy leak,  P-DPOP$^{(+)}$ replaces all variable names with secret, random codenames, as follows. 

Consider a variable~$x$ in the pseudo-tree. Note that no feasibility message sent by~$x$ or by any ancestor of~$x$ can be a function of~$x$. The message sent by~$x$ is not a function of~$x$, because $x$ is projected out before the message is sent (line~\ref {algo:UTILpropagation:min}). Variable~$x$ cannot re-appear in any feasibility message higher in the pseudo-tree, because no agent's local problem can involve any variable lower in the pseudo-tree (line~\ref {algo:UTILpropagation:local_join}). 

Similarly, consider now the feasibility message sent by a descendant~$y$ of~$x$ in the pseudo-tree, and assume first that $y$ is a leaf of the pseudo-tree. Since $y$ has no children, the feasibility message it sends can only be a function of the variables in its local problem. If this local problem involves~$x$, $y$~will replace $x$ by its codename~$\tilde {x^y}$ (line~\ref {algo:UTILpropagation:codenames}) before it sends its feasibility message. One can then prove by inference that no feasibility message sent by any variable between $y$ and~$x$ will contain~$x$ either; it can only (and not necessarily) contain one or several of its codenames~$\tilde {x^{y_i}}$. 

Since the codenames~$\tilde {x^{y_i}}$ are random numbers chosen by~$x$ (Algorithm~\ref {algo:P_DPOP}, line~\ref {algo:P_DPOP:choose_var_codename}), and only communicated (through channels that are assumed secure) to the respective neighbors~$y_i$ of~$x$ (Algorithm~\ref {algo:P_DPOP}, line~\ref {algo:P_DPOP:send_codenames}), no non-neighbor of~$x$ receiving a message involving any~$\tilde {x^{y_i}}$ can discover the identity of its owner agent. 

The domain~$D_x$ of variable~$x$ could also contain values that might identify its owner agent. To fix this privacy risk, $x$'s domain is also replaced by obfuscated domains~$\tilde {D_x^{y_i}}$ of random numbers, similarly to the way variable names are obfuscated. In this paper, we make the simplifying assumption that all variables have the same domain size (which naturally holds in many problem classes), so that one variable's domain size does not give any information about its owner agent. Otherwise, variable domains can be padded with fake values in order to make them all have the same size. 

\item [Top-down decision propagation (Section~\ref {sec:VALUEpropagation})] The messages contain assignments to variables (Algorithm~\ref {algo:P_DPOP}, line~\ref {algo:P_DPOP:VALUEpropagation_end}), which are also obfuscated using codenames. 
\end{description}

This concludes the proof that, in the P-DPOP$^{(+)}$ algorithms, no agent can receive any message from which it can infer the identity of any non-neighboring agent. 
\end {proof}

\subsubsection {Partial Topology Privacy}
\label {sec:topo:p_dpop}

\begin {theorem}
\label {thm:topo_privacy:P_DPOPmin}
P-DPOP guarantees \emph {partial} topology privacy. The minor leaks of topology privacy lie in the fact that a variable \emph {might} be able to discover a lower bound on a neighbor variable's degree in the constraint graph, and a lower bound on the total number of variables. 
\end {theorem}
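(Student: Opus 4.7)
The plan is to split the argument into two halves: first establish the positive topology-privacy guarantees that P-DPOP does offer, and then pinpoint the two specific leaks the theorem claims.

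For the positive half, I would argue that topology information is protected via essentially the same codename mechanism invoked in the proof of Theorem~\ref{thm:agent_privacy:P_DPOP}. Concretely, every message an agent sends or receives either travels along a pseudo-tree edge between neighbors in the constraint graph (so no new adjacency is revealed by the mere act of communicating) or carries content in which every non-local variable is replaced by a random codename whose interpretation is known only to that variable's owner and its (pseudo-)children. From this, a case analysis parallel to the one used for agent privacy shows that no agent can learn: the existence of any node outside its own neighborhood as a concrete named entity; the existence of any edge both of whose endpoints lie outside its incident edges; or the length of any cycle passing through its own variable, beyond the mere fact that such a cycle exists. Definition~\ref{def:topology} explicitly tolerates this last item, so this takes care of everything except the two claimed leaks.

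For the leaks, I would argue both through a counting argument over codenames observed by the curious agent. The lower bound on the total number of variables is immediate: each distinct codename the agent encounters (whether its own variables, codenames received via CODES messages, or codenames appearing in the separator of a received FEAS message) corresponds to a distinct variable, so the agent's tally of distinct codenames is a valid lower bound on~$n$. The lower bound on a neighbor's constraint-graph degree rests on the defining feature of P-DPOP as opposed to P-DPOP$^+$: a variable $x$ reuses a single codename $\tilde{x}$ when addressing all of its (pseudo-)children. Consequently, if an agent $w$ knows this codename---which happens precisely when $w$ is itself a (pseudo-)child of $x$, and hence a constraint-graph neighbor of $x$---then each FEAS message $w$ receives from a distinct child subtree in which $\tilde{x}$ appears witnesses at least one distinct back-edge into $x$ from that subtree. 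Counting these occurrences across $w$'s distinct child subtrees, and adding $w$'s own back-edge to $x$, yields a lower bound on the number of (pseudo-)children of $x$, and hence on $x$'s constraint-graph degree.

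The main obstacle I anticipate is making the degree-bound argument airtight. I have to verify that two occurrences of $\tilde{x}$ in FEAS messages from disjoint child subtrees always correspond to genuinely different back-edges, never double-counting the same one; I would justify this using the pseudo-tree property that disjoint subtrees describe disjoint subproblems and so cannot share a back-edge. I also have to check that no sharper information about $x$'s degree is extractable beyond this counting, which should follow because, apart from the shared-codename coincidence just exploited, every feasibility entry is masked by the key-based obfuscation of Section~\ref{sec:obfuscation} and therefore carries no additional structural information. Finally, I would contrast with P-DPOP$^+$, in which per-child codenames prevent the observer from matching occurrences of different codenames to a common underlying variable, so that the degree leak disappears there.
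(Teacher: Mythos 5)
Your proposal is correct and follows essentially the same route as the paper's proof: the codename mechanism confines what can be inferred, the degree bound comes from a (pseudo-)child of~$x$ recognizing the single shared codename~$\tilde{x}$ in FEAS messages arriving from distinct child subtrees (each witnessing at least one distinct neighbor of~$x$ below that child), and the bound on the total number of variables comes from counting distinct codenames, which in P-DPOP almost surely correspond to distinct variables. The paper's version merely adds that the root-election and pseudo-tree-generation phases are deferred to the online appendices and that the top-down DECISION messages contain no codenames beyond those already seen in the FEAS phase, both of which your blanket message-by-message case analysis implicitly covers.
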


\begin {proof}
Root election and pseudo-tree generation are left to the online appendices. 

\begin{description}
\item [Bottom-up feasibility propagation (Algorithm~\ref {algo:UTILpropagation})] 
Each variable~$x$ receives a FEAS message from each child, containing a function whose scope might reveal topological information. Each variable~$y$ in this scope is represented by a secret codename~$\tilde{y}$, however $x$ may be able to decrypt the codename~$\tilde {y}$, if and only if $y$ is a neighbor of~$x$ (or is $x$ itself), because $y$ has sent the \emph {same} codename~$\tilde {y}$ to all its neighbors. This results in a leak of topology privacy: $x$ discovers, for each neighboring ancestor~$y$, whether $y$ has at least one other neighbor below a given child of~$x$. But it cannot discover exactly how many of these other neighbors there are.

Furthermore, in the case where $x$ and $y$ are not neighbors, $x$ cannot decrypt~$\tilde {y}$, but it can still infer there exists another, non-neighboring ancestor corresponding to this codename. This is another breach of topology privacy. Because $y$ sent the same codename~$\tilde {y}$ to all its neighbors, $x$ can also discover whether that other ancestor has at least one neighbor below each of $x$'s children. Moreover, since codenames are large random numbers that are almost surely unique, $x$ may discover the existence of \emph {several, distinct} such non-neighboring ancestors. 

\item [Top-down decision propagation (Section~\ref {sec:VALUEpropagation})] Each variable receives a message from its parent, which can only contain codenames for variables and variable values that were already present in the FEAS message received during the previous phase. 
\end{description}
This concludes the proof that P-DPOP only \emph {partially} protects topology privacy. The limited topology information leaked to a variable only concerns its branch in the pseudo-tree; no information can be leaked about any other branch, not even their existence. 
\end {proof}

\begin {theorem}
The use of different codenames for each (pseudo-)child improves the topology privacy in P-DPOP$^+$ compared to P-DPOP, but the same bounds can still be leaked. 
\end {theorem}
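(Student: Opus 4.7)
The plan is to prove the two halves of the theorem separately, piggy-backing on the proof structure of Theorem~\ref{thm:topo_privacy:P_DPOPmin}. For the \emph{improvement} half, I would compare that proof line by line against the revised codename allocation of P-DPOP$^+$ (Algorithm~\ref{algo:P_DPOP}, lines~\ref{algo:P_DPOP:choose_var_codename}--\ref{algo:P_DPOP:send_codenames}), in which each variable $y$ now sends a distinct codename $\tilde{y^w}$ to every (pseudo-)child $w$. The two leaks identified in Theorem~\ref{thm:topo_privacy:P_DPOPmin} rest on the equality $\tilde{y^z} = \tilde{y^x}$: a receiver $x$ was able to (i) decrypt an occurring codename whenever its owner happened to be one of $x$'s own neighbors, and (ii) equate identical codenames across distinct FEAS messages in order to count distinct non-neighboring ancestors. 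Under P-DPOP$^+$, the fresh-codename rule forces $\tilde{y^z} \neq \tilde{y^x}$ for any descendant $z$ of $x$, collapsing both inferences and formally establishing a strict privacy improvement.

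For the \emph{same bounds can still be leaked} half, I would show that lower bounds of the two kinds named in Theorem~\ref{thm:topo_privacy:P_DPOPmin} can still be derived from the scopes of received FEAS messages. Each distinct codename in such a scope still identifies a distinct back-edge, whose upper endpoint must be some ancestor of the sending child's subtree; once $x$ rules out the codenames it issued itself (known from Algorithm~\ref{algo:P_DPOP}, line~\ref{algo:P_DPOP:choose_var_codename}), each surviving codename witnesses at least one further variable beyond $x$'s immediate neighborhood, yielding a (weaker) lower bound on the total number of variables in the problem. For the neighbor-degree bound, I would combine $x$'s knowledge of its own (pseudo-)parents, received via Algorithm~\ref{algo:P_DPOP} at line~\ref{algo:P_DPOP:codenames}, with the count of unresolved codenames coming from a given child, and argue that in suitable configurations at least one such codename must be attributable to a specific neighboring ancestor $y$, implying that $y$ has strictly more edges than those $x$ can see directly. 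The top-down decision phase introduces no additional leak, for exactly the same reason as in Theorem~\ref{thm:topo_privacy:P_DPOPmin}.

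The main obstacle I expect is the neighbor-degree bound: since codenames in P-DPOP$^+$ are genuinely unlinkable across children, there is no clean counterpart to the decryption step used in the proof of Theorem~\ref{thm:topo_privacy:P_DPOPmin}, and the inference must instead rest on a combinatorial argument involving the shape of $x$'s local pseudo-tree. I would handle this by working in the same conservative ``$x$ may be able to discover'' spirit used in Theorem~\ref{thm:topo_privacy:P_DPOPmin}, exhibiting a configuration (for instance, a small instance derived from Figure~\ref{fig:constraint_graph}) in which the unresolved codenames can only be attributed to one particular neighboring ancestor, rather than claiming the bound is always derivable.
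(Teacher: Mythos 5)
Your first half matches the paper: the improvement is precisely that distinct per-neighbor codenames destroy the receiver's ability to decrypt a codename even when its owner happens to be a neighbor, and to equate codenames across messages; that part of your plan is fine. The genuine gap is in the second half. Your central counting claim --- that each unresolved codename in a received FEAS message ``witnesses at least one further variable beyond $x$'s immediate neighborhood'' --- is false. An unresolved codename $\tilde {y^z}$ may correspond to a back-edge between a pseudo-parent $y$ that $x$ already knows and the very child $z$ that sent the message; such a codename witnesses no variable that $x$ did not already know, only an edge. Moreover several distinct codenames can all refer to the \emph{same} ancestor variable (one per descendant it is linked to), so counting codenames one at a time cannot yield the claimed lower bound on the total number of variables.

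The paper's argument is structurally different and is the step you are missing: because each codename now corresponds to a \emph{unique back-edge}, any \emph{pair} $(\alpha, \beta)$ of unresolved codenames in one received message corresponds to two back-edges that must differ in at least one endpoint. Hence either $\alpha$ and $\beta$ originate from two distinct ancestors of~$x$ (so $x$ learns it has at least two ancestors), \emph{or} they were issued to two distinct descendants at or below the sending child (so $x$ learns that subtree contains at least two variables) --- a disjunctive conclusion, not a per-codename count. This pairwise argument is what lets $x$ refine its lower bound on the total number of variables. Your concern about reconstructing the neighbor-degree bound is understandable, but note that the paper's own proof of this theorem does not re-derive it; it only carries through the total-variable-count inference, so an elaborate configuration-based argument for the degree bound is not what is needed here.
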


\begin {proof}
Consider a variable~$x$ that receives a FEAS message including a secret codename~$\tilde {y}$ corresponding to variable~$y$ ($\neq x$). Because $y$ now sent a \emph {different} codename to each of its neighbors, $x$ is no longer able to decrypt~$\tilde {y}$, even if $y$ is a neighbor of~$x$. As a consequence, $x$ is no longer able to infer whether $\tilde {y}$ refers to a known neighbor of~$x$, or to an unknown, non-neighboring variable. However, since each codename now corresponds to a unique back-edge in the pseudo-tree, for each pair $(\alpha, \beta)$ of unknown codenames in $x$'s received FEAS message (if such a pair exists), at least one of the following statements must hold: 
\begin{itemize}
\item $\alpha$ and $\beta$ refer to two different ancestors of~$x$, and therefore $x$ discovers at it has at least two ancestors (which it might not have known, if it has no pseudo-parent); \emph {and/or}
\item $\alpha$ and $\beta$ were sent to two different descendants of~$x$ below (and possibly including) the sender child~$y$, and therefore $x$ discovers that it has at least two descendants below (and including)~$y$ (which it might not have known, if it has no pseudo-child below~$y$). 
\end{itemize}
Therefore $x$ \emph {might} be able to refine its lower bound on the total number of variables. 
\end {proof}

\subsubsection {Partial Constraint Privacy}
\label {sec:constraint_privacy:obfuscation}

\begin {theorem}
The P-DPOP$^{(+)}$ algorithms guarantee \emph {partial} constraint privacy. The local feasibility of a subproblem for a partial variable assignment~$X^*$ may be leaked, even if $X^*$ cannot be extended to an overall feasible solution (i.e. this is not semi-private information). 
\end {theorem}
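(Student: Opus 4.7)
My plan is to prove the theorem in two halves, mirroring its two-part statement: first a positive claim that significant constraint information remains hidden (``partial'' privacy), and then an exhibition of a concrete leak that is demonstrably not semi-private. For the positive half, I would proceed by case analysis over the only messages an agent~$x$ ever receives during P-DPOP$^{(+)}$: the CODES, KEY, FEAS and DECISION messages. The CODES and KEY messages are by construction uniformly random and independent of constraints, so they leak nothing. The DECISION messages were shown in Section~\ref{sec:VALUEpropagation} to contain only separator assignments under the same codenames used in the bottom-up phase, so they reduce to the FEAS analysis. It therefore suffices to show that from the FEAS messages received from its children, $x$ cannot reconstruct any specific constraint of a strict descendant.

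The core of the positive argument combines three facts established earlier in Sections~\ref{sec:codenames} and~\ref{sec:obfuscation}. First, every variable appearing in a FEAS message other than~$x$ itself is represented by a random codename, and its domain is replaced by a random permutation of random identifiers (Algorithm~\ref{algo:P_DPOP}, lines \ref{algo:P_DPOP:choose_var_codename}--\ref{algo:P_DPOP:send_codenames}); thus $x$ cannot identify the variables or values underlying any column of the table except for rows indexed by its own (pseudo-)parents, which it may already know. Second, for every pseudo-child below the sender, a fresh $B$-bit random vector $key_x^{y_i}$ is added column-wise (Algorithm~\ref{algo:UTILpropagation}, line~\ref{algo:UTILpropagation:obfuscation}), so the obfuscated entries for any fixed obfuscated-variable value form a one-time-pad of the true counts, and the relative dependence on obfuscated variables is destroyed. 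Third, the new additive obfuscation on positive entries in single-variable messages (line~\ref{algo:UTILpropagation:new_obfuscation}) ensures that exact numbers of violations in leaf messages are also hidden. Combining these, I would argue that any individual constraint $c$ in a strict descendant's local problem can be perturbed into a different constraint~$c'$ without changing any information $x$ can extract, which formalises the claim that $c$ itself is not recovered.

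For the negative half, I would display the unavoidable leak that prevents full constraint privacy. The key observation is that zero entries in a FEAS message are never perturbed: the key-based scheme adds the same column offset to every row (leaving relative differences in the column truthful) and the single-variable patch explicitly skips zeros. Hence whenever $x$ sees an entry equal to $key_{y}^{x}(\tilde{y^x})$ for some pseudo-parent $y$ (or equal to~$0$ for the receiver's own variable), it can conclude that the entire subtree rooted at the sender child admits a feasible completion consistent with the corresponding partial assignment~$X^*$. It remains to argue this is not semi-private. I would construct a minimal instance in the style of Figure~\ref{fig:constraint_graph} where some $X^*$ is locally feasible in the subtree but globally incompatible with a constraint living on a different branch of the pseudo-tree, so that $X^*$ never occurs in any solution and is therefore not deducible from the final DECISION messages. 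The main obstacle here is purely rhetorical rather than technical: one must carefully separate what the obfuscation actually preserves (per-row relative feasibility for the projected variable) from what it hides (absolute costs and cross-column relations), to make clear that the leak is confined to local subproblem feasibility of partial assignments and no finer constraint structure escapes.
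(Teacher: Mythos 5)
Your overall decomposition (single-variable versus multi-variable FEAS messages, a positive hiding claim plus an exhibited non-semi-private leak) matches the paper's, and your single-variable analysis --- zero entries are preserved by Algorithm~\ref{algo:UTILpropagation}, line~\ref{algo:UTILpropagation:new_obfuscation}, so the recipient learns exactly which of its own values are locally feasible for the subtree, including values that do not extend to any global solution --- is the same leak the paper exhibits and already suffices for the theorem's existential claim. However, your mechanism for the multi-variable leak is wrong as stated. The recipient~$p_x$ of $m(\tilde{p_x}, \tilde{y_i^x})$ does \emph{not} know $key_{y_i}^x(\tilde{y_i^x})$: that key is shared only between the sender~$x$ and the pseudo-parent~$y_i$. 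So $p_x$ cannot ``see an entry equal to the key'' and conclude feasibility; the paper explicitly warns that the lowest entry in a column need not decrypt to~$0$. The actual mechanism is a posteriori: the DECISION phase reveals $\tilde{p_x}^*$ and $\tilde{y_i^x}^*$ to~$p_x$, the entry $m(\tilde{p_x}^*, \tilde{y_i^x}^*)$ is then known to decrypt to~$0$, which lets $p_x$ recover $key_{y_i}^x(\tilde{y_i^x}^*)$, subtract it from that entire column, and reduce to the single-variable analysis. You omit this inference step entirely, and without it your multi-variable leak does not go through (though, as noted, the single-variable case rescues the theorem's claim).

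Your positive half also overreaches. The assertion that any constraint~$c$ of a strict descendant ``can be perturbed into a different constraint~$c'$ without changing any information $x$ can extract'' is false: within a column (fixed $\tilde{y_i^x}$) every entry is shifted by the \emph{same} key value, so the relative differences across assignments to~$\tilde{p_x}$ remain fully visible to the recipient --- this is not a one-time pad of the counts, and the paper states as much. A perturbation of~$c$ that alters those relative differences, or that alters the zero pattern of a single-variable message, is detectable. The theorem only requires the weaker statement that absolute values and cross-column relations are hidden, which is what the paper proves; you should either restrict your indistinguishability claim to perturbations preserving the visible relative structure, or drop it in favor of the paper's direct enumeration of what can and cannot be inferred.
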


\begin{proof}
Information about constraints is only transmitted during feasibility propagation (Algorithm~\ref {algo:UTILpropagation}). Based on the knowledge of the optimal variable assignments transmitted during the last phase (Section~\ref {sec:VALUEpropagation}), some of the feasibility information may be decrypted. 

\begin{description}
\item [Single-variable feasibility messages] 
When a variable~$p_x$ receives a feasibility message involving only~$p_x$, the message has been obfuscated only by adding secret random numbers to its infeasible entries (line~\ref {algo:UTILpropagation:new_obfuscation}). Feasible entries remain equal to~$0$, and $p_x$ can identify which entries refer respectively to feasible or infeasible assignments to~$p_x$. 

However, the addition of a secret, positive, random number to each infeasible entry ensures only an upper bound on the number of constraint violations is leaked, which can be made as loose as desired by choosing random numbers as large as necessary.  

\item [Multi-variable feasibility messages] 
If the FEAS message involves at least one other variable~$\tilde {y_i}$, then all message entries have been obfuscated by adding large random numbers~$key_{y_i}^x (\tilde {y_i^x})$ of $B$ bits (line~\ref {algo:UTILpropagation:obfuscation}). Furthermore, $key_{y_i}^x (\tilde {y_i^x})$ is only known to the sender~$x$ of the message and to its pseudo-parent~$y_i$, but not to the recipient~$p_x$, which therefore cannot subtract it to de-obfuscate the entries. 

Assume, for simplicity, that the message $m(\tilde {p_x}, \tilde {y_i^x})$ involves only the two variables $\tilde {p_x}$ and~$\tilde {y_i^x}$; the argument extends easily to more variables. The recipient~$p_x$ might be able to make inferences: 1) by fixing $\tilde {y_i^x}$ and comparing the obfuscated entries corresponding to different values for~$\tilde {p_x}$; or 2) by fixing~$\tilde {p_x}$ and varying~$\tilde {y_i^x}$ instead. 
\begin{enumerate}
\item For a given value of~$\tilde {y_i^x}$, all entries have been obfuscated by adding the same random number~$key_{y_i}^x (\tilde {y_i^x})$ (line~\ref {algo:UTILpropagation:obfuscation}), so $p_x$ can compute the relative differences of feasibility values for various assignments to~$\tilde {p_x}$. However, it cannot decrypt the absolute values without knowing~$key_{y_i}^x (\tilde {y_i^x})$. In particular, the lowest obfuscated value is not necessarily equal to~$key_{y_i}^x (\tilde {y_i^x})$, because it does not necessarily decrypt to~$0$: all values of~$\tilde {p_x}$ may be infeasible for this particular value of~$\tilde {y_i^x}$. 

There is one exception: if a feasible solution is found to the problem in which $\tilde {y_i^x} = \tilde {y_i^x}^*$ and $\tilde {p_x} = \tilde {p_x}^*$, then $m(\tilde {p_x}^*, \tilde {y_i^x}^*)$ necessarily decrypts to~$0$, and therefore $p_x$ will be able to infer~$key_{y_i}^x (\tilde {y_i^x}^*)$. After fixing $\tilde {y_i^x} = \tilde {y_i^x}^*$ in the message and subtracting~$key_{y_i}^x (\tilde {y_i^x}^*)$, the same reasoning can be made as for the single-variable case, in which feasible and infeasible entries are identifiable, but the numbers of constraint violations for infeasible entries remain obfuscated. 

\item For a given value of~$\tilde {p_x}$, each feasibility value $m(\tilde {p_x}, \tilde {y_i^x})$ has been obfuscated by adding a different, secret random number~$key_{y_i}^x (\tilde {y_i^x})$. Choosing the number of bits~$B$ sufficiently large makes sure that no useful information (relative, or absolute) can be obtained by comparing the obfuscated feasibility values. 
\end{enumerate}
\end{description}
This concludes the proof that P-DPOP$^{(+)}$ guarantees \emph {partial} constraint privacy. 
\end{proof}

\subsubsection {Partial Decision Privacy}

\begin {theorem}
The P-DPOP$^{(+)}$ algorithms guarantee \emph {partial} decision privacy. The leak lies in the fact that a variable \emph {might} discover the values chosen for some or all of its neighbors. 
\end {theorem}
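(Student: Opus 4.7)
The plan is to track, phase by phase, the only points at which information about a variable's chosen value can travel through the network, and then show that at each such point the value is protected by a codename except when the recipient is a neighbor.

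First, I would observe that the bottom-up feasibility propagation (Algorithm~\ref{algo:UTILpropagation}) never transmits chosen assignments at all: the messages contain only obfuscated feasibility tables $m(\tilde{p_x},\cdot)$, and the final decision $x^*$ is computed only at the root in line~\ref{algo:UTILpropagation:root_value}, after which it is stored locally. So decision information can leak only during the top-down decision propagation (Section~\ref{sec:VALUEpropagation}, lines \ref{algo:P_DPOP:VALUEpropagation_start}--\ref{algo:P_DPOP:VALUEpropagation_end} of Algorithm~\ref{algo:P_DPOP}).

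Next, I would analyze a single DECISION message received by a variable~$x$ from its parent~$p_x$. By construction, this message carries assignments to the variables in $x$'s separator $sep_x$, and these assignments are transmitted in the same obfuscated form already used during feasibility propagation: each value in the assignment is the codename from the domain~$\tilde{D_y^x}$ that $x$ previously exchanged with~$y$ for every $y\in\{p_x\}\cup pseudo\_parents_x$. For such a neighbor~$y$, agent~$a(x)$ possesses the mapping $\tilde{D_y^x}\leftrightarrow D_y$ together with the permutation~$\sigma_y^x$, and can therefore invert the codename to recover $y$'s actual chosen value. This is the advertised leak: values chosen by the parent and pseudo-parents --- all of which are by definition neighbors of~$x$ --- may be learned.

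Finally, I would argue that no other decisions are leaked. Any other codename~$\tilde{z}$ appearing in $x$'s separator corresponds, by the codename-exchange protocol of Algorithm~\ref{algo:P_DPOP} (lines~\ref{algo:P_DPOP:codenames}--\ref{algo:P_DPOP:send_codenames}), to a variable~$z$ that is not a neighbor of~$x$: the mapping between $\tilde{D_z^{\cdot}}$ and $D_z$ was never communicated to~$a(x)$ (it was sent only to $z$'s neighbors over secure channels, and the permutation $\sigma_z^{\cdot}$ was chosen randomly and independently). Hence $a(x)$ cannot invert such a codename to recover $z$'s chosen value. The main obstacle in the proof is to make this last step airtight: one must rule out that by combining the decrypted values of its neighbors with the structural information it holds (its own constraints, the feasibility tables received previously, and the semi-private information of Definition~\ref{def:semi_private}), $a(x)$ could reconstruct the value of a non-neighbor. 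I would handle this by noting that the codenames for~$z$ were sampled uniformly from a large space, independently from the problem, so conditioning on all of $a(x)$'s knowledge leaves the correspondence between $\tilde{D_z^{\cdot}}$ and $D_z$ uniformly distributed over permutations --- no matching can be inferred beyond what semi-private information already allows. Combining the three steps yields the stated \emph{partial} decision privacy guarantee, with the leak confined to the values assigned to neighboring variables.
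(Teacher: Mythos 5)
Your proposal is correct and follows essentially the same route as the paper's own proof: it isolates the DECISION messages as the only channel for decision information, identifies the leak as the values of the parent and pseudo-parents (neighbors whose codename/domain mappings the recipient holds), and argues that non-neighboring variables' values stay hidden because their codenames and secret domain permutations cannot be inverted. Your treatment of the last step (that the uniformly random, independently chosen permutations leave nothing inferable beyond semi-private information) is somewhat more explicit than the paper's one-line remark, but it is the same argument.
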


\begin {proof}
First notice that the algorithm cannot leak any information about the chosen values for variables that are \emph {lower} in the pseudo-tree, since these variables have been projected out of the feasibility messages received. However, during the decision propagation phase, each variable receives a message from its parent that contains the chosen values for its parent and pseudo-parents. The message may also contain codenames for the assignments to other, non-neighboring variables, which the recipient will not be able to decode. Furthermore, domains are shuffled using secret permutations, making it impossible to decode the codename for the value of a non-neighboring variable from its index in the variable's domain. 
\end {proof}

\section {P$^{\sfrac{3}{2}}$-DPOP$^+$: Adding Full Decision Privacy}
\label {sec:P_DPOP_value}

This section presents another variant of the P-DPOP$^+$ algorithm that achieves \emph{full} decision privacy. This results in a novel algorithm, which can be seen as a hybrid between the P-DPOP$^+$ and P$^2$-DPOP~\cite {Leaute09a} algorithms, and is called \emph {P$^{\sfrac{3}{2}}$-DPOP$^+$}.

\begin {algorithm}[b!]
\begin {algorithmic}[1]
\REQUIRE a first temporary DFS tree, a unique ID~$id_x$, a tight strict lower bound on the next unique ID~$id_x^+$, and an upper bound $n^+$ on the total number of variables

\STATE $vector_x \leftarrow [ \underbrace {\overbrace {1, \ldots, 1}^{id_x}, 0, \overbrace {-1, \ldots, -1}^{id_x^+ - id_x}, 1, \ldots, 1}_{n^+} ]$ 	\label {algo:P_DPOP_value:vector}

\vspace {5pt}
\STATE \COMMENT {Exchange public key shares:} 		\label {algo:P_DPOP_value:keys_start}
\STATE $private_x \leftarrow $ generate a private ElGamal key for $x$
\STATE $public_x \leftarrow $ generate a set of $(id_x^+ - id_x + 1)$ public key shares corresponding to $private_x$
\STATE \textbf {for each} $share \in public_x$ \textbf {do} \textsc {ToPrevious}((SHARE, $share$)) as in Algorithm~\ref {algo:routing}
\FOR {$i = 1 \ldots n^+$}
	\STATE Wait for and record one message (SHARE, $share$)
	\STATE \textbf {if} $share \not\in public_x$ \textbf {then} \textsc {ToPrevious}((SHARE, $share$)) as in Algorithm~\ref {algo:routing}
\ENDFOR
\STATE Generate the compound ElGamal public key based on all the public key shares 	\label {algo:P_DPOP_value:keys_end}

\vspace {5pt}
\WHILE {$vector_x \neq \emptyset$} 	\label {algo:P_DPOP_value:while}
	\STATE Choose a new root (Algorithm~\ref {algo:reroot}, Section~\ref {sec:reroot})
	\STATE Construct a new pseudo-tree rooted at the new root (Online Appendix~2)
	\STATE Exchange codenames for $x$ and its domain $D_x$ (Algorithm~\ref {algo:P_DPOP}, lines \ref {algo:P_DPOP:codenames} to~\ref {algo:P_DPOP:send_codenames})\label {algo:P_DPOP_value:codenames}
	\STATE Choose and exchange obfuscation key for $x$ (Algorithm~\ref {algo:P_DPOP}, lines \ref {algo:P_DPOP:got_keys} to~\ref {algo:P_DPOP:send_keys})
	\STATE Propagate feasibility values up the pseudo-tree (Algorithm~\ref {algo:UTILpropagation}, except line~\ref {algo:UTILpropagation:argmin}) 	\label {algo:P_DPOP_value:UTILpropagation}
	\STATE \textbf {if} $x$ is root \textbf {then} Add local constraint $x = x^*$, with $x^*$ from Algorithm~\ref {algo:UTILpropagation}, line~\ref {algo:UTILpropagation:root_value} 	\label {algo:P_DPOP_value:ground}
\ENDWHILE
\end {algorithmic}
\caption {Overall P$^{\sfrac{3}{2}}$-DPOP$^+$ algorithm with full decision privacy, for variable~$x$}
\label {algo:P_DPOP_value}
\end {algorithm}

\subsection {Overview of the Algorithm}
\label {sec:P_DPOP_value_overview}

Algorithm~\ref {algo:P_DPOP_value} patches the decision privacy leak in P-DPOP$^+$ by removing its decision propagation phase. Only the root variable is assigned a value, and in order for all variables to be assigned values, each variable is made root in turn (unless the first feasibility propagation has revealed that the problem is infeasible, in which case the algorithm can terminate early). The intuition behind this P$^{\sfrac{3}{2}}$-DPOP$^+$ algorithm is therefore that P-DPOP$^+$'s bottom-up feasibility propagation phase is repeated multiple times, each time with a different variable~$x$ as the root of the pseudo-tree (lines \ref {algo:P_DPOP_value:while} to~\ref {algo:P_DPOP_value:UTILpropagation}). At the end of each iteration, a constraint $x = x^*$ is added to the problem to enforce consistency across iterations (line~\ref {algo:P_DPOP_value:ground}).

\subsection {Choosing a New Root Variable}
\label {sec:reroot}

To iteratively reroot the pseudo-tree, we propose to use an improved version of the rerooting procedure we initially introduced for the P$^2$-DPOP algorithm \cite {Leaute09a}. This procedure requires that each of the $n$ variables be assigned a unique ID; an algorithm to achieve this is presented in Online Appendix~3. This algorithm reveals to each variable~$x$ its unique ID~$id_x$, as well as a tight strict lower bound on the next unique ID~$id_x^+$ (i.e. the next unique ID equals $id_x^+ + 1$), and an upper bound $n^+$ on the total number of variables. Each variable~$x$ then creates a Boolean vector~$vector_x$ with a single zero entry at the index corresponding to its unique ID~$id_x$ (Algorithm~\ref {algo:P_DPOP_value}, line~\ref {algo:P_DPOP_value:vector}); this vector is then shuffled using a random permutation used to hide the sequence in which variables become roots. 

To keep the permutation secret, the vector is first encrypted using ElGamal encryption (Appendix~\ref {sec:ElGamal}), based on a compound public key jointly produced by the agents (Algorithm~\ref {algo:P_DPOP_value}, lines \ref {algo:P_DPOP_value:keys_start} to~\ref {algo:P_DPOP_value:keys_end}). This asymmetric encryption scheme enables each agent to (re-)encrypt the entries in the vectors using a common public key, such that the decryption can only be performed collaboratively by all the agents, using their respective private keys. 

\begin {algorithm}[htbp]
\begin {algorithmic}[1]
\ENSURE \textsc{ShuffleVectors}() for variable~$x$
\STATE $myID \leftarrow $ large random number 	\label {algo:reroot:myID}
\STATE $p_x \leftarrow$ random permutation of $[ 1 \ldots n^+ ]$

\vspace {5pt}
\STATE \COMMENT {Propagate $x$'s encrypted vector backwards along the circular ordering}
\STATE $vector_x \leftarrow E(vector_x)$ \COMMENT {encrypts the vector using the compound public key}
\STATE \textsc {ToPrevious}((VECT, $myID, vector_x$, 1)) as in Algorithm~\ref {algo:routing} in Appendix~\ref {sec:routing} 		\label {algo:reroot:start1}

\vspace {5pt}
\STATE \COMMENT {Process all received vectors}
\WHILE {\textbf {true}} 
	\STATE Wait for a message (VECT, $id, vector, round$) from the next variable
	
	\vspace {5pt}
	\IF {$round = 1$}
		\STATE \textbf {if} $id \neq myID$ \textbf {then} \textbf {for} $j = (id_x + 1) \ldots id_x^+$ \textbf {do} $vector[j] \leftarrow -1$ 		\label {algo:reroot:addMin1}
		\STATE \textbf {else} $round \leftarrow round + 1$ \COMMENT {$x$'s vector; move to next round} 							\label {algo:reroot:start2}
	\ENDIF
	
	\IF {$round > 1$ and $x$ is the current root} 																		\label {algo:reroot:endi}
		\STATE $round \leftarrow round + 1$ \COMMENT {the root starts each round except the first} 								\label {algo:reroot:starti}
	\ENDIF
	
	\STATE \textbf {if} $round = 3$ \textbf {then} $vector \leftarrow p_x(vector)$ \COMMENT {shuffle the vector} 						\label {algo:reroot:shuffle}
	\IF [done processing $vector_x$] {$round = 4$ and $id = myID$}
		\STATE $vector_x \leftarrow vector$ 																			\label {algo:reroot:end4}
		\STATE \textbf {continue}
	\ENDIF
	
	\vspace {5pt}
	\STATE \COMMENT {Pass on the vector backwards along the circular ordering}
	\STATE $vector \leftarrow E(vector)$ \COMMENT {re-encrypts the vector using the compound public key}
	\STATE \textsc {ToPrevious}((VECT, $id, vector, round$)) as in Algorithm~\ref {algo:routing} in Appendix~\ref {sec:routing}
\ENDWHILE

\vspace{3mm}

\ENSURE \textsc{Reroot}() for variable~$x$
\STATE \textbf {repeat} $entry \leftarrow $ \textsc {Decrypt}$(pop(vector_x))$ \textbf {while} $entry \neq -1$ \COMMENT {as in Algorithm~\ref {algo:collaborative_decryption}}
\STATE \textbf {if} $entry = 0$ \textbf {then} $x$ is the new root
\end {algorithmic}
\caption {Algorithm to choose a new root, for variable~$x$}
\label {algo:reroot}
\end {algorithm}

The agents then proceed as in Algorithm~\ref {algo:reroot}. Each variable~$x$ first starts the procedure \textsc {ShuffleVectors}(), which is run only once --- this is a performance improvement over our previous work \cite {Leaute09a}, where it was performed at each iteration. All the vectors are passed from variable to variable in a round-robin fashion, using a circular message routing algorithm presented in Appendix~\ref {sec:routing}. Each agent applies a secret permutation to each vector to shuffle it. \textsc {ShuffleVectors}() proceeds in four rounds. During round~1 (started on line~\ref {algo:reroot:start1}, Algorithm~\ref {algo:reroot}), each vector makes a full round along the circular ordering, during which each variable~$x$ overwrites some of the entries with~$-1$ (line~\ref {algo:reroot:addMin1}), at the same positions it did for its own $vector_x$ (Algorithm~\ref {algo:P_DPOP_value}, line~\ref {algo:P_DPOP_value:vector}). These $-1$ entries account for the IDs in $[ id_x + 1, id_x^+ ]$ that have not been assigned to any variable (Online Appendix~3). Once $x$ has received back its own $vector_x$, it enters the incomplete round~2 (line~\ref {algo:reroot:start2}) during which $vector_x$ is passed on until it reaches the current root (line~\ref {algo:reroot:endi}). The root then starts round~3 (line~\ref {algo:reroot:starti}), during which each variable~$x$ shuffles each vector using its secret permutation~$p_x$ (line~\ref {algo:reroot:shuffle}). The incomplete round~4 returns the fully shuffled vector to its owner (line~\ref {algo:reroot:end4}). 

To reroot the variable ordering at the beginning of each iteration of P$^{\sfrac {3}{2}}$-DPOP$^+$, each variable~$x$ calls the procedure \textsc {Reroot}(), which removes and decrypts the first element of $vector_x$. Entries that decrypt to~$-1$ correspond to unassigned IDs and are skipped. The single entry that decrypts to~0 identifies the new root. The decryption process (Algorithm~\ref {algo:collaborative_decryption}) is a collaborative effort that involves each variable using its private ElGamal key to partially decrypt the cyphertext, which travels around the circular variable ordering in the same way as the vectors, until it gets back to its sender variable, which can finally fully decrypt it.

\begin {algorithm}[htbp]
\begin{algorithmic}[1]
\ENSURE \textsc{Decrypt}$(e)$ for variable~$x$

\STATE $codename \leftarrow $ large random number used as a secret codename for~$x$
\STATE $codenames_x \leftarrow codenames_x \cup \{ codename \}$
\STATE \textsc{ToPrevious}((DECR, $codename$, $e$)) as in Algorithm~\ref {algo:routing}
\STATE Wait for message (DECR, $codename$, $e'$) from next variable in the ordering
\RETURN decryption of $e$ using $x$'s private key

\vspace{3mm}

\ENSURE \textsc{CollaborativeDecryption}$()$ for variable~$x$

\LOOP
	\STATE Wait for a message (DECR, $c$, $e$) from next variable in the ordering
	\IF {$c \not\in codenames_x$}
		\STATE $e' \leftarrow $ partial decryption of $e$ using $x$'s private key
		\STATE \textsc{ToPrevious}((DECR, $c$, $e'$)) as in Algorithm~\ref {algo:routing}
	\ENDIF
\ENDLOOP
\end{algorithmic}
\caption {Collaborative decryption of a multiply-encrypted cyphertext~$e$}
\label {algo:collaborative_decryption}
\end {algorithm}

\subsection {Algorithm Properties}
\label {sec:P_DPOP_value_properties}

We first analyze the completeness and complexity properties of the P$^{\sfrac{3}{2}}$-DPOP$^{(+)}$ algorithms, and then we move on to their privacy properties. 

\subsubsection {Completeness and Complexity}

\begin {theorem}
\label {thm:P_DPOP_value}
Provided that there are no codename clashes, the P$^{\sfrac{3}{2}}$-DPOP$^+$ algorithm terminates and returns a feasible solution to the DisCSP, if there exists one. 
\end {theorem}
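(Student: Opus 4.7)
The plan is to prove the theorem by an inductive argument that mirrors the structure of Theorem~\ref{thm:P_DPOP} for P-DPOP$^+$, applied iteratively. First I would establish termination. The main \textbf{while} loop (Algorithm~\ref{algo:P_DPOP_value}, line~\ref{algo:P_DPOP_value:while}) runs at most $n$ times, since at each iteration one entry is popped from $vector_x$ and exactly $n$ of the $n^+$ entries decrypt to something other than $-1$ (one per actual variable ID). Within each iteration, root election via Algorithm~\ref{algo:reroot} consists of a bounded number of full circular passes (the four rounds of \textsc{ShuffleVectors}, executed only once up front, plus one \textsc{Decrypt} call per iteration that also completes a full circular pass), pseudo-tree construction terminates by the online appendix, and the bottom-up feasibility propagation of Algorithm~\ref{algo:UTILpropagation} terminates after exactly $(n-1)$ messages by the argument used in Theorem~\ref{thm:P_DPOP}.

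For correctness, I would prove by induction on the iteration count $k$ the invariant: \emph{after $k$ iterations, with roots $r_1,\ldots,r_k$ having chosen values $v_1,\ldots,v_k$, there exists a feasible solution of the original DisCSP that assigns $r_i=v_i$ for every $i\le k$}. The base case $k=0$ holds by the assumption that some feasible solution exists. For the inductive step, suppose the invariant holds after iteration $k$ and a new root $r_{k+1}$ is elected. The problem actually solved at iteration $k+1$ consists of the original constraints augmented with the local equality constraints $r_i=v_i$ added at line~\ref{algo:P_DPOP_value:ground} of previous iterations. By the inductive hypothesis, this augmented problem is feasible. Applying Theorem~\ref{thm:P_DPOP} to this augmented problem with $r_{k+1}$ as the root, the feasibility propagation almost surely produces at $r_{k+1}$ a single-variable table $m(r_{k+1})$ such that at least one entry decrypts to $0$; line~\ref{algo:UTILpropagation:root_value} then selects such an entry as $v_{k+1}$, so the augmented problem together with $r_{k+1}=v_{k+1}$ remains feasible, preserving the invariant.

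After $n$ iterations, every variable has served as root exactly once and has been pinned by a local equality constraint, so the invariant guarantees that the resulting complete assignment is a feasible solution of the original DisCSP. The main obstacle in making this argument rigorous is the claim that the rerooting procedure in Algorithm~\ref{algo:reroot} enumerates each variable exactly once over the $n$ iterations. This amounts to checking that: (i)~each variable's initial $vector_x$ contains exactly one $0$ (at position $id_x$) and $-1$'s in the exactly same positions across all variables (namely the $(n^+-n)$ unused IDs), which follows from the ID-assignment invariants provided by Online Appendix~3 that each $id_x$ is unique and that the $-1$-ranges $[id_x+1,id_x^+]$ cover precisely the unused IDs; (ii)~the secret permutations applied during round~3 of \textsc{ShuffleVectors} are identical across all vectors, which holds because each agent $p_x$ applies its own fixed permutation $p_x$ to every vector that passes through it; and (iii)~ElGamal (re-)encryption and the collaborative decryption protocol in Algorithm~\ref{algo:collaborative_decryption} correctly recover the plaintext values, which is the standard correctness property of the threshold ElGamal scheme recalled in Appendix~\ref{sec:ElGamal}. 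Combining these three facts, every variable decrypts a common shuffled sequence of $\{0,1,-1\}^{n^+}$ such that each iteration selects a unique variable as root, and no variable is ever rooted twice.
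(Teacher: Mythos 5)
Your proof is correct and takes essentially the same approach as the paper's: it reduces each iteration to Theorem~\ref{thm:P_DPOP}, argues cross-iteration consistency via the grounding constraints added at line~\ref{algo:P_DPOP_value:ground} of Algorithm~\ref{algo:P_DPOP_value}, and establishes that the rerooting procedure selects each variable exactly once using the uniqueness of the IDs, the common positions of the $-1$ entries after round~1, and the fact that all vectors undergo the same sequence of secret permutations. Your explicit induction invariant and the remark on ElGamal decryption correctness merely flesh out steps the paper treats as immediate.
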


\begin {proof}
On the basis of Theorem~\ref {thm:P_DPOP}, it remains to prove that the rerooting Algorithm~\ref {algo:reroot} terminates and is correct, and that the overall algorithm remains correct. The latter is easy to prove: at each iteration, a feasible value is found for the root variable (if there exists one), and that value is necessarily consistent with the chosen assignments to previous roots since these assignments are enforced by new, additional constraints (Algorithm~\ref {algo:P_DPOP_value}, line~\ref {algo:P_DPOP_value:ground}). 

When it comes to the rerooting procedure, the unique ID assignment algorithm (Online Appendix~3) ensures that each of the $n$ variables gets a unique ID in $0 \ldots (n^+ - 1)$. Therefore, each variable has a 0~entry at a unique position in its vector (Algorithm~\ref {algo:P_DPOP_value}, line~\ref {algo:P_DPOP_value:vector}). Round~1 of Algorithm~\ref {algo:reroot} also makes sure that all vectors have $-1$~entries at the same positions. This ensures that exactly one variable will become the new root at each iteration, since all vectors are applied the same sequence of permutations, and no variable will be root twice. 
\end {proof} 

In terms of complexity, P$^{\sfrac{3}{2}}$-DPOP$^+$ proceeds in a similar way to P-DPOP$^+$ (Section~\ref {sec:P_DPOP_properties}), except that the bottom-up feasibility propagation phase is repeated $n$~times (each time with a different root variable). The overall complexity in information exchange therefore becomes $O(n^2 \cdot D_{\max}^{sep_{\max}'})$, where $sep_{\max}'$ is the maximum separator size over all variables, \emph {and} over all iterations, which therefore is likely to be higher than the exponent for P-DPOP$^+$. The information exchanged by the rerooting protocol is negligible in comparison. The runtime complexity (measured in number of constraint checks) is also $O(n^2 \cdot D_{\max}^{sep_{\max}'})$, but the memory complexity is only $O(n \cdot D_{\max}^{sep_{\max}'})$, because removing the decision propagation phase makes it become unnecessary to compute and record $x^*(\tilde {p_x}, \cdot)$ (Algorithm~\ref {algo:UTILpropagation}, line~\ref {algo:UTILpropagation:argmin}). Our experimental results on graph coloring benchmarks (Section~\ref {sec:coloring}) suggest that the median value of $sep'_{\max}$ may only be greater than the median value of $sep_{\max}$ in P-DPOP$^+$ by a small multiplicative factor. In terms of number of ElGamal cryptographic operations, the rerooting procedure requires a total of $n(3n - 1)n^+ \in O(n^3)$ encryptions: each of the $n$~variables (re-)encrypts $(3n - 1)$ vectors of size~$n^+$ (each variable's vector performs 3 full rounds, except for the root's vector, which performs only 2 full rounds), with $n^+ \leq n + n \cdot 2incr_{\min}$, where $incr_{\min}$ is a constant input parameter of the algorithm. The procedure also requires a total of $n^2 n^+ \in O(n^3)$ collaborative decryptions: each of the $n$~variables (partially) decrypts $n$~vectors or size~$n^+$.

\subsubsection {Full Agent Privacy}

\begin {theorem}
The P$^{\sfrac{3}{2}}$-DPOP$^{(+)}$ algorithms guarantee full agent privacy. 
\end {theorem}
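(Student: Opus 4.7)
The plan is to reduce the claim to Theorem~\ref{thm:agent_privacy:P_DPOP} for everything that P$^{\sfrac{3}{2}}$-DPOP$^{(+)}$ inherits from P-DPOP$^{(+)}$, and then separately verify the three genuinely new ingredients: the initial exchange of ElGamal public-key shares (Algorithm~\ref{algo:P_DPOP_value}, lines~\ref{algo:P_DPOP_value:keys_start}--\ref{algo:P_DPOP_value:keys_end}), the rerooting machinery built on \textsc{ShuffleVectors}() and \textsc{Reroot}() (Algorithm~\ref{algo:reroot}), and the collaborative decryption (Algorithm~\ref{algo:collaborative_decryption}). As in the proof of Theorem~\ref{thm:agent_privacy:P_DPOP}, it suffices to rule out the two possible leak modes: that an agent~$A$ is required to exchange a message directly with a non-neighbor~$B$, and that some message received by~$A$ has content that identifies~$B$.

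For the first mode, I would observe that all new messages in P$^{\sfrac{3}{2}}$-DPOP$^+$ are sent either along pseudo-tree edges (as in Algorithm~\ref{algo:UTILpropagation}, already covered) or via the \textsc{ToPrevious}() primitive of Appendix~\ref{sec:routing}. It is enough to invoke the property of that routing primitive that every hop is between neighboring agents in the constraint graph: the ring ordering is realized by chaining neighbor-to-neighbor transmissions, so no non-neighbors ever exchange a message directly. For the second mode, I would walk through the new message types one by one. Public-key shares are opaque group elements generated locally by each agent; they carry no reference to any variable name or agent identifier. The vectors circulated by \textsc{ShuffleVectors}() contain only the integers $\{-1,0,1\}$ (possibly in ElGamal-encrypted form), tagged with a random identifier $myID$ chosen afresh by the sender (Algorithm~\ref{algo:reroot}, line~\ref{algo:reroot:myID}); no variable name, domain value, or agent identifier appears in the clear or in an encryption whose plaintext would yield one. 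The same is true of \textsc{CollaborativeDecryption}(), whose messages carry a random codename and a ciphertext. Finally, the per-iteration bottom-up phase inside the while-loop (Algorithm~\ref{algo:P_DPOP_value}, line~\ref{algo:P_DPOP_value:UTILpropagation}) is literally Algorithm~\ref{algo:UTILpropagation}, wrapped by the same codename and domain-obfuscation steps inherited from Algorithm~\ref{algo:P_DPOP}, so its treatment in Theorem~\ref{thm:agent_privacy:P_DPOP} applies verbatim; the extra grounding constraint $x=x^*$ at the root (line~\ref{algo:P_DPOP_value:ground}) is local to the root agent and never transmitted.

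The step I expect to be the most delicate is arguing that the ring-routed traffic does not accidentally tie a message back to a particular non-neighboring agent through its metadata. Concretely, one has to check that (i)~the tags $myID$ and the codenames used in \textsc{Decrypt} are freshly drawn random numbers independent of the agent's identity, (ii)~the uniform vector length $n^+$ (an input parameter, not the true count $n$) and the uniform use of re-encryption at every hop prevent an intermediate agent from distinguishing ``this vector originated from my neighbor'' versus ``this vector originated from a distant agent,'' and (iii)~the round counter in (VECT,$\cdot$) messages only signals the protocol phase, not the identity of the originator. Once these points are verified, assembling the per-component arguments into a single chain that exhausts every message type sent by P$^{\sfrac{3}{2}}$-DPOP$^{(+)}$ completes the proof.
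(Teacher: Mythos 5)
Your proposal is correct and follows essentially the same route as the paper: the paper likewise reduces the inherited phases to the P-DPOP$^{(+)}$ agent-privacy argument and then checks, message type by message type, that the key-share exchange, the \textsc{ShuffleVectors}/\textsc{Reroot} traffic (random $myID$, encrypted vector, round number), and the collaborative-decryption messages (random codename plus ciphertext) carry nothing identifying a non-neighbor, deferring the neighbor-only routing and unique-ID assignment to the appendices. The additional checks you flag as delicate are exactly the ones the paper's proof asserts.
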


\begin {proof}
The unique ID assignment and circular routing algorithms guarantee full agent privacy, as demonstrated respectively in Online Appendix~3 and Appendix~\ref {sec:routing}. 

\begin{description}
\item [Pseudo-tree rerooting (Algorithm~\ref {algo:reroot})] The messages sent by \textsc {ShuffleVectors()} contain a variable ID, a vector of ElGamal cyphertexts, and a round number. The ID is used by the recipient to detect whether the vector is its own vector; it is a large random number chosen by the owner agent (Algorithm~\ref {algo:reroot}, line~\ref {algo:reroot:myID}), and therefore it cannot be linked to the identity of this owner agent by any other agent. The ElGamal vector and the round number also do not contain any information that could be used to identify the agent. Also note that the procedure used to exchange ElGamal public key shares (Algorithm~\ref {algo:P_DPOP_value}, lines \ref {algo:P_DPOP_value:keys_start} to~\ref {algo:P_DPOP_value:keys_end}) does not leak any information about agents' identities. The \textsc {Reroot()} procedure then makes use of the collaborative decryption algorithm, whose properties in terms of agent privacy are discussed below. 

\item [Collaborative decryption (Algorithm~\ref {algo:collaborative_decryption})] The procedure exchanges messages that contain an ElGamal cyphertext, and a codename used like the variable ID in Algorithm~\ref {algo:reroot}. This codename is similarly set to a large random number chosen by the current agent, and cannot be linked to the identity of this agent by any other agent. 
\end{description}This concludes the proof that the P$^{\sfrac{3}{2}}$-DPOP$^{(+)}$ algorithms guarantee agent privacy. 
\end {proof}

\subsubsection {Partial Topology Privacy}
\label {sec:topo:p1.5_dpop}

The topology privacy in P$^{\sfrac{3}{2}}$-DPOP$^{(+)}$ is only slightly worse than in P-DPOP$^{(+)}$. 

\begin {theorem}
The P$^{\sfrac{3}{2}}$-DPOP$^{(+)}$ algorithms guarantee \emph{partial} topology privacy. Each variable unavoidably discovers the total number of variables in the problem, and \emph {might} also discover a lower bound on a neighbor variable's degree in the constraint graph. The advantages of P$^{\sfrac{3}{2}}$-DPOP$^+$ over P$^{\sfrac{3}{2}}$-DPOP are the same as P-DPOP$^+$ over P-DPOP. 
\end {theorem}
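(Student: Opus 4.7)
The plan is to reduce the claim to the topology analysis already established for P-DPOP$^{(+)}$ in Theorem~\ref{thm:topo_privacy:P_DPOPmin} by isolating the new machinery of P$^{\sfrac{3}{2}}$-DPOP$^{(+)}$ and bounding the extra topology leaks it introduces. The new machinery consists of (i)~the unique ID assignment (Online Appendix~3), (ii)~the ElGamal key-share exchange (Algorithm~\ref{algo:P_DPOP_value}, lines~\ref{algo:P_DPOP_value:keys_start}--\ref{algo:P_DPOP_value:keys_end}), (iii)~the rerooting protocol \textsc{ShuffleVectors}/\textsc{Reroot} (Algorithm~\ref{algo:reroot}) together with collaborative decryption (Algorithm~\ref{algo:collaborative_decryption}), and (iv)~the repetition of the bottom-up feasibility propagation with successive root variables.

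First I would show that phases (i)--(iii) only leak the total number of variables~$n$, and nothing more about the constraint graph topology. In each of these phases, every agent processes exactly one object (an ID, a key share, or a vector) per variable in the circular ordering, each carrying either a fresh ElGamal cyphertext or a random identifier (line~\ref{algo:reroot:myID}) uncorrelated with the owner agent's identity. Counting how many distinct such objects come around yields~$n$; however, the independent secret permutation applied by every agent inside \textsc{ShuffleVectors}, together with the semantic security of ElGamal, hides which vector belongs to which variable and where in the graph that variable sits. So no adjacency, degree or cycle information beyond the count~$n$ can be extracted from these phases.

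Next I would handle phase~(iv) by reusing Theorem~\ref{thm:topo_privacy:P_DPOPmin} inside each iteration. Per iteration, the feasibility propagation is identical to P-DPOP$^{(+)}$, so its partial topology guarantees carry over verbatim; in particular, a variable can at most infer a lower bound on a neighbor's degree, or on the number of variables in its branch, from back-edge codenames it cannot decrypt. The main obstacle is ruling out cross-iteration aggregation: a curious variable~$x$ could, in principle, combine observations across the $n$ iterations (each with a different root) and try to triangulate finer information about its branch. I would address this by noting that codenames, obfuscation keys and domain permutations are redrawn independently at every iteration (Algorithm~\ref{algo:P_DPOP_value}, line~\ref{algo:P_DPOP_value:codenames}), so a codename appearing in one iteration cannot be linked to any codename in another; consequently per-iteration lower bounds on a neighbor's degree cannot be combined into a strictly tighter one, and no new adjacency or cycle outside~$x$'s current branch becomes discoverable.

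Finally, the inheritance of the P-DPOP$^+$ advantage over P-DPOP is immediate: within each iteration, using distinct codenames per (pseudo-)child gives exactly the same refinement as in the single-root case, because the per-iteration pseudo-tree and feasibility propagation are the very same objects analyzed in Theorem~\ref{thm:topo_privacy:P_DPOPmin}. Combining the three observations above then yields the partial topology privacy claim, with the only additional leak relative to P-DPOP$^{(+)}$ being the unavoidable disclosure of~$n$.
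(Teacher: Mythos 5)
Your proposal is correct and follows essentially the same route as the paper: isolate the machinery new to P$^{\sfrac{3}{2}}$-DPOP$^{(+)}$ (key-share exchange, rerooting with encrypted shuffled vectors and random IDs, collaborative decryption), argue each carries no topological information beyond the unavoidable disclosure of the number of variables, and inherit the per-iteration feasibility-propagation analysis from Theorem~\ref{thm:topo_privacy:P_DPOPmin}. Your explicit treatment of cross-iteration aggregation (fresh codenames, keys and permutations per iteration preventing linkage) is a reasonable addition that the paper leaves implicit, though note the theorem only claims a lower bound \emph{might} be discovered, so even a tightened bound across iterations would not contradict the statement.
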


\begin {proof}
Since there is one feasibility propagation phase per variable in the problem, the total number of variables inevitably becomes public. The following analyzes the topology privacy properties of each phase of P$^{\sfrac{3}{2}}$-DPOP$^{(+)}$ that is not already present in P-DPOP$^{(+)}$, except for unique ID assignment (Online Appendix~3) and secure message routing (Appendix~\ref {sec:routing}). 
\begin{description}
\item [Exchange of ElGamal key shares (Algorithm~\ref {algo:P_DPOP_value}, lines \ref {algo:P_DPOP_value:keys_start}--\ref {algo:P_DPOP_value:keys_end})] 
The messages containing ElGamal key shares do not contain any information that could be used to make inferences about the topology of the constraint graph. 

\item [Pseudo-tree rerooting (Algorithm~\ref {algo:reroot})] 
Each message travels along a circular variable ordering using the message routing algorithm in Appendix~\ref {sec:routing}, and contains: 
\begin{itemize}
\item a vector that is encrypted (and re-encrypted after each operation) and that therefore cannot provide any topological information; 
\item an $id$ that identifies the owner of the vector; being a secret, large random number, only the owner of the vector can identify itself; 
\item a round number can take the following values: 
	\begin{itemize}
	\item $round = 1$ only indicates that the vector is being modified, each variable setting in turn some of the values to~$-1$; 
	\item $round = 2$ only indicates that the vector is being sent to the root of the pseudo-tree. This does not happen for the vector of the (unknown) root; 
	\item $round = 3$ only indicates that the vector is being shuffled by each variable; 
	\item $round = 4$ only indicates that the vector is on its way back to its owner. This does not happen for the vector belonging to the (unknown) root. 
	\end{itemize}
\end{itemize}
\textsc {Reroot()} then uses the decryption algorithm whose properties are described below. 

\item [Collaborative decryption (Algorithm~\ref {algo:collaborative_decryption})] 
DECR messages are passed along the circular variable ordering, containing a secret codename for the original sender variable, which is the only variable capable of deciphering this codename. The last part of the message payload is an ElGamal cyphertext, which remains encrypted until it reaches back the original sender, and therefore does not leak any topological information. 

\end{description}
This concludes the proof that P$^{\sfrac{3}{2}}$-DPOP$^{(+)}$ guarantees \emph {partial} topology privacy. 
\end {proof}

\subsubsection {Partial Constraint Privacy}
\label {sec:constraint_privacy:obfuscation_rerooting}

The constraint privacy properties of the P$^{\sfrac{3}{2}}$-DPOP$^{(+)}$ algorithms differ from those of P-DPOP$^{(+)}$, because the former protect decision privacy (which benefits constraint privacy), but also reveal the total number of variables in the problem (which hurts constraint privacy). 

\begin {theorem}
The P$^{\sfrac{3}{2}}$-DPOP$^{(+)}$ algorithms guarantee \emph {partial} constraint privacy. The leaks are the same as in P-DPOP$^{(+)}$ (Section~\ref {sec:constraint_privacy:obfuscation}), but they happen less frequently. 
\end {theorem}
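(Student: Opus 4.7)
The plan is to reduce the claim to the already-established constraint-privacy analysis of P-DPOP$^{(+)}$ (Section~\ref {sec:constraint_privacy:obfuscation}) by arguing separately about (i)~the feasibility-propagation phases that are reused, (ii)~the new rerooting machinery, and (iii)~the elimination of the decision-propagation phase that is responsible for the "exception" leak in the P-DPOP$^{(+)}$ proof.

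First I would observe that P$^{\sfrac{3}{2}}$-DPOP$^{(+)}$ invokes, at each of the $n$~iterations of the loop on line~\ref {algo:P_DPOP_value:while}, exactly the same UTIL-propagation of Algorithm~\ref {algo:UTILpropagation} that is used by P-DPOP$^{(+)}$. Hence every leak analyzed in the single-variable and multi-variable cases of the proof in Section~\ref {sec:constraint_privacy:obfuscation} remains \emph {possible}, once per iteration: an agent may still be able to extract an upper bound on a descendant subtree's number of constraint violations (single-variable messages), and to compare relative feasibility values across assignments of its own variable when another codename is fixed (multi-variable messages). The only new local operation in each iteration is the addition of the grounding constraint $x = x^*$ on line~\ref {algo:P_DPOP_value:ground}, which is a unary constraint kept entirely inside the new root's own local store and therefore reveals nothing to any other agent.

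Second I would argue that the genuinely new machinery --- exchange of ElGamal key shares (lines~\ref {algo:P_DPOP_value:keys_start}--\ref {algo:P_DPOP_value:keys_end}), the \textsc {ShuffleVectors()} and \textsc {Reroot()} procedures of Algorithm~\ref {algo:reroot}, and the \textsc {CollaborativeDecryption()} of Algorithm~\ref {algo:collaborative_decryption} --- carries no constraint information at all. Each such message consists only of a random codename or owner-chosen $id$, a round number, and a ciphertext whose plaintext is a $\{-1,0,1\}$-indicator vector of variable positions; none of these quantities depends on any constraint~$c \in \mathcal {C}$, so no constraint leak can be introduced by them. The semantic security of the compound ElGamal scheme is invoked here in the standard way, as already used in the full-decision-privacy argument.

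Third, and this is the essential ingredient, I would combine the above with the full decision-privacy property of P$^{\sfrac{3}{2}}$-DPOP$^{(+)}$: the algorithm has no VALUE/DECISION propagation phase, so no agent ever learns the final assignment~$\tilde {y_i^x}^*$ of a non-neighbor. The "exception" case identified in the P-DPOP$^{(+)}$ proof --- where a recipient~$p_x$ of a multi-variable message $m(\tilde {p_x},\tilde {y_i^x})$ deduces the obfuscation key $key_{y_i}^x(\tilde {y_i^x}^*)$ from a known feasible joint assignment, and thereby gains the ability to separate feasible from infeasible entries of that column --- therefore cannot be triggered in P$^{\sfrac{3}{2}}$-DPOP$^{(+)}$. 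Quantitatively, what was possible in \emph {every} feasibility propagation of P-DPOP$^{(+)}$ is now possible only when $p_x$ happens to own~$\tilde {y_i^x}$ itself (which reduces to semi-private information), so the frequency of the leak is strictly lower.

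The main obstacle I anticipate is the bookkeeping over the $n$~iterations: I must check that information accumulated across iterations cannot be cross-combined to reconstruct what a single iteration would have hidden. This follows because each iteration uses freshly drawn codenames, freshly drawn obfuscation keys~$key_{y_i}^x$, and a pseudo-tree structure that is independent of previous iterations, so the obfuscated entries obtained at distinct iterations are information-theoretically independent conditional on the public information. Consequently, the union of the per-iteration leaks is no larger than the leak profile of P-DPOP$^{(+)}$ applied iteratively, minus the decision-propagation-induced exception; this is exactly the statement of the theorem.
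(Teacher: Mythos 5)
Your overall strategy is the same as the paper's: reduce to the constraint-privacy analysis of P-DPOP$^{(+)}$, observe that the rerooting and decryption machinery carries no constraint information, and locate the improvement in the fact that full decision privacy disables the ``exception'' of the multi-variable case (the recipient can no longer learn $\tilde {y_i^x}^*$ and hence cannot recover $key_{y_i}^x (\tilde {y_i^x}^*)$ from a known feasible joint assignment). Your additional remarks on the locality of the grounding constraint $x = x^*$ and on the information-theoretic independence of iterations (fresh codenames and fresh obfuscation keys each time) are sensible and only left implicit in the paper.

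Two points are off, however. First, you characterize the residual exception as arising ``only when $p_x$ happens to own $\tilde {y_i^x}$ itself,'' which cannot occur (the pseudo-parent $y_i$ is distinct from the parent $p_x$) and in any case is not the right condition. The paper identifies the surviving leak precisely: it requires (1) the non-plus variant P$^{\sfrac{3}{2}}$-DPOP, so that $y_i$ sends the \emph{same} codename to all its neighbors and $p_x$ can resolve $\tilde {y_i}$ whenever (2) $y_i$ is a \emph{neighbor} of $p_x$, and (3) $\tilde {y_i^x}^*$ is semi-private information to $p_x$ (inferable from $p_x$'s own constraints and chosen value, e.g.\ via an equality constraint). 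Your version both misstates when the leak survives and misses that it is specifically the plus variant's per-neighbor codenames that close it. Second, your concluding claim that ``the union of the per-iteration leaks is no larger than the leak profile of P-DPOP$^{(+)}$'' is too strong: P$^{\sfrac{3}{2}}$-DPOP$^{(+)}$ necessarily reveals the total number of variables (one feasibility propagation per variable), which \emph{hurts} constraint privacy in the single-variable case --- a recipient may now be able to deduce that a child is a leaf, so that the child's single-variable FEAS message is known to describe that child's local constraints alone, a deduction that P-DPOP$^{(+)}$'s topology privacy previously blocked. The paper's proof explicitly accounts for this countervailing effect; your argument should too before asserting a uniform improvement.
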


\begin{proof}
Single-variable feasibility messages leak the same amount of constraint privacy as in P-DPOP$^{(+)}$; notice however that, since the P$^{\sfrac{3}{2}}$-DPOP$^{(+)}$ algorithms now reveal the total number of variables, in some circumstances it may be possible for a variable to discover that a child is a leaf, and that the feasibility message it sends therefore contains information about its local subproblem only. However, multi-variable feasibility messages leak potentially much less information than in P-DPOP$^{(+)}$: consider again the simpler and non-restrictive case of a two-variable message $m(\tilde {p_x}, \tilde {y_i^x})$ received by~$\tilde {p_x}$. Because P$^{\sfrac{3}{2}}$-DPOP$^{(+)}$ now protects decision privacy, $\tilde {p_x}$ no longer discovers the value~$\tilde {y_i^x}^*$ chosen for~$\tilde {y_i^x}$, and is therefore no longer able to infer which of the entries corresponding to $\tilde {p_x} = \tilde {p_x}^*$ decrypts to~$0$. 

One exception is when the following three conditions simultaneously hold: 1)~P$^{\sfrac{3}{2}}$-DPOP is used, 2) the codename~$\tilde {y_i^x}$ refers to a variable~$y_i^x$ that is a neighbor of~$\tilde {p_x}$, and 3) $\tilde {y_i^x}^*$ is semi-private information to~$\tilde {p_x}$; then $\tilde {p_x}$ will still discover~$\tilde {y_i^x}^*$, and will be able to make the same inferences as in P-DPOP$^{(+)}$. If the first condition is not satisfied, i.e. P$^{\sfrac{3}{2}}$-DPOP$^+$ is used instead of P$^{\sfrac{3}{2}}$-DPOP, then $\tilde {p_x}$ will not be able to link the codename~$\tilde {y_i^x}$ to any known variable. This is also the case if P$^{\sfrac{3}{2}}$-DPOP is used, but the second condition does not hold. Finally, if the first two conditions hold, $\tilde {p_x}$ will only be able to discover~$\tilde {y_i^x}^*$ if it is semi-private information, i.e. if it can infer it only from its knowledge of the problem, and of its own chosen value~$\tilde {p_x}^*$. 
\end {proof}

\subsubsection {Full Decision Privacy}

\begin {theorem}
\label {thm:full_decision}
The P$^{\sfrac{3}{2}}$-DPOP$^{(+)}$ algorithms guarantee full decision privacy. 
\end {theorem}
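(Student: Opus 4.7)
The plan is to prove the theorem by enumerating every communication channel through which a decision value could conceivably reach a non-owner agent, and then showing that each such channel is either absent or fully obfuscated. The starting observation is that P$^{\sfrac{3}{2}}$-DPOP$^{(+)}$ was obtained from P-DPOP$^{(+)}$ precisely by deleting the top-down DECISION propagation phase (Algorithm~\ref{algo:P_DPOP}, lines~\ref{algo:P_DPOP:VALUEpropagation_start}--\ref{algo:P_DPOP:VALUEpropagation_end}), which was the unique source of the partial decision-privacy leak in P-DPOP$^{(+)}$. What remains to rule out is any indirect leak via the other surviving phases: the preliminary phases (unique-ID assignment, circular routing, ElGamal key share exchange), the iterated bottom-up feasibility propagation (Algorithm~\ref{algo:UTILpropagation}), the rerooting procedure (Algorithm~\ref{algo:reroot}), and the local grounding step that adds the constraint $x = x^*$ (Algorithm~\ref{algo:P_DPOP_value}, line~\ref{algo:P_DPOP_value:ground}).

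The preliminary phases can be dispatched by invoking the results of Online Appendix~3 and Appendix~\ref{sec:routing} for unique IDs and routing, and by noting that ElGamal key shares are public cryptographic material independent of any chosen assignment. For the rerooting procedure, the vectors exchanged in \textsc{ShuffleVectors}() carry only ElGamal cyphertexts, a random identifier, and a round counter, none of which encodes an assignment; \textsc{Reroot}() decrypts only entries that are either $-1$ or~$0$ and, by Theorem~\ref{thm:P_DPOP_value}, these entries only identify the next root rather than its chosen value. The grounding step $x = x^*$ is added to $a(x)$'s local problem and is never transmitted on the wire, so it cannot leak directly.

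The main obstacle is the subtle indirect channel: after $x$ has been root and adopted $x = x^*$, every future feasibility message that $a(x)$ or any ancestor-subtree containing $x$ sends is implicitly restricted to entries consistent with $x^*$, which in principle could let an ancestor read off $x^*$ by observing which slice of the table is ``alive''. Here I would argue that this leak is fully neutralized by three layered mechanisms already present in P-DPOP$^{(+)}$ but now applied afresh each iteration. First, at every new rooting the codename exchange of Algorithm~\ref{algo:P_DPOP_value}, line~\ref{algo:P_DPOP_value:codenames} generates entirely new random codenames $\tilde{x^{y_i}}$ and a new secret domain permutation $\sigma_x^{y_i}$, so the recipient cannot link any domain slot across iterations nor to the cleartext value $x^*$. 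Second, the obfuscation keys (line~\ref{algo:P_DPOP:key}) are refreshed each iteration, so positive entries are masked by independent large random offsets, preventing cross-iteration correlation of infeasibility patterns. Third, by the constraint-privacy analysis of Section~\ref{sec:constraint_privacy:obfuscation_rerooting}, a recipient of a multi-variable feasibility message can no longer identify which entry decrypts to~$0$, because the chosen value of the unknown covariate is not disclosed; thus it cannot even verify hypotheses about which row of the table has been ``killed'' by $x = x^*$.

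Finally I would address the semi-private caveat: whatever an agent can deduce from its own chosen value together with its initial knowledge of its constraints is, by Definition~\ref{def:semi_private}, explicitly excluded from the privacy guarantee, so such inferences do not contradict the claim. Combining these components yields that no agent can extract the value assigned to a non-owned variable from any message it legitimately receives in P$^{\sfrac{3}{2}}$-DPOP$^{(+)}$, beyond semi-private information, establishing full decision privacy.
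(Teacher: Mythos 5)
Your proof is correct and follows essentially the same route as the paper's: the direct leak is eliminated by deleting the top-down decision propagation phase, and the remaining indirect channel---inferring a previous root's grounded value from subsequent feasibility messages---is blocked because each iteration uses fresh codenames, fresh domain permutations, and fresh obfuscation keys, so messages across iterations are not comparable. Your version is more systematic (explicitly enumerating the preliminary phases, rerooting, and the grounding step), but these additions only make explicit what the paper delegates to its other theorems or leaves implicit.
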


\begin {proof}
The leak of decision privacy in P-DPOP$^{(+)}$ is fixed by removing the decision propagation phase. Instead, the variable ordering is rerooted, and the feasibility propagation phase is restarted. It is not possible to compare the feasibility messages received from one iteration to the next to infer the decision that has been made at the previous iteration: the messages are not comparable, since different codenames and obfuscation keys are used. 
\end {proof}

\section {P$^2$-DPOP$^+$: Adding Full Constraint Privacy}
\label {sec:ElGamal_UTILpropagation}

We now describe how the previous, non-fully secure obfuscation scheme can be replaced with ElGamal homomorphic encryption (Appendix~\ref {sec:ElGamal}) to achieve full constraint privacy, which corresponds to the original P$^2$-DPOP algorithm \cite {Leaute09a}, improved by the use of multiple codenames. An important limitation of the ElGamal scheme is that it is not \emph {fully homomorphic}: it is possible to compute the \emph {OR} of two encrypted Booleans, but it is only possible to compute the \emph {AND} of an encrypted Boolean with a \emph {cleartext} Boolean. As a consequence, the bottom-up feasibility propagation has to be performed on a variable ordering such that each variable can have only \emph {one} child, i.e. a \emph {linear} variable ordering (Figure~\ref {fig:circular_ordering}), using the message routing algorithm in Appendix~\ref {sec:routing}. Otherwise, in a pseudo-tree variable ordering, a variable with two children would not be able to join the two encrypted feasibility messages sent by the children. This could be addressed using the fully homomorphic encryption scheme by \citet {Gentry09}, however it is unclear whether this scheme would be practically applicable and would have sufficient performance. 

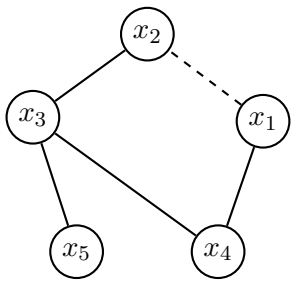
\begin{figure}[h]
\begin {center}
\begin {tikzpicture} [scale=.8]

\tikzstyle{var}=[circle, draw = black, thick, fill = white, minimum size = 0.7cm, inner sep=0pt, draw]

\node[var] (x1) at (16:2) {$x_1$};
\node[var] (x2) at (90:2) {$x_2$};
\draw (x1) -- (x2) [thick,dashed];

\node[var] (x3) at (162:2) {$x_3$};
\draw (x2) -- (x3) [thick];

\node[var] (x5) at (234:2) {$x_5$};
\draw (x3) -- (x5) [thick];

\node[var] (x4) at (306:2) {$x_4$};
\draw (x3) -- (x4) [thick];
\draw (x4) -- (x1) [thick];

\end {tikzpicture}

\caption{The (counter-clock-wise) circular variable ordering corresponding to Figure~\ref {fig:dfs}. }
\label{fig:circular_ordering}
\end {center}
\end{figure}

\begin{figure}[htbp]

\begin {flushleft}
~~~~~~~~~
\begin{tabular}{|c||c|c|c|}
\multicolumn {4}{c}{$x_3 \rightarrow x_2$}	\\
\hline
$x_2$	&	$R$				&	$B$				& 	$G$ 	\\
\hline \hline
		&	$\mathtt {true}$		&	$\mathtt {false}$	&	$\mathtt {true}$	\\
\hline
\end{tabular}
~~~~
\begin{tabular}{|c||c|c|c|}
\multicolumn {4}{c}{$x_4 \rightarrow x_5$}	\\
\hline
		&	\multicolumn{3}{c|}{$x_2$}	\\
$x_3$	&	$R$				&	$B$				& 	$G$ 	\\
\hline \hline
$R$		&	$\mathtt {true}$		&	$\mathtt {false}$	&	$\mathtt {true}$	\\
\hline
$B$		&	$\mathtt {true}$		&	$\mathtt {true}$		&	$\mathtt {true}$	\\
\hline
$G$		&	$\mathtt {true}$		&	$\mathtt {true}$		&	$\mathtt {true}$	\\
\hline
\end{tabular}
\end {flushleft}

\begin{center}
\begin {tikzpicture}

\tikzstyle{var}=[circle, draw = black, thick, fill = white, minimum size = 0.7cm, inner sep=0pt, draw]

\node[var] (x1) at (0, 0) {$x_1$};
\node[var] (x4) at (-2, 0) {$x_4$};
\draw[->] (x1) .. controls (-1, -.5) .. (x4) [thick];

\node[var] (x5) at (-4, 0) {$x_5$};
\draw[->] (x4) .. controls (-3, .5) .. (x5) [thick];

\node[var] (x3) at (-6, 0) {$x_3$};
\draw[->] (x5) .. controls (-5, -.5) .. (x3) [thick];

\node[var] (x2) at (-8, 0) {$x_2$};
\draw[->] (x3) .. controls (-7, .5) .. (x2) [thick];

\end {tikzpicture}

\end{center}

\begin {flushright}
\begin{tabular}{|c||c|c|c|}
\multicolumn {4}{c}{$x_5 \rightarrow x_3$}	\\
\hline
		&	\multicolumn{3}{c|}{$x_2$}	\\
$x_3$	&	$R$				&	$B$				& 	$G$ 	\\
\hline \hline
$R$		&	$\mathtt {true}$		&	$\mathtt {false}$	&	$\mathtt {true}$	\\
\hline
$B$		&	$\mathtt {true}$		&	$\mathtt {true}$		&	$\mathtt {true}$	\\
\hline
$G$		&	$\mathtt {false}$	&	$\mathtt {false}$	&	$\mathtt {false}$	\\
\hline
\end{tabular}
~~~~
\begin{tabular}{|c||c|c|c|}
\multicolumn {4}{c}{$x_1 \rightarrow x_4$}	\\
\hline
		&	\multicolumn{3}{c|}{$x_2$}	\\
$x_4$	&	$R$				&	$B$				& 	$G$ 	\\
\hline \hline
$R$		&	$\mathtt {true}$		&	$\mathtt {true}$		&	$\mathtt {true}$	\\
\hline
$B$		&	$\mathtt {true}$		&	$\mathtt {true}$		&	$\mathtt {false}$	\\
\hline
$G$		&	$\mathtt {true}$		&	$\mathtt {false}$	&	$\mathtt {true}$	\\
\hline
\end{tabular}
~~~~~
\end {flushright}

\begin{center}
\caption{Multiparty dynamic programming computation (in cleartext) of a feasible value for variable~$x_2$, using a linear variable ordering based on Figure~\ref{fig:circular_ordering}. }
\label{fig:dynamic_programming}
\end{center}
\end{figure}
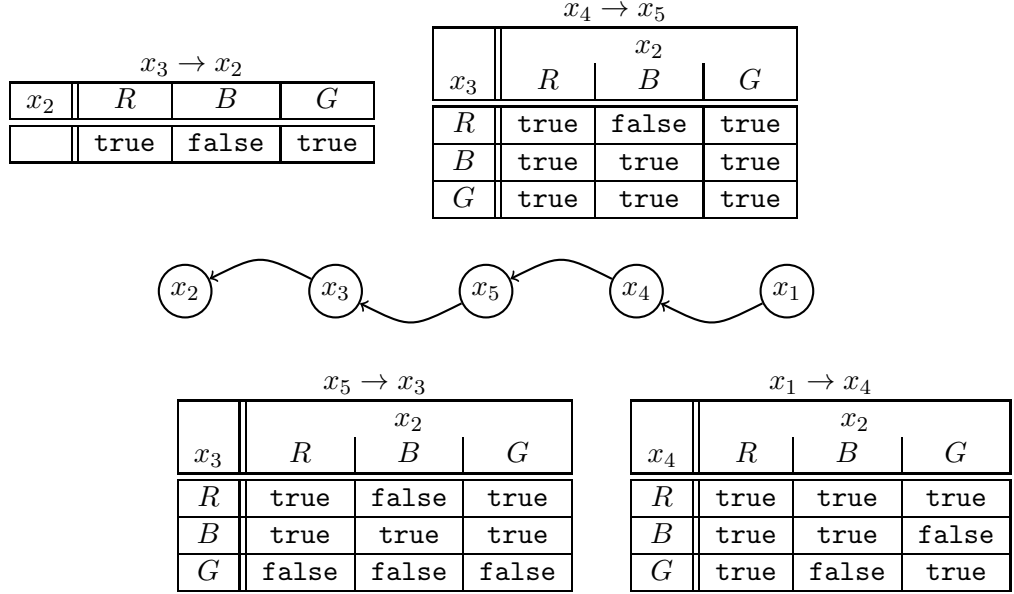

\subsection {Propagating Encrypted Feasibility Values along a Linear Variable Order}

In contrast to Figure~\ref {fig:dfs}, which illustrates multi-party dynamic programming on a pseudo-tree variable ordering (counting constraint violations), Figure~\ref {fig:dynamic_programming} shows (in cleartext) how it can be carried out on a linear ordering (in the Boolean domain). This assumes that a circular communication structure has preliminarily been set up as described in Appendix~\ref {sec:routing}. 

Algorithm~\ref {algo:onevar_propagate} gives the detailed pseudocode for this procedure, and is intended as a replacement for line~\ref {algo:P_DPOP_value:UTILpropagation} in Algorithm~\ref {algo:P_DPOP_value}. The differences with the pseudo-tree-based Algorithm~\ref {algo:UTILpropagation} are the following. First, while Algorithm~\ref {algo:UTILpropagation} initially reformulated the DisCSP into a Max-DisCSP so as to minimize the number of constraint violations, Algorithm~\ref {algo:onevar_propagate} works directly on the original DisCSP problem. This means that the conjunction operator~$\wedge$ replaces the sum operator (lines \ref {algo:nevar_propagate:local_join} and~\ref {algo:onevar_propagate:join}), and the disjunction operator~$\vee$ replaces the operator~$\min$ (line~\ref {algo:onevar_propagate:project}). Notice also that, in the case of the linear ordering, a variable's local subproblem no longer necessarily involves its parent variable in the ordering (line~\ref {algo:nevar_propagate:local_join}), just like $x_4$ shares no constraint with~$x_5$ in Figures \ref{fig:circular_ordering} and~\ref {fig:dynamic_programming}. 

\begin {algorithm}[htbp]
\begin{algorithmic}[1]

\STATE \COMMENT {Join local constraints:}
\STATE $m(x, \cdot) \leftarrow \bigwedge_{c \in \left\{ c' \in \mathcal {C} ~|~ x \in scope(c') ~\wedge~ scope(c') \cap \left( children_x \cup pseudo\_children_x \right) = \emptyset \right\}} c (x, \cdot)$ 	\label {algo:nevar_propagate:local_join}

\vspace{5pt}
\STATE \COMMENT {Apply codenames:}
\FOR {each $y_i \in \{ parent_x \} \cup pseudo\_parents_x$}
	\STATE $m( x, \cdot ) \leftarrow$ replace $(y_i, D_{y_i})$  in $m( x, \cdot )$ with $(\tilde {y_i^x}, \tilde {D_{y_i}^x})$ from Algorithm~\ref {algo:P_DPOP}, line~\ref {algo:P_DPOP:codenames}, and apply the permutation~$\sigma_{y_i}^x$ to~$\tilde {D_{y_i}^x}$ 	\label {algo:onevar_propagate:codenames}
\ENDFOR

\vspace{5pt}
\STATE \COMMENT {Join with received message:}
\STATE Wait for the message (FEAS, $m'( \cdot )$) from the next variable in the ordering
\FOR {each $z \in children_x \cup pseudo\_children_x$}
	\STATE $m'(\cdot) \leftarrow $ identify $(\tilde{x^{z}}, \tilde {D_x^{z}})$ as $(x, D_x)$ in $m'(\cdot)$ (if $\tilde{x^{z}}$ is present) 
\ENDFOR
\STATE $m( x, \cdot ) \leftarrow m(x, \cdot) \wedge m'( \cdot )$ \label {algo:onevar_propagate:join} 

\vspace{5pt}
\STATE \COMMENT {Project out $x$:}
\IF{$x$ is not the root variable}

	\STATE $m( \cdot ) \leftarrow E \left( \bigvee_x m( x, \cdot ) \right)$ \COMMENT {re-encrypts using the compound public key} \label {algo:onevar_propagate:project}

	\STATE \textsc{ToPrevious}((FEAS, $m( \cdot )$)) as in Algorithm~\ref {algo:routing} \label {algo:onevar_propagate:send}
\ENDIF

\STATE \textbf {else} $x^* \leftarrow$ \textsc{FeasibleValue}$( m( x, \cdot ))$ as in Algorithm~\ref {algo:log_decryption} 	\label {algo:onevar_propagate:decrypt}
\end{algorithmic}
\caption{Propagating feasibility values along a linear ordering, for variable~$x$}
\label{algo:onevar_propagate}
\end {algorithm}

The next  difference is that variable~$x$ no longer partially de-obfuscates its feasibility matrix before projecting itself (Algorithm~\ref {algo:UTILpropagation}, line~\ref {algo:UTILpropagation:deobfuscation}). The reason is that the ElGamal scheme is homomorphic, and therefore it is no longer necessary to first (partially) decrypt the feasibility values to project~$x$ using the operator~$\vee_x$. Only the root variable requires decryption (Algorithm~\ref {algo:onevar_propagate}, line~\ref {algo:onevar_propagate:decrypt}) to find a value~$x^*$ for its variable~$x$ whose encrypted feasibility value decrypts to~\texttt {true} (if any). This is described in the following section.

\subsection {Decrypting a Feasible Value for the Root Variable} 

The decryption of feasibility values at the root is a collaborative process in which each variable partially decrypts the cyphertext using its private key (Algorithm~\ref {algo:collaborative_decryption}). The dichotomy procedure in Algorithm~\ref {algo:log_decryption} uses at least $\lceil \log_2 |D_x| \rceil$ and at most $\lceil \log_2 {|D_x|} + 1 \rceil$ decryptions to find a feasible assignment to the root variable, or to detect infeasibility. 

\begin {algorithm}[h]
\begin{algorithmic}[1]
\ENSURE \textsc{FeasibleValue}$\left( m \left(x = x_{i_l} \ldots x_{i_r} \right) \right)$

\IF [cut in half the remaining subdomain:] {$i_l < i_r$} 
	\STATE $I \leftarrow \left[ i_l, \left\lfloor \frac{i_l+i_r}{2} \right\rfloor \right]$
	
	\STATE $feasible \leftarrow $ \textsc {Decrypt}$\left( \bigvee_{i \in I} m \left( x = x_i \right) \right)$ as in Algorithm~\ref {algo:collaborative_decryption}
	\STATE \textbf {if} $feasible = \texttt {true}$ \textbf {then} \textbf {return} \textsc{FeasibleValue}$\left( m \left( x = x_{i \in I} \right) \right)$
	\STATE \textbf {else return} \textsc{FeasibleValue}$\left( m \left( x = x_{i \in [i_l, i_r] - I} \right) \right)$
	
\vspace {5pt}
\ELSE [only one value remains for $x$] 
	\STATE $feasible \leftarrow $ \textsc {Decrypt}$\left( m \left( x = x_{i_l} \right) \right)$ as in Algorithm~\ref {algo:collaborative_decryption}
	\STATE { \textbf{if} $feasible = $ \TRUE \textbf { then return} $x_{i_l}$ \textbf {else return null}} 
\ENDIF
\end{algorithmic}
\caption{Finding a feasible value in the encrypted feasibility matrix~$m(x)$}
\label{algo:log_decryption}
\end {algorithm}

\subsection {Algorithm Properties}

We first analyze the completeness and complexity properties of the P$^2$-DPOP$^{(+)}$ algorithms, and then we move on to their privacy properties. 

\subsubsection {Completeness and Complexity}

\begin {theorem}
Provided that there are no codename clashes, the P$^2$-DPOP$^+$ algorithm terminates and returns a feasible solution to the DisCSP, if there exists one. 
\end {theorem}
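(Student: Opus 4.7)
The plan is to reduce the proof to that of Theorem~\ref{thm:P_DPOP_value} for P$^{\sfrac{3}{2}}$-DPOP$^+$, and to argue that the only substantive modification---replacing the pseudo-tree-based feasibility propagation of Algorithm~\ref{algo:UTILpropagation} with the linear-ordering-based Algorithm~\ref{algo:onevar_propagate}, together with the homomorphic decryption of Algorithm~\ref{algo:log_decryption} at the root---still yields the correct feasibility marginal for the root variable at every iteration of the outer loop. Since the outer loop (Algorithm~\ref{algo:P_DPOP_value}, lines~\ref{algo:P_DPOP_value:while}--\ref{algo:P_DPOP_value:UTILpropagation}) is inherited unchanged, and the rerooting procedure of Algorithm~\ref{algo:reroot} was already shown in the proof of Theorem~\ref{thm:P_DPOP_value} to make each variable the root exactly once, it suffices to establish both termination and correctness of each iteration's feasibility phase.

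For termination, I would first note that the circular ordering set up in Appendix~\ref{sec:routing} contains every variable exactly once, so exactly $(n-1)$ FEAS messages traverse it before reaching the root. The dichotomic decryption at the root (Algorithm~\ref{algo:log_decryption}) invokes \textsc{Decrypt} at most $\lceil \log_2 |D_{\text{root}}| \rceil + 1$ times, and each \textsc{Decrypt} call returns in finite time because the cyphertext performs a single round along the circular ordering (Algorithm~\ref{algo:collaborative_decryption}). Hence each iteration of the outer loop terminates, and by Theorem~\ref{thm:P_DPOP_value} the outer loop itself executes at most $n$ times.

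For correctness of a single iteration, the key step is to verify that Algorithm~\ref{algo:onevar_propagate} carries out a valid bucket-elimination computation on the linear ordering. Each variable~$x$ joins its local constraints---those whose scope lies in $x$ and later variables (line~\ref{algo:nevar_propagate:local_join})---with the encrypted message from its successor in the ordering, and then projects~$x$ out via $\bigvee_x$ (line~\ref{algo:onevar_propagate:project}) before forwarding. The homomorphic ElGamal operations fit exactly into what the scheme supports: the conjunction on line~\ref{algo:onevar_propagate:join} pairs the cleartext local marginal $m(x,\cdot)$ with the encrypted incoming message, and the disjunction on line~\ref{algo:onevar_propagate:project} is computed purely on cyphertexts. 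By induction along the linear ordering, the message arriving at the root is therefore an encryption of $\bigvee_{\text{non-root vars}} \bigwedge_{c\in\mathcal{C}} c$, i.e.\ exactly the feasibility marginal over the root's domain. Algorithm~\ref{algo:log_decryption} then explores a subdomain containing at least one feasible value whenever one exists---because the disjunction over a superset is \texttt{true} whenever the disjunction over its chosen half is, and the alternative half is recursed on otherwise---so it returns a feasible $x^*$ exactly when one exists, or \textbf{null} if the problem is infeasible.

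The main obstacle will be handling codenames consistently across a single feasibility phase. Each iteration uses fresh codenames chosen as in Algorithm~\ref{algo:P_DPOP} (lines~\ref{algo:P_DPOP:codenames}--\ref{algo:P_DPOP:send_codenames}, invoked at line~\ref{algo:P_DPOP_value:codenames} of Algorithm~\ref{algo:P_DPOP_value}), and the linear ordering may place (pseudo-)parent and (pseudo-)child of a given back-edge arbitrarily far apart. The identification step immediately preceding line~\ref{algo:onevar_propagate:join} of Algorithm~\ref{algo:onevar_propagate} must therefore recognize every occurrence of $x$'s own outgoing codenames $\tilde{x^z}$ in the incoming cyphertext and replace them by the common variable $x$, so that the conjunction and subsequent projection act on the intended variable. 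Under the theorem's hypothesis of no codename clashes, this identification is injective and unambiguous, and the induction above carries through; without that hypothesis, two distinct back-edges could generate the same random identifier and cause spurious joins that would invalidate the dynamic-programming argument.
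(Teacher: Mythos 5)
Your proposal is correct and follows essentially the same route as the paper's own proof: termination is inherited from the P$^{\sfrac{3}{2}}$-DPOP$^+$ result (Theorem~\ref{thm:P_DPOP_value}) together with the guaranteed delivery of routed messages, and completeness rests on the homomorphic ElGamal operations making the projection along the linear ordering a valid dynamic-programming computation whose result at the root is the encrypted feasibility marginal, from which Algorithm~\ref{algo:log_decryption} extracts a feasible value if one exists. Your additional elaborations --- the explicit induction along the linear ordering, the bound on the number of collaborative decryptions, and the role of the no-codename-clash hypothesis in making the identification step unambiguous --- are all consistent with, and merely more detailed than, the paper's terser argument.
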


\begin {proof}
Termination follows from Theorem~\ref {thm:P_DPOP_value}, and from the fact that the message routing procedure in Appendix~\ref {sec:routing} guarantees all feasibility messages eventually reach their destinations. When it comes to completeness, the homomorphic property of the ElGamal scheme ensures the projection of a variable~$x$ out of an encrypted feasibility matrix is correct, and that the feasibility message received by each variable in the linear ordering summarizes the (encrypted) feasibility of the lower agents' aggregated subproblems, as a function of higher variables. In particular, the feasibility message received by the root allows it to find a value for its variable that satisfies the overall problem, if there exists one. 
\end {proof}

The analysis of the complexity of the algorithm remains similar to the analysis in Section~\ref {sec:P_DPOP_value_properties}: it is $O(n^2 \cdot D_{\max}^{sep_{\max}''})$ in information exchange and in number of constraint checks, and $O(n \cdot D_{\max}^{sep_{\max}''})$ in memory, but $sep_{\max}''$ is now the maximum separator size \emph {along the successive linear variable orderings}, instead of along the pseudo-trees. The requirement that each variable may have at most one child tends to make this exponent increase significantly, as illustrated empirically in Section~\ref {sec:results}. In terms of number of ElGamal cryptographic operations, in addition to the cost of rerooting the variable ordering (Section~\ref {sec:P_DPOP_value_properties}), the algorithm also requires $O(n^2 \cdot D_{\max}^{sep_{\max}''})$ encryptions, and only $O(n \log D_{\max})$ collaborative decryptions.

\subsubsection {Full Agent Privacy}

\begin {theorem}
The P$^2$-DPOP$^{(+)}$ algorithms guarantee full agent privacy. 
\end {theorem}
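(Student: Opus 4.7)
The plan is to reuse the framework of Theorem~\ref{thm:agent_privacy:P_DPOP} and the corresponding result for P$^{\sfrac{3}{2}}$-DPOP$^{(+)}$. As observed in Section~\ref{sec:agent_privacy_proof:P_DPOP}, agent privacy can only be breached in two ways: either the algorithm forces two non-neighboring agents to exchange a message directly, or the content of some message identifiably refers to a non-neighbor. I would therefore walk through the phases of P$^2$-DPOP$^{(+)}$ in order and verify both conditions.

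The preliminary phases are already covered: root election and pseudo-tree generation by the online appendices, unique ID assignment by Online Appendix~3, the exchange of ElGamal public-key shares, the rerooting procedure (Algorithm~\ref{algo:reroot}) and the collaborative decryption (Algorithm~\ref{algo:collaborative_decryption}) have all been shown to respect agent privacy in the proof for P$^{\sfrac{3}{2}}$-DPOP$^{(+)}$, and those arguments can simply be invoked here. The only genuinely new phase to analyze is the encrypted feasibility propagation along the linear (circular) variable ordering performed by Algorithm~\ref{algo:onevar_propagate}, together with the use of Algorithm~\ref{algo:log_decryption} at the root (which is built on Algorithm~\ref{algo:collaborative_decryption} and thus already treated).

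For that new phase I would argue as follows. Direct communication between non-neighboring agents is avoided because each (FEAS, $m(\cdot)$) message is forwarded along the circular ordering using the routing primitive of Appendix~\ref{sec:routing}, which by construction lets non-adjacent variables in the ordering communicate without any pair of non-neighbors exchanging a message directly, and whose routing headers were shown in that appendix not to leak identities beyond what neighbors already know. For the content, I would reproduce the induction used in Theorem~\ref{thm:agent_privacy:P_DPOP}: variable $x$ is projected out at line~\ref{algo:onevar_propagate:project} before its outgoing message is assembled, so $x$ never reappears in any subsequent FEAS message; and every other variable $y$ in the scope is present only through a codename $\tilde{y^z}$ that $y$ itself generated uniformly at random and delivered over a secure channel only to its neighbor $z$ (Algorithm~\ref{algo:P_DPOP}, lines \ref{algo:P_DPOP:choose_var_codename}--\ref{algo:P_DPOP:send_codenames}). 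Domain values are likewise obfuscated via $\tilde{D_y^z}$ and the secret permutation $\sigma_y^z$, and the numerical feasibility entries themselves are ElGamal cyphertexts under the compound public key, so they convey nothing about any agent's identity.

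The step I expect to be the main obstacle is ruling out a side channel introduced by the combination of the linear ordering with the re-encryption at line~\ref{algo:onevar_propagate:project}: intermediate agents physically touch each FEAS message as it circulates, and an adversary might in principle try to track a cyphertext back to its producer. I would close this gap by appealing to the semantic security and re-randomization property of ElGamal (Appendix~\ref{sec:ElGamal}), which ensures that successive (re-)encryptions of the same plaintext are computationally indistinguishable from fresh encryptions and thus cannot be correlated with any specific sender by an intermediary. Combined with the phase-by-phase check above and the previously established agent-privacy guarantees of all subroutines, this yields full agent privacy for the P$^2$-DPOP$^{(+)}$ algorithms.
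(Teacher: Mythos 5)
Your proof is correct and takes essentially the same approach as the paper's: both reduce the argument to the two phases that are new relative to P$^{\sfrac{3}{2}}$-DPOP$^{(+)}$ --- the encrypted feasibility propagation (handled by combining the codename argument of Theorem~\ref{thm:agent_privacy:P_DPOP} with the agent-privacy guarantee of the routing procedure in Appendix~\ref{sec:routing}) and the root-value assignment (reduced to the already-proven collaborative decryption of Algorithm~\ref{algo:collaborative_decryption}). Your extra remarks on ElGamal re-randomization merely elaborate on details the paper's terser proof leaves implicit.
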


\begin {proof}
The only changes introduced in P$^2$-DPOP$^{(+)}$ with respect to P$^{\sfrac{3}{2}}$-DPOP$^{(+)}$ are in feasibility propagation, and in finding a feasible value for the root variable. 

\begin{description}
\item [ElGamal feasibility propagation (Algorithm~\ref {algo:onevar_propagate})] From the point of view of agent privacy, this is the same procedure as Algorithm~\ref {algo:UTILpropagation}, but using Algorithm~\ref {algo:routing} for message routing, both of which algorithms guarantee agent privacy. 

\item [Root variable assignment (Algorithm~\ref {algo:log_decryption})] This consists in iteratively calling the procedure in Algorithm~\ref {algo:collaborative_decryption}, which has already been shown to guarantee agent privacy. 
\end{description}This concludes the proof that the P$^2$-DPOP$^{(+)}$ algorithms guarantee agent privacy. 
\end {proof}

\subsubsection {Partial Topology Privacy}

\begin {theorem}
The P$^2$-DPOP$^{(+)}$ algorithms guarantee \emph{partial} topology privacy. In addition to the limited leaks of topology privacy in P$^{\sfrac{3}{2}}$-DPOP$^{(+)}$, an agent \emph {might} also be able to discover that there exists another branch in the constraint graph that it is not involved in. 
\end {theorem}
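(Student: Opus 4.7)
The plan is to reduce the proof to the one subprotocol that is genuinely new in P$^2$-DPOP$^{(+)}$ relative to P$^{\sfrac{3}{2}}$-DPOP$^{(+)}$. All other ingredients --- the unique-ID assignment (Online Appendix~3), the circular message routing (Appendix~\ref{sec:routing}), the ElGamal key-share exchange (Algorithm~\ref{algo:P_DPOP_value}, lines \ref{algo:P_DPOP_value:keys_start}--\ref{algo:P_DPOP_value:keys_end}), the pseudo-tree rerooting (Algorithm~\ref{algo:reroot}), and the collaborative decryption (Algorithm~\ref{algo:collaborative_decryption}) invoked inside the dichotomic root-value search (Algorithm~\ref{algo:log_decryption}) --- are imported verbatim from P$^{\sfrac{3}{2}}$-DPOP$^{(+)}$, whose topology-privacy analysis already covers them. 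Hence the only component that requires fresh analysis is the ElGamal feasibility propagation along the linear ordering (Algorithm~\ref{algo:onevar_propagate}).

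For that component I would mirror the case analysis used in the proof of Theorem~\ref{thm:topo_privacy:P_DPOPmin}. A variable~$x$ receives from the next variable in the linear ordering a table of ElGamal cyphertexts indexed by a scope of codenames. The cyphertexts themselves reveal nothing because they are encrypted under the compound public key, whose private counterpart is split across all agents. The codenames hide non-neighbour variables by Theorem~\ref{thm:agent_privacy:P_DPOP}, while $x$ can still recognise the codenames its own neighbours sent to it during the codename exchange (Algorithm~\ref{algo:P_DPOP_value}, line~\ref{algo:P_DPOP_value:codenames}). This yields exactly the lower-bound-on-degree and lower-bound-on-number-of-variables leaks already present in P$^{\sfrac{3}{2}}$-DPOP$^{(+)}$.

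The strictly additional leak then follows from the observation that the linear ordering threads through \emph{every} variable, so the FEAS table reaching~$x$ has been successively joined with the local subproblems of all of its predecessors in the circular chain --- including predecessors that, in the original constraint graph, lie in branches completely disjoint from $x$'s own branch of the current pseudo-tree. Whenever such a predecessor has contributed a codename that $x$ can neither match to one of its neighbours nor situate on its own ancestor-to-root path, $x$ is forced to conclude that at least one additional branch exists somewhere in the constraint graph. Combined with the pre-existing P$^{\sfrac{3}{2}}$-DPOP$^{(+)}$ leaks, this gives exactly the claim of the theorem.

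The step I expect to be the main obstacle is the matching \emph{upper} bound: proving that nothing beyond the mere existence of such a branch is revealed. Concretely, one has to argue that $x$ cannot, across the $n$ successive rerootings, correlate unresolvable codenames between iterations so as to reconstruct the number of variables in an unknown branch, its internal adjacency pattern, or any constraint content. This will rely on three facts that need to be invoked carefully: fresh codenames and domain permutations~$\sigma_{y_i}^x$ are drawn at every iteration (Algorithm~\ref{algo:P_DPOP_value}, line~\ref{algo:P_DPOP_value:codenames}); obfuscation keys are likewise resampled; and each homomorphic re-encryption performed in Algorithm~\ref{algo:onevar_propagate} (line~\ref{algo:onevar_propagate:project}) yields a cyphertext that is, by the semantic security of ElGamal, indistinguishable from a fresh encryption, so that no cross-iteration fingerprinting of the hidden branch is possible.
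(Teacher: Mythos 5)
Your decomposition---confining the analysis to the two components that are new relative to P$^{\sfrac{3}{2}}$-DPOP$^{(+)}$, namely the ElGamal feasibility propagation along the linear ordering (Algorithm~\ref{algo:onevar_propagate}) and the collaborative decryption inside the root-value search (Algorithm~\ref{algo:log_decryption})---is exactly the paper's strategy, and your handling of the imported subprotocols and of the encrypted payloads is sound. The genuine gap is in where you locate the \emph{additional} leak. You derive it from the codenames in the joined FEAS tables: a predecessor lying in a disjoint branch contributes a codename that $x$ can neither match to a neighbour nor ``situate on its own ancestor-to-root path,'' forcing $x$ to conclude that another branch exists. This inference does not go through. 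Because back-edges in a pseudo-tree only ever connect a node to its ancestors, and because the propagation follows the reverse DFS preorder, every codename appearing in the scope of the message received by~$x$ refers to a variable that is an ancestor of~$x$ (or $x$ itself), never to a variable in a disjoint branch; and $x$ does not know its ancestor-to-root path beyond its own (pseudo-)parents, so an unresolvable codename is always consistent with an unknown, non-neighbouring ancestor. It therefore never certifies the existence of another branch.

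The paper's actual source of the extra leak is structural and much simpler: the last variable in the linear ordering must know it is last in order to \emph{initiate} the feasibility propagation; by contraposition, every other variable learns that it is not last, and in particular a non-last leaf of the pseudo-tree thereby discovers that the traversal continues beyond its own subtree, i.e.\ that another branch exists. (The paper also notes that this same leak is already incurred by the unique-ID assignment of Online Appendix~3, and that the message transport itself is covered by the full topology privacy of the routing procedure in Appendix~\ref{sec:routing}.) As written, your proof both asserts a leak that does not occur and misses the one that does; the rest of your argument, including the cross-iteration indistinguishability discussion, is compatible with the paper but does not repair this step.
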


\begin {proof}
There are only two relevant differences with P$^{\sfrac{3}{2}}$-DPOP$^{(+)}$: the linear variable ordering, and the choice of a value for the root variable that requires collaborative decryption. 

\begin{description}
\item [ElGamal feasibility propagation (Algorithm~\ref {algo:onevar_propagate})] 
To exchange FEAS messages along a linear variable ordering, the algorithm makes use of the circular message routing procedure, which is shown in Appendix~\ref {sec:routing} to guarantee full topology privacy. However, the last variable in the linear ordering needs to know it is the last in order to initiate the feasibility propagation; therefore, by contraposition, non-last variables know they are not the last, and, in particular, non-last leaves of the pseudo-tree discover the existence of another branch. This minor leak of topology privacy is already present in the unique variable ID assignment algorithm (Online Appendix~3). Besides this, the topology privacy properties of the feasibility propagation phases in P$^2$-DPOP and P$^2$-DPOP$^+$ are the same as in P-DPOP and P-DPOP$^+$, respectively. 

\item [Root variable assignment (Algorithm~\ref {algo:log_decryption})] 
This algorithm involves recursively calling the collaborative decryption procedure, shown to guarantee full topology privacy. 

\end{description}
This concludes the proof that P$^2$-DPOP$^{(+)}$ guarantees \emph {partial} topology privacy. 
\end {proof}

\subsubsection {Full Constraint Privacy}

\begin {theorem}
The P$^2$-DPOP$^{(+)}$ algorithms guarantee full constraint privacy. 
\end {theorem}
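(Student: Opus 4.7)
The plan is to trace through the two differences between P$^2$-DPOP$^{(+)}$ and P$^{\sfrac{3}{2}}$-DPOP$^{(+)}$ that affect constraint privacy, and argue that the switch to ElGamal homomorphic encryption closes all the partial leaks identified in Section~\ref{sec:constraint_privacy:obfuscation_rerooting}. The only two phases that carry constraint information in a new way are the linear-order feasibility propagation (Algorithm~\ref{algo:onevar_propagate}) and the dichotomic root-value assignment (Algorithm~\ref{algo:log_decryption}). I would show that in both, any cleartext Boolean that becomes known to some agent is either semi-private information or is strictly necessary for that agent to perform its own step, and nothing more.

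First I would handle the feasibility propagation. Every FEAS message in transit is a vector of ElGamal cyphertexts, re-encrypted under the compound public key at each hop (line~\ref{algo:onevar_propagate:project}). By the semantic security of ElGamal, holding only one share of the compound private key is insufficient to distinguish cyphertexts, so no intermediate agent can read any entry of a FEAS message it receives. The local operations an agent performs --- joining with its own constraints, applying codenames, and disjoining over its own variable --- take an encrypted operand and produce an encrypted result; its own local constraints enter only as cleartext values that it already knows. Hence the forwarded message carries no new cleartext constraint information to the recipient, and no foreign constraint information to the sender.

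Next I would analyze the root decryption. The root invokes \textsc{FeasibleValue} on the fully aggregated cyphertext matrix $m(x, \cdot)$; each call to \textsc{Decrypt} receives the disjunction of encrypted feasibility entries over a subdomain $I \subseteq D_x$, and returns the single Boolean ``is there a feasible extension with $x \in I$?''. Because the OR is computed \emph{inside} the cyphertext and only the disjunction is decrypted, the root learns only whether $I$ contains a feasible value, not which one, nor how many, nor anything about the internal structure of the subtrees rooted below. At the end of the dichotomy, the root learns its own chosen value~$x^*$ and the fact that the overall problem is feasible for $x = x^*$, both of which qualify as semi-private information (Definition~\ref{def:semi_private}). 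The non-root agents that participate in collaborative decryption see only cyphertexts before and after their partial decryption step, and since the compound key requires every share to complete decryption, no non-root agent ever obtains any cleartext.

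The main obstacle I anticipate is showing that the whole transcript of dichotomy queries --- not any single query in isolation --- leaks no more than semi-private information to the root. Each subsequent query is conditional on the answer to the previous one, and one must check that this adaptive sequence induces no leak beyond the identification of $x^*$ and the global feasibility of the problem. I would formalise this by observing that the transcript of queries and answers is a deterministic function of (a)~$x^*$, (b)~the index structure of $D_x$, and (c)~the fixed splitting rule of Algorithm~\ref{algo:log_decryption}, none of which depend on the private constraints of other agents except through their contribution to semi-private information. Combined with the cyphertext-only view of all non-root agents, this would establish full constraint privacy.
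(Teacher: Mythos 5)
Your proof is correct and rests on the same core argument as the paper's: all feasibility information circulates only as ElGamal cyphertexts under the compound public key, so no proper subset of the agents can decrypt or even compare entries, and the homomorphic joins/projections never expose cleartext. Your additional analysis of the dichotomic root decryption --- showing that the transcript of queries and answers is a deterministic function of $x^*$ and the splitting rule, so the root learns nothing beyond its own feasible value --- is more careful than the paper's proof, which does not examine Algorithm~\ref{algo:log_decryption} at all; this is a welcome strengthening rather than a divergence.
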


\begin {proof}
The P$^2$-DPOP$^{(+)}$ algorithms fix all the leaks of constraint privacy in P$^{<2}$-DPOP$^{(+)}$, by replacing the cryptographically insecure obfuscation through addition of random numbers, by the cryptographically secure ElGamal encryption (Appendix~\ref {sec:ElGamal}). This makes it no longer possible to compare two encrypted feasibility values without decrypting them, which would require the collaboration of all agents (or an amount of computation to break the encryption that can be made arbitrarily high in the worst case by increasing the ElGamal key size). In particular, while it is possible to compute the logical \emph {OR} of two cyphertexts without decrypting them, the result remains encrypted, and cannot be compared to the two inputs to decide which one is~\texttt {true}, if any. 
\end {proof}

\subsubsection {Full Decision Privacy}

\begin {theorem}
The P$^2$-DPOP$^{(+)}$ algorithms guarantee full decision privacy. 
\end {theorem}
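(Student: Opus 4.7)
The plan is to reduce this to Theorem~\ref{thm:full_decision} for P$^{\sfrac{3}{2}}$-DPOP$^{(+)}$ by observing that P$^2$-DPOP$^{(+)}$ inherits the key structural feature that gave P$^{\sfrac{3}{2}}$-DPOP$^{(+)}$ full decision privacy: the elimination of any top-down decision propagation phase, combined with the rerooting scheme of Algorithm~\ref{algo:P_DPOP_value} in which only the current root is ever directly assigned a value. Because P$^2$-DPOP$^{(+)}$ uses the same outer driver, the only possible new decision leaks must come from the two subroutines that differ from P$^{\sfrac{3}{2}}$-DPOP$^{(+)}$, namely Algorithm~\ref{algo:onevar_propagate} (ElGamal feasibility propagation along a linear ordering) and Algorithm~\ref{algo:log_decryption} (dichotomy search for a feasible root value via collaborative decryption). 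It therefore suffices to check each of these in turn.

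First I would verify that Algorithm~\ref{algo:onevar_propagate} leaks nothing about any variable's final value to any non-root agent. Every FEAS message sent along the circular ordering is an ElGamal cyphertext under the compound public key, and is re-encrypted at each hop (line~\ref{algo:onevar_propagate:project}). By the semantic security of ElGamal (Appendix~\ref{sec:ElGamal}), together with the fact that each agent holds only its own private key share, no non-root agent can recover any plaintext feasibility entry, and hence cannot infer anything about the values that any variable will ultimately take. The homomorphic join and projection operations are carried out on cyphertexts alone, so no intermediate cleartext is ever materialised on the path.

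Next I would show that the root's dichotomy search leaks only information the root is entitled to know, namely its own assignment. After the root has projected itself (Algorithm~\ref{algo:onevar_propagate}, line~\ref{algo:onevar_propagate:project}), the encrypted feasibility matrix is a function of $x$ alone, since the root has no parent or pseudo-parent, and all codenames for descendant variables were projected out lower in the ordering. Each call to \textsc{Decrypt} within Algorithm~\ref{algo:log_decryption} therefore recovers a single Boolean of the form ``some value in $I \subseteq D_x$ is feasible for the overall problem'', which is information about $x$ only. Moreover, the collaborative decryption of Algorithm~\ref{algo:collaborative_decryption} returns the cleartext only to the variable that initiated the call; the partial decryptions that circulate around the ring remain ElGamal cyphertexts under the key shares not yet applied, so no other agent learns the Boolean being decrypted.

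Finally, I would rule out cross-iteration leaks, since the rerooting loop runs Algorithm~\ref{algo:onevar_propagate} once per variable: fresh codenames are drawn at the start of every iteration (Algorithm~\ref{algo:P_DPOP_value}, line~\ref{algo:P_DPOP_value:codenames}), and ElGamal re-encryption uses fresh randomness, so the cyphertext traffic of successive iterations is computationally independent and cannot be correlated to reconstruct a previously committed root value. The main obstacle, and the step I would take most care to formalise, is precisely this last one: arguing that a non-root agent's transcript is computationally indistinguishable from a freshly generated one, independent of earlier root assignments, since this is the point at which the proof genuinely relies on the cryptographic assumption rather than on the purely structural argument inherited from Theorem~\ref{thm:full_decision}.
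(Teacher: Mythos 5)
Your proposal is correct and follows essentially the same route as the paper, which simply states that the proof of Theorem~\ref{thm:full_decision} carries over to P$^2$-DPOP$^{(+)}$ (no decision-propagation phase, rerooting with only the root ever assigned, and messages across iterations rendered incomparable by fresh codenames and fresh encryption randomness). Your additional checks that the two new subroutines --- the ElGamal feasibility propagation and the dichotomy decryption at the root --- introduce no new decision leaks make explicit what the paper leaves implicit, and are consistent with its argument.
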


\begin {proof}
The same proof applies as to Theorem~\ref {thm:full_decision}. 
\end {proof}

\section {Experimental Results}
\label {sec:results}

We report the empirical performance of our algorithms against the state-of-the-art MPC-DisCSP4 algorithm, on four classes of benchmarks: graph coloring, meeting scheduling, resource allocation, and game equilibrium. We only compare to MPC-DisCSP4, because to our knowledge it is the only other general DisCSP algorithm that provides strong privacy guarantees. For each problem class, the choice of the DisCSP formulation is crucial, because it dictates how the four types of privacy defined based on the DisCSP constraint graph will relate to the actual privacy of the original problem. In particular, the P$^*$-DPOP$^{(+)}$ algorithms use the standard DisCSP assumption that each constraint is known to all agents owning a variable in its scope (Section~\ref {sec:previous_privacy}). Therefore, when an agent wants to hide a constraint from neighboring agents, it must express its constraint over \emph {copies} of its neighbors' variables. Additional equality constraints must be introduced to make copy variables equal to their respective original variables. In contrast, MPC-DisCSP4 does not make use of this DisCSP assumption, and therefore it does not need the introduction of copy variables. 

Our first performance metric is \emph{simulated time}~\cite{Sultanik07a}, which is used, when all agents are simulated on a single machine, to estimate the time it would have taken to solve the problem if they had run in parallel on dedicated machines (ignoring communication delays). The two other metrics are the number of messages and the amount of information exchanged. For each metric, we report the median over at least 100 problem instances, with 95\% confidence intervals. For the obfuscation in P$^{<2}$-DPOP$^{(+)}$, we used random numbers of $B =$ 128 bits, while P$^2$-DPOP$^{(+)}$ used 512-bit ElGamal encryption. MPC-DisCSP4 also used 512 bits for its Paillier encryption. For the unique variable ID generation procedure in P$^{>1}$-DPOP$^{(+)}$, the parameter $incr_{\min}$ was set to~10. All algorithms were implemented inside the Java-based FRODO platform for DisCSP~\cite{Leaute09b}, coupled with the CSP solver JaCoP~\cite{JaCoP}. The experiments were run on a 2.2-GHz, dual-core computer, with Java 1.6 and a Java heap space of 2~GB. The timeout was set to 10~min (wall-clock time).

\subsection {Graph Coloring}
\label {sec:coloring}

We first report the performance of the algorithms on distributed, 3-color graph coloring problems. The graphs were randomly generated with varying numbers of nodes, and an edge density fixed to~$0.4$. Notice that, with a fixed number of colors and a fixed edge density, increasing the number of nodes increases the degree of the graph, and therefore reduces the number of feasible solutions; this explains the trends in some of the following graphs. The DisCSP formulation involves one decision variable per node, and assumes that each variable is controlled by a single-variable agent. Notice that inter-agent constraints are binary inequality constraints, and therefore decision privacy is relevant to this problem class: knowing one's chosen color is insufficient to infer the respective colors of one's neighbors. 

\begin{figure}[b!]
\begin{center}
\includegraphics [scale=1] {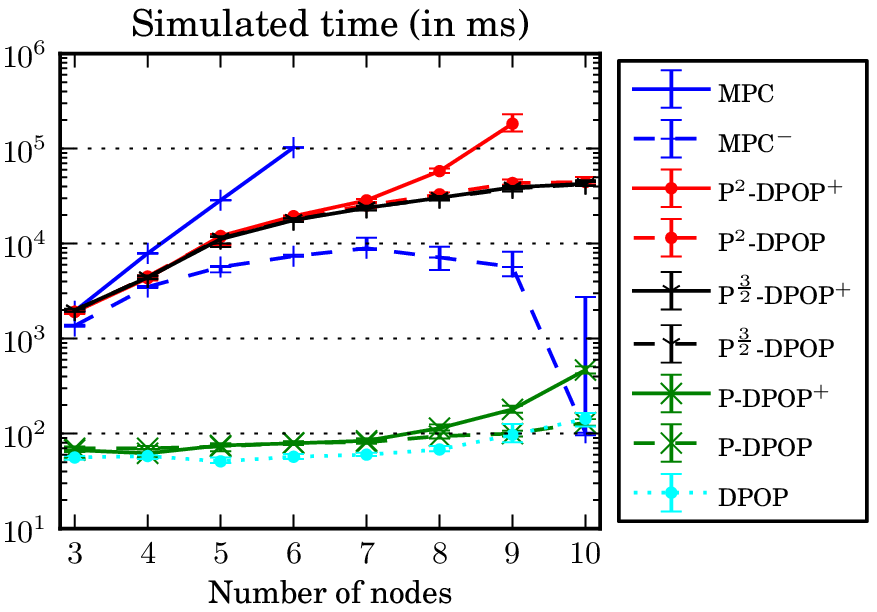}
\includegraphics [scale=1] {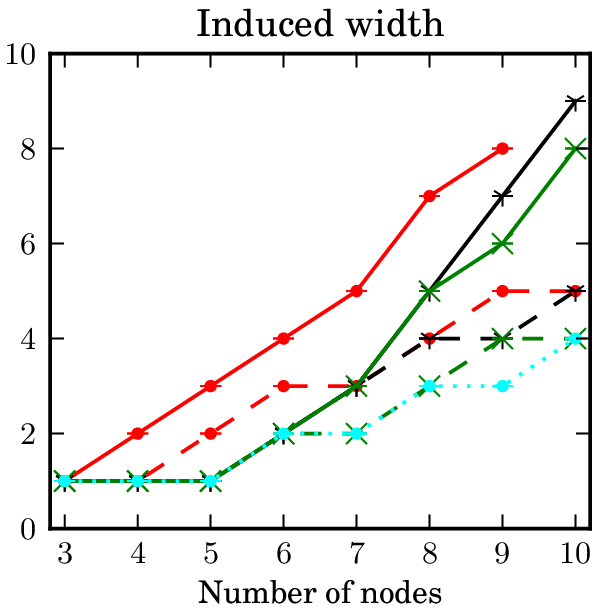}

\includegraphics [scale=1] {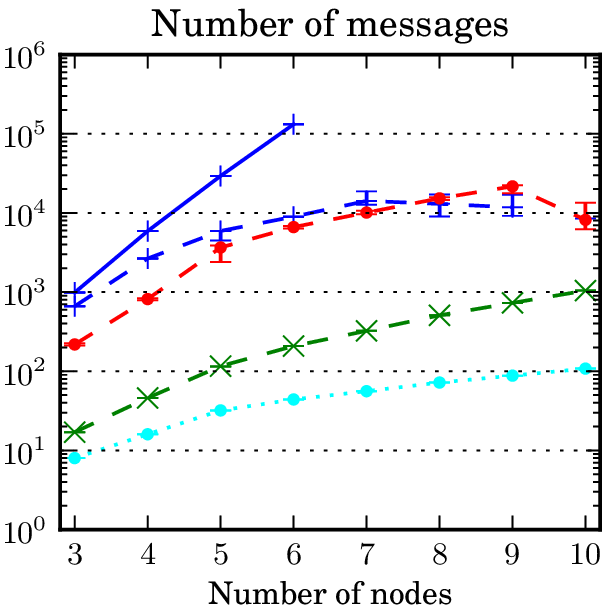}
\includegraphics [scale=1] {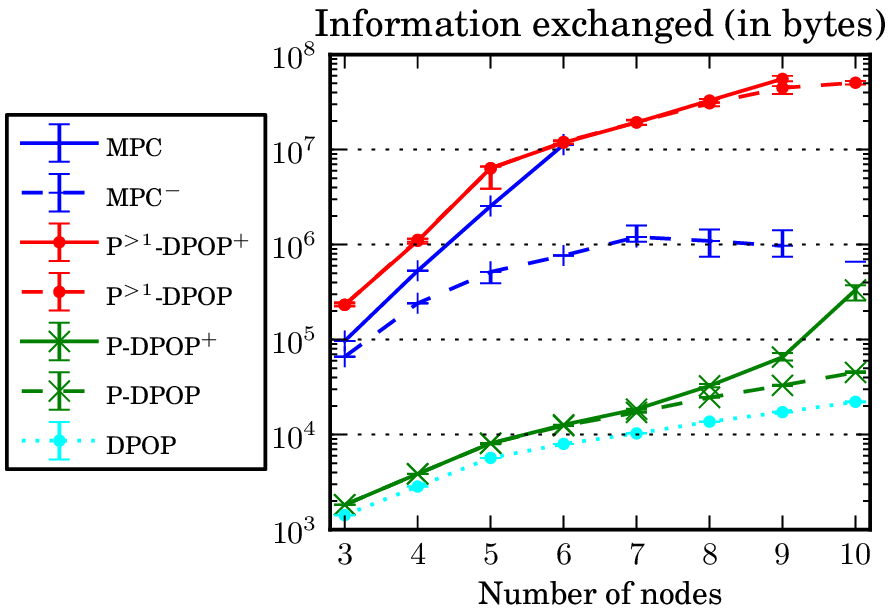}
\caption{Performance on graph coloring problems. }
\label{fig:graphColoring}
\end{center}
\end{figure}

To study the tradeoff between privacy and performance in MPC-DisCSP4, we considered a variant denoted MPC-DisCSP4$^-$, which assumes all inter-agent inequality constraints (i.e. node neighborhoods) are public, and only the final choice of colors is protected. Each agent first enumerates all feasible solutions to the overall problem (Section~\ref {sec:previous_privacy}), and then uses cryptographic techniques to securely and randomly choose one of the feasible solutions. If there exists none, the algorithm therefore terminates without any cryptographic operations nor exchanging messages. This explains the phase transition for MPC-DisCSP4$^-$ in the following graphs, since the probability of infeasibility increases with the problem size. 

Figure~\ref {fig:graphColoring} shows that MPC-DisCSP4 (denoted as \emph {MPC} in these and all subsequent figures) scales very poorly, timing out on problems with more than 6~nodes. MPC-DisCSP4$^-$ performs better; however, as mentioned before, it only protects the final choices of colors. For small numbers of nodes, the total state space is small, and MPC-DisCSP4$^-$ performs relatively well; for numbers of nodes above~9, the problem instances are mostly infeasible, and MPC-DisCSP4$^-$ quickly detects infeasibility without having to exchange any message. 

\begin{figure}[b!]
\begin{center}
\includegraphics [scale=1] {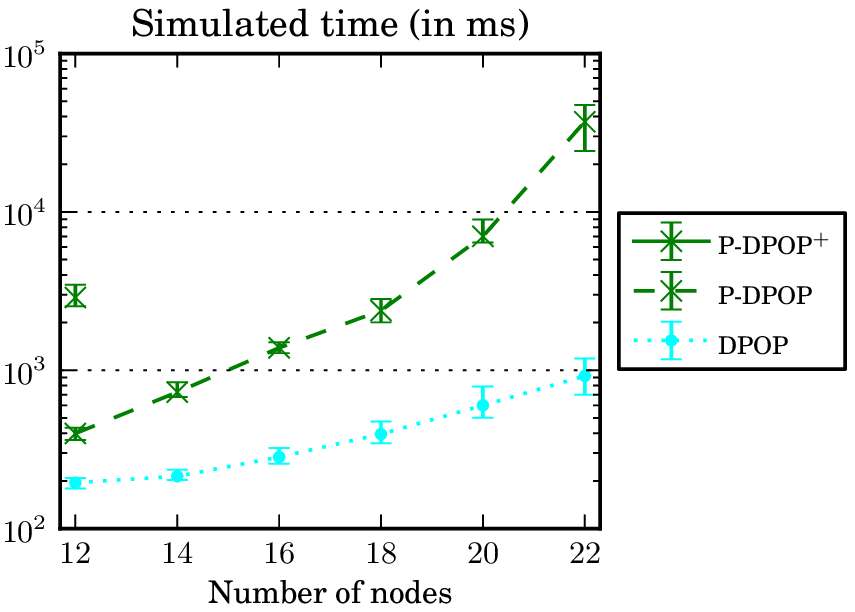}
\includegraphics [scale=1] {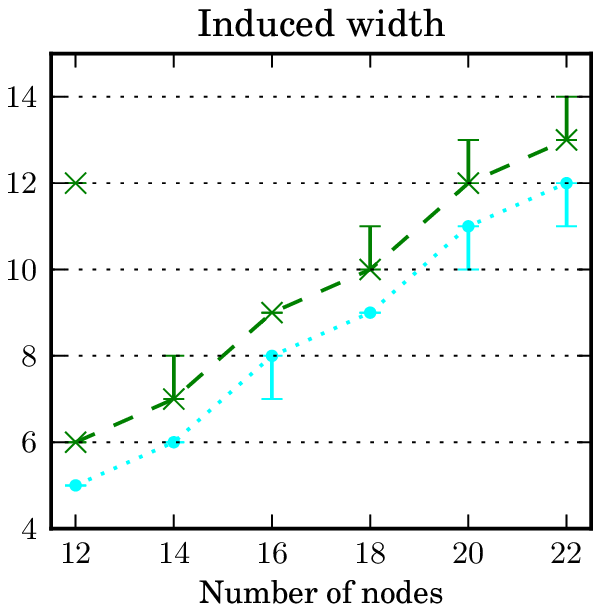}
\includegraphics [scale=1] {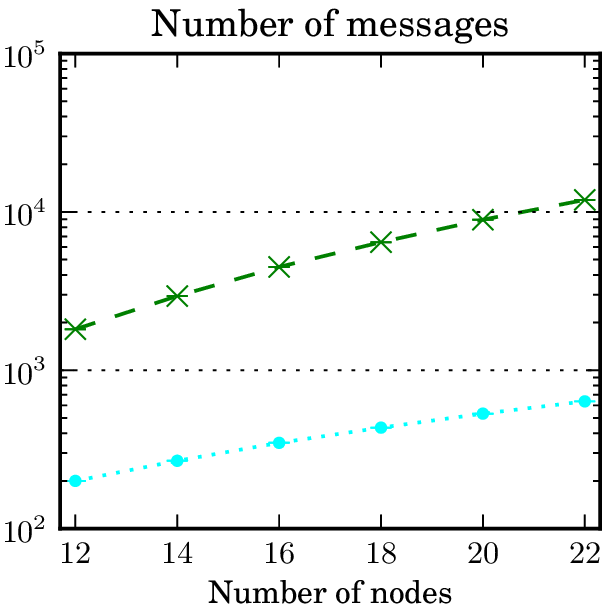}
\includegraphics [scale=1] {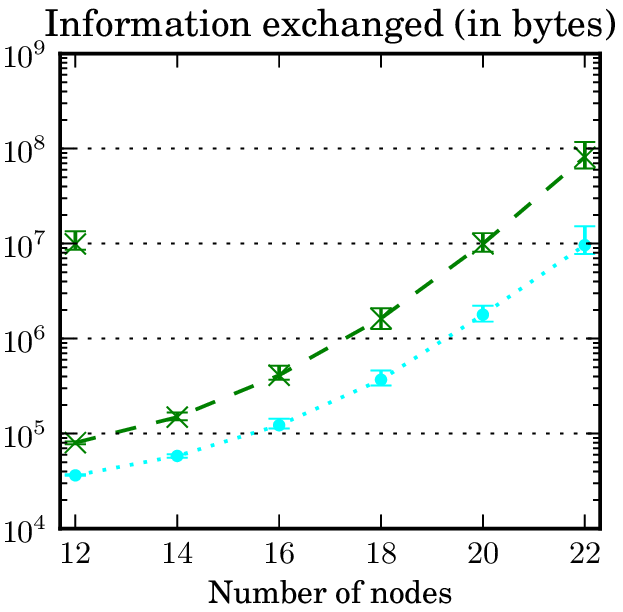}
\caption{Performance on larger graph coloring problems. }
\label{fig:graphColoring_large}
\end{center}
\end{figure}

The most efficient algorithms by far are P-DPOP$^{(+)}$, whose performance curves are at least one order of magnitude below all other algorithms. In particular, P-DPOP's runtime is sensibly the same as DPOP (the communication overhead is almost solely due to the root election algorithm). The cost of improved topology privacy in P-DPOP$^+$ vs. P-DPOP only starts to show for problem sizes above~7, when the induced widths of P-DPOP$^+$'s pseudo-trees start to deviate from P-DPOP and DPOP. Full decision privacy comes at much higher costs: P$^{\sfrac{3}{2}}$-DPOP$^{(+)}$'s curve is between 1 and 3 orders of magnitude above P-DPOP$^{(+)}$'s, even though their induced widths remain sensibly the same. This suggests that rerooting the pseudo-tree (which involves expensive cryptographic operations) is by far the complexity bottleneck, even when full constraint privacy is additionally guaranteed as in P$^2$-DPOP$^{(+)}$, whose linear variable orderings nevertheless have significantly higher induced widths than P$^{<2}$-DPOP$^{(+)}$'s pseudo-tree orderings. Notice that the slope of the runtime curve decreases as the problem size increases; this is due to the fact that more and more problems become infeasible, and the P$^{>1}$-DPOP$^{(+)}$ algorithms are able to terminate after the first iteration on infeasible problems. Similarly to P-DPOP$^+$ vs. P-DPOP, the cost of improved topology privacy is only visible above 7~nodes; P$^2$-DPOP$^+$ even timed out on problems of size~10. Finally, Figure~\ref {fig:graphColoring} illustrates the fact that MPC-DisCSP4 tends to send large numbers of small messages, while the P$^{>1}$-DPOP$^{(+)}$ algorithms send lower numbers of larger messages. 

Figure~\ref {fig:graphColoring_large} compares the performance of P-DPOP$^{(+)}$ against DPOP on larger graph coloring problem instances. On such larger problems, the improved topology privacy in P-DPOP$^+$ comes at a complexity price that is too high to scale above 12~nodes. On the other hand, P-DPOP's curves are only between one and two orders of magnitude above DPOP, and P-DPOP's median runtime on problem instances of size~22 is below 30~s.

\subsection {Meeting Scheduling}
\label {sec:meetings}

We now report experimental results on random meeting scheduling benchmarks. We varied the number of meetings, while keeping the number of participants per meeting to~2. For each meeting, participants were randomly drawn from a common pool of 3 agents. The goal is to assign a time to each meeting among 8~available time slots, such that no agent is required to attend simultaneous meetings. The pool of agents was deliberately chosen small to increase the complexity of the problems, by increasing the probability that each agent take part in multiple meetings. Note that fixing the pool size and the number of participants per meeting still generates an unbounded number of different problem instances as we increase the number of meetings, since the state space (the Cartesian product of the domains of the decision variables) keeps increasing with the number of meetings/decisions to be made. 

\begin{figure}[t!]
\begin{center}
\includegraphics [scale=1] {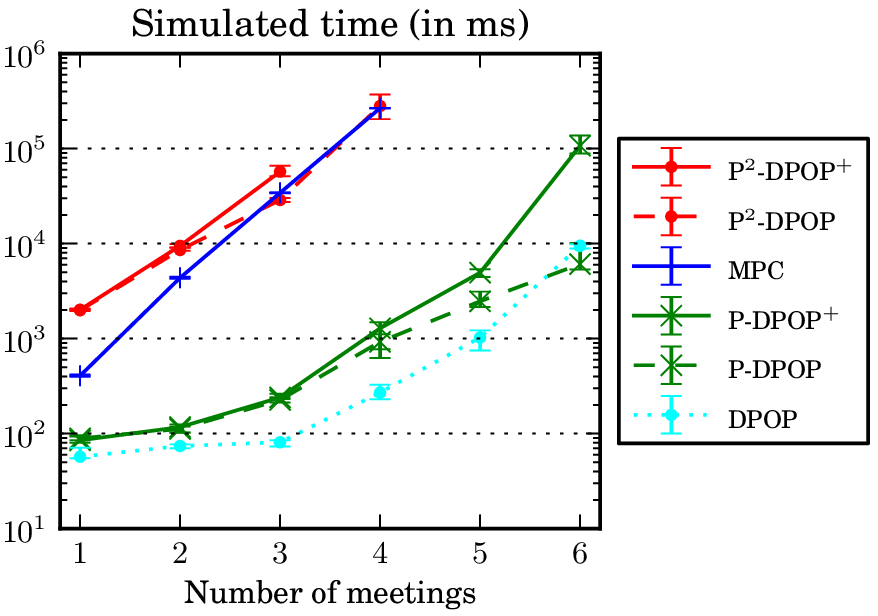}
\includegraphics [scale=1] {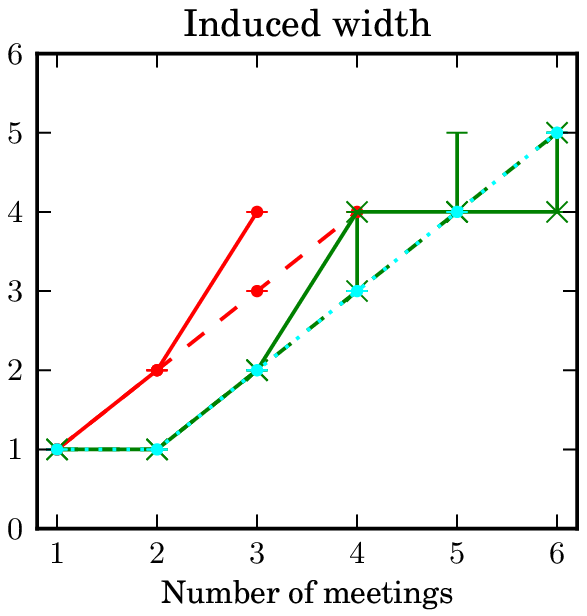}

\includegraphics [scale=1] {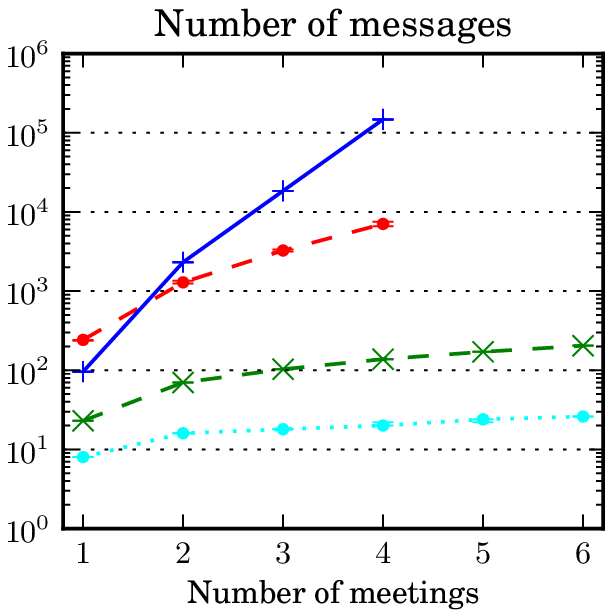}
\includegraphics [scale=1] {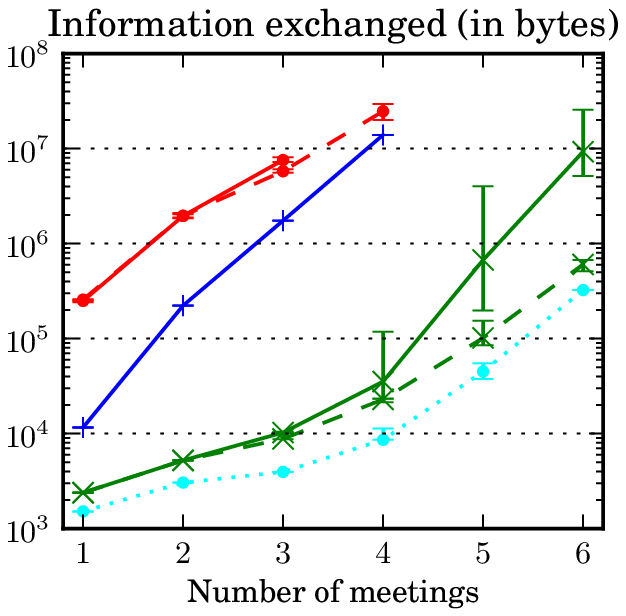}
\caption{Performance on meeting scheduling problems. }
\label{fig:meetings}
\end{center}
\end{figure}

The DisCSP formulation for this problem class was the following. Each agent owns one variable of domain size~8 for each meeting it participates in. There is an \emph {allDifferent} constraint over all its variables to enforce that all its meetings are scheduled at different times. For each meeting, a binary equality constraint expressed over the corresponding variables owned by the two participants enforces that the participants agree on the time for the meeting. Notice that all inter-agent constraints are binary equality constraints, and therefore P$^{\sfrac {3}{2}}$-DPOP$^{(+)}$ do not bring any additional privacy compared to P-DPOP$^{(+)}$, since the values of neighboring variables are semi-private information; therefore, we do not report the performance of P$^{\sfrac {3}{2}}$-DPOP$^{(+)}$. For MPC-DisCSP4, we simplified the formulation by only introducing one variable per meeting, owned by its initiator. This way, for each meeting, only its initiator is made public, but its exact list of participants remains secret (it is only revealed \emph{a posteriori} to the participants of the meeting when they attend it). 

As can be seen in Figure~\ref {fig:meetings}, P$^2$-DPOP's performance is comparable to that of MPC-DisCSP4 (but with much stronger privacy guarantees), although the former sends significantly more information on the smallest problems, but significantly fewer messages on the largest problems they could solve within the timeout limit. On the other hand, because it is a majority threshold scheme, MPC-DisCSP4 actually could not provide any privacy guarantees on problems of size~1, since they only involved 2~agents. Both algorithms could only scale up to problems of size~4, and timed out on larger problems. P$^2$-DPOP$^+$'s increased topology privacy comes at a price that made it time out earlier than P$^2$-DPOP; this complexity increase is due to P$^2$-DPOP$^+$'s steeper induced width curve. 

The P-DPOP$^{(+)}$ algorithms remain the most efficient by far: they perform between 1 and~2 orders of magnitude better than all others, both in terms of runtime and information exchanged. And like for graph coloring, the improved topology privacy in P-DPOP$^+$ comes at a price that is negligible for small problems, but can grow to one order of magnitude on problems of size~6, even if its induced width remains close to that of P-DPOP. In terms of runtime and information exchange, P-DPOP is only worse than DPOP by a small factor (since it has the same median induced width); however it sends approximately one order of magnitude more messages (which is mostly due to the pseudo-tree root election mechanism).

\subsection {Resource Allocation}
\label {sec:resource_allocation}

Next, we performed experiments on distributed resource allocation benchmarks. Problem instances were produced using the combinatorial auction problem generator CATS~\cite{Leyton-Brown00}, ignoring bid prices. We used the \emph {temporal matching} distribution modeling the allocation of airport takeoff/landing slots, fixing the total number of slots (i.e. resources) to~8, and varying the numbers of bids. Each bid is a request for a bundle of 2 resources (a takeoff slot and a corresponding landing slot). Multiple requests may be placed by the same airline company; each airline should have exactly one fulfilled.  

\begin{figure}[b!]
\begin{center}
\includegraphics [scale=1] {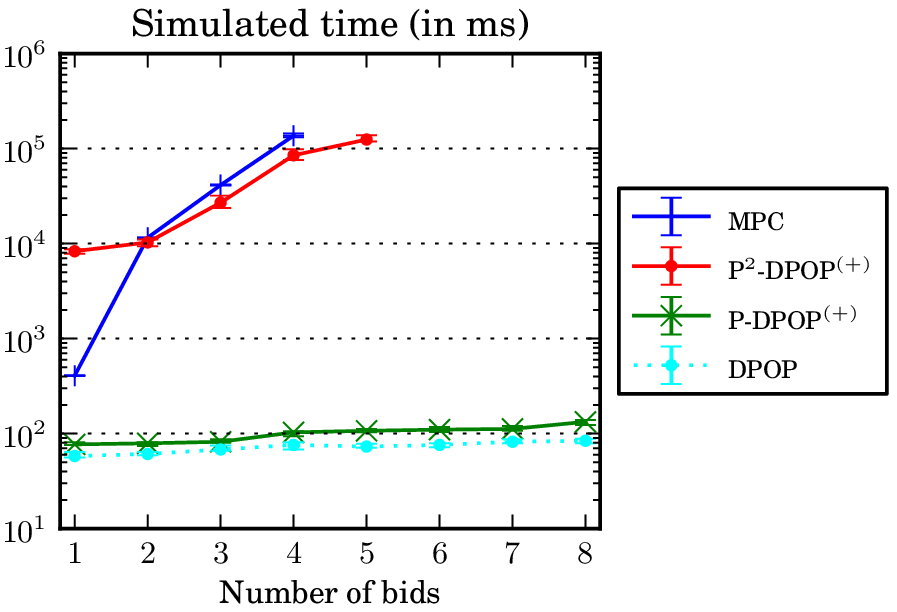}
\includegraphics [scale=1] {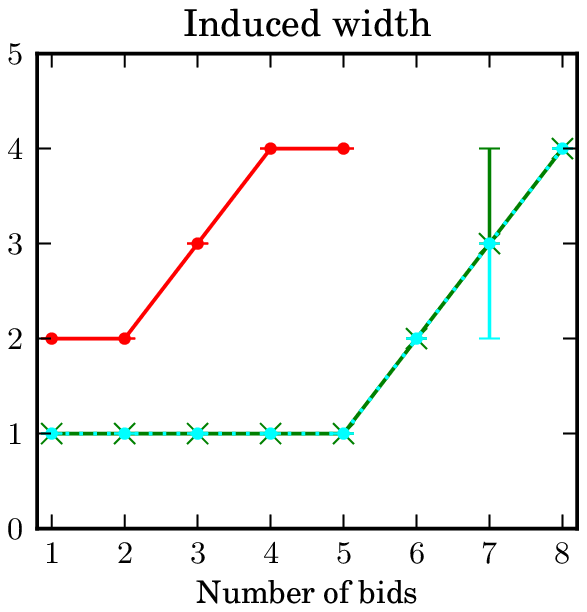}

\includegraphics [scale=1] {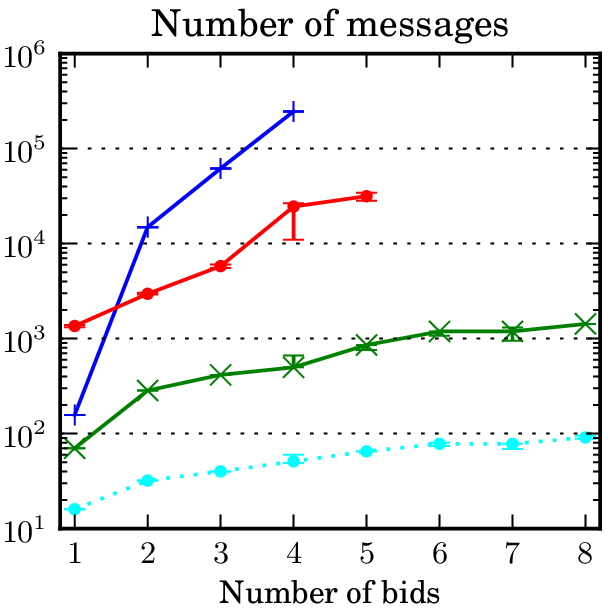}
\includegraphics [scale=1] {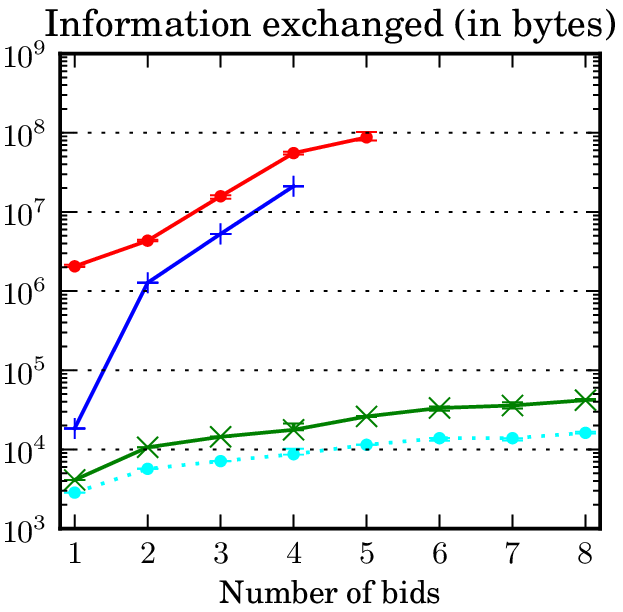}
\caption{Performance on resource allocation problems. }
\label{fig:auctions}
\end{center}
\end{figure}

This problem was modeled as a DisCSP as follows \cite {Leaute09a}. One agent is introduced for each bidder/airline and for each resource/slot, assuming that each resource is controller by a different resource provider/airport\footnote {CATS assumes there is a single auctioneer, and does not specify which slot is at which airport; this is why we have assumed that each resource was provided by a separate resource provider.}. For each resource~$X$, and for each bidder~$B$ that requests the resource, there is one binary variable~$x_b$ controlled by the resource provider, which models whether $B$ is allocated the resource ($x_b = 1$) or not ($x_b = 0$). The resource provider also expresses one constraint $\sum \leq 1$ over all her variables to enforce that her resource can only be allocated to at most one of the interested bidders. For each variable~$x_b$, we also introduce one copy variable~$b_x$ owned by the bidder~$B$, with the constraint~$x_b = b_x$. Each bidder~$B$ then expresses a constraint over all her variables, enforcing that she should only be allocated two resources that correspond exactly to one of her requests. The introduction of copy variables is motivated by the DisCSP assumption that each agent knows all constraints involving its variables, and serves two privacy-related purposes: 1) the full list of agents placing requests on a given resource is only known to the resource provider, and 2) the full list of resources requested by a given agent (and in which bundles) is only known to the agent itself. Like for the meeting scheduling problem class, all inter-agent constraints are equality constraints, therefore we do not report the performance of P$^{\sfrac {3}{2}}$-DPOP$^{(+)}$, whose privacy guarantees are the same as P-DPOP$^{(+)}$. 

For MPC-DisCSP4, the DisCSP formulation was simplified by not introducing copy variables hold by bidders, since they are not necessary to protect constraint privacy: bidders can request resources by expressing constraints directly over the variables owned by the resource providers. However, since MPC-DisCSP4 assumes that all variables are public, in order to increase topology privacy we introduced, for each resource, as many variables as bidders, regardless of whether they are actually interested in the resource. To reduce the size of the search space, we assumed that the $\Sigma \leq 1$ constraints were public. 

Figure~\ref {fig:auctions} shows that the performance of MPC-DisCSP4 decreases very fast with the number of requests, such that the algorithm was not able to scale beyond problems of size~4. The P$^2$-DPOP$^{(+)}$ algorithms seem to scale better, and were able to solve problems involving 5~requests. On all three metrics, both algorithms were largely outperformed by P-DPOP$^{(+)}$, whose runtime curve is remarkably flat, and almost overlaps with the runtime curve of DPOP, which is consistent with their undistinguishable induced width curves. The overhead of P-DPOP$^{(+)}$ compared to DPOP is slightly larger in terms of information exchanged, and goes up to one order of magnitude in terms of number of messages. P-DPOP$^+$ and P$^2$-DPOP$^+$ performed the same as their respective ``non-plus" variants.

\subsection {Strategic Game Equilibria}
\label {sec:party}

Finally, we report experimental results on one last class of problem benchmarks, which corresponds to the distributed computation of pure Nash equilibria in strategic games. We used the particular example of the \emph {party game} introduced by \citet {Singh04}, which is a one-shot, simultaneous-move, \emph {graphical game} \cite {Kearns01} in which all players are invited to a common party, and each player's possible strategies are whether to attend the party or not. Players are arranged in an undirected social graph, which defines which other invitees each player knows. Each player's reward for attending the party depends on whether her acquaintances also decide to attend, and on whether she likes them or not. The reward is 1 per attendee she likes, minus 1 per attendee she dislikes, and minus a constant cost of attendance in~$[0, 1]$. The reward for not attending is~0. 

\begin{figure}[b!]
\begin{center}
\includegraphics [scale=1] {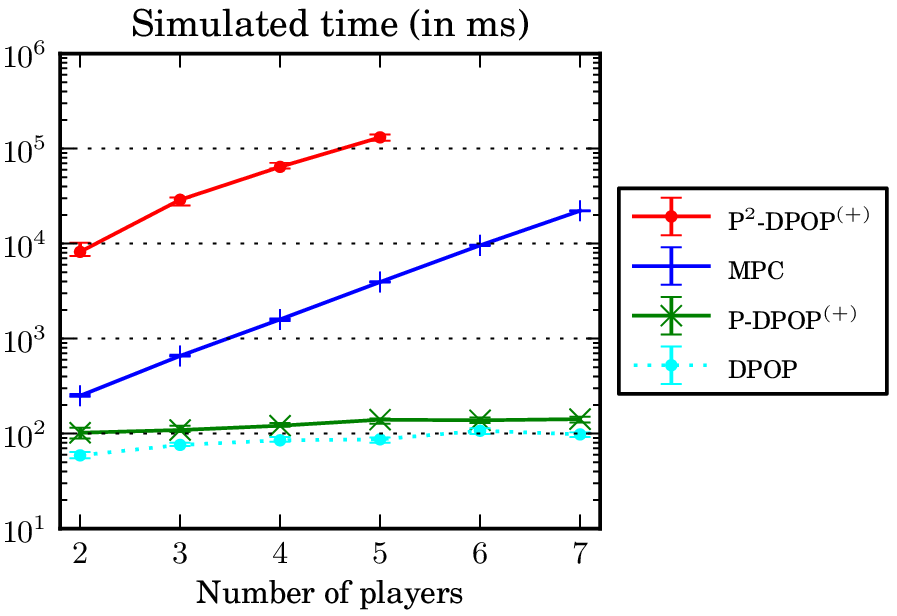}
\includegraphics [scale=1] {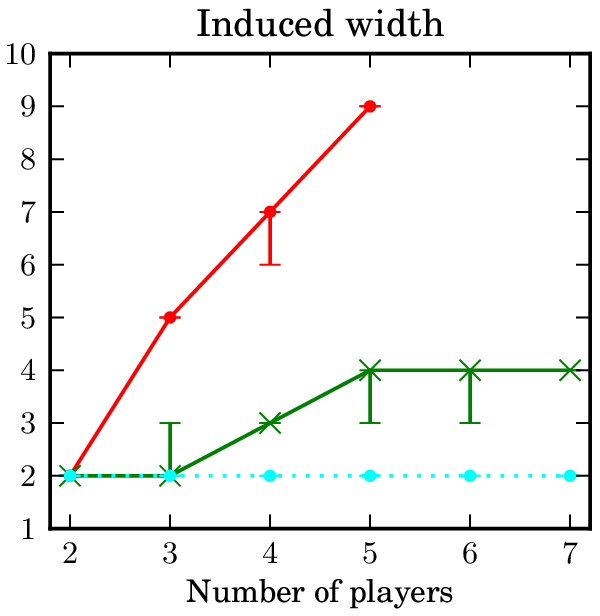}

\includegraphics [scale=1] {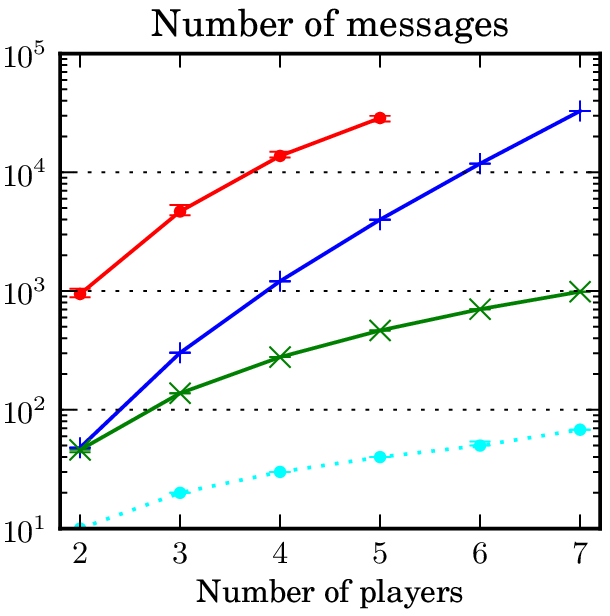}
\includegraphics [scale=1] {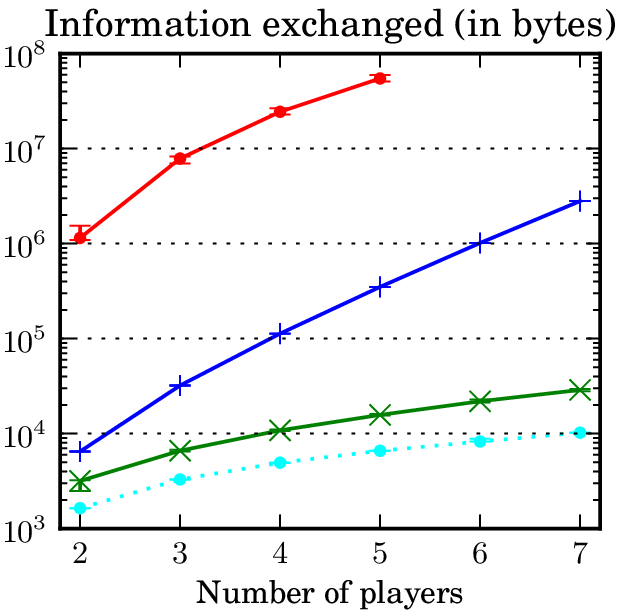}
\caption{Performance on party games. }
\label{fig:party}
\end{center}
\end{figure}

The problem of computing a Nash equilibrium to such a game can be formulated into a DisCSP as follows. Each player is an agent, which owns one binary variable for its strategy, and one copy variable for the strategy of each of its acquaintances. Each variable is constrained to be equal to each of its copy variables, using binary equality constraints like for resource allocation problems (Section~\ref {sec:resource_allocation}). Each agent also expresses one constraint over all its variables, which only allows a particular strategy for the agent if it is a best response to its neighbors' joint strategies. Notice that the resulting constraint graph is not the same as the game graph, due to the presence of copy variables. A solution to the DisCSP therefore yields a joint strategy profile for all players that is a pure Nash equilibrium, since each player plays best-response to her neighbors. Notice also that, since each player holds a copy variable for each of her neighbors' strategy, these strategies are semi-private information that cannot be protected, which is why we do not report the performance of P$^{\sfrac {3}{2}}$-DPOP$^{(+)}$. 

For MPC-DisCSP4, the DisCSP formulation can be simplified by not introducing copy variables \cite {Vickrey02}. An interesting consequence of this difference is that, contrary to P$^{\geq 1}$-DPOP$^{(+)}$, MPC-DisCSP4 is then able to hide each player's chosen strategy from her neighbors. In the context of the party game, this is not very useful to players who decide to attend the party, since they will necessarily eventually discover whether their acquaintances also decided to attend or not. On the other hand, a player who declines the invitation does not \emph {directly} discover anything about the list of attendees. She might still be able to make indirect inferences about the decisions of her acquaintances, based on the fact that her decision to decline is a best response to their respective chosen strategies. 

Figure~\ref {fig:party} reports on the performance of the algorithms on random acyclic game graphs of degree~2 (i.e. trees in which each node has at most 2~children), with varying numbers of players. The P$^2$-DPOP$^{(+)}$ algorithms were only able to scale up to problems of size~5 due to the rapidly increasing induced width, and were outperformed by MPC-DisCSP4 by at least one order of magnitude across all three metrics. Both algorithms still performed largely worse than the P-DPOP$^{(+)}$ algorithms, which are capable of scaling to much larger problems. This is because, in this setting, the induced width remains bounded: since the game graphs are acyclic, DPOP's induced width is constantly equal to~2, because each FEAS message sent by agent~$a_x$ to its parent agent~$a_y$ is expressed only over $a_y$'s strategy variable and the copy of $a_x$'s strategy variable held by~$a_y$. P-DPOP$^{(+)}$'s induced width is increased by 2 because agent~$a_y$ has at most 2 children in the pseudo-tree, each using a different codename for $a_y$'s strategy variable. As a result, the performance overhead in P-DPOP$^{(+)}$ compared to DPOP is minimal in terms of runtime; it is slightly larger in information exchanged, and reaches one order of magnitude in number of messages.

\section {Conclusion}

In this paper, we have addressed the issue of providing strong privacy guarantees in Distributed Constraint Satisfaction Problems (DisCSPs). We have defined four types of information about the problem that agents might want to hide from each other: \emph {agent privacy} (hiding an agent's identity from non-neighbors), \emph {topology privacy} (keeping the topology of the constraint graph private), \emph {constraint privacy} (protecting the knowledge of the constraints), and \emph {decision privacy} (the final value of each variable should only be known to its owner agent). Departing from previous work in the literature, which only addressed subsets of these privacy types, and often focused on quantifying the privacy loss in various algorithms, we have proposed a set of algorithms with strong guarantees about what information provably will not be leaked. 

We have carried out performance experiments on four different classes of benchmarks: graph coloring, meeting scheduling, resource allocation, and game equilibrium computation. The results show that our algorithms not only provide stronger privacy guarantees, but also scale better than the previous state of  the art. We have explored the tradeoff between privacy and performance: the P-DPOP$^+$ variant was shown to scale much better than the others, but can only guarantee partial constraint and decision privacy, which may still be considered sufficient in many problem classes. Full decision privacy (P$^{\sfrac {3}{2}}$-DPOP$^+$) and full constraint privacy (P$^2$-DPOP$^+$) come at significantly higher prices in computation time and information exchange, which, with today's hardware, limits their applicability to smaller problem instances. We have compared the performance of our algorithms against the MPC-DisCSP4 algorithm, which can be considered the previous state of the art in DisCSP with strong privacy guarantees. On the first three classes of benchmarks, all our algorithms almost systematically outperformed MPC-DisCSP4 in terms of runtime and number of messages exchanged; however, MPC-DisCSP4 proved to exchange less information than P$^{>1}$-DPOP$^+$. On game equilibrium computation, MPC-DisCSP4 scaled much better than P$^2$-DPOP$^+$ along all three metrics, but was still largely outperformed by P-DPOP$^+$. In terms of practical applicability, we have shown that some of our algorithms scale to medium-size problems that are beyond reach of the previous state of the art in general DisCSP with strong privacy guarantees. We have also investigated the application of these algorithms to real-life meeting scheduling, in collaboration with the Nokia Research Center in Lausanne.  

Future work could extend the techniques in this paper along several directions. First, while we have restricted ourselves to pure satisfaction problems for the sake of simplicity, our algorithms can be easily extended to solve Distributed Constraint Optimization Problems (DCOPs). In fact, our P$^{<2}$-DPOP$^+$ algorithms already are optimization algorithms; only P$^2$-DPOP$^+$ requires some changes to be applied to DCOPs. These changes involve replacing ElGamal-encrypted Boolean feasibility values with ElGamal-encrypted, bit-wise vector representations of integer cost values, as described by \citet {Yokoo02}. This would incur an increase in complexity that is only linear in an upper bound on the cost of the optimal solution. An optimization variant of MPC-DisCSP4, called \emph {MPC-DisWCSP4}, was also already proposed by \citet {Silaghi04}; we report performance comparisons with our algorithms in other publications \cite {Leaute11,Leaute11b}. 

Further avenues of future research could result from relaxing our assumption that agents are \emph {honest, but curious}. A number of challenging issues arise when attempting to apply the techniques in this paper to \emph {self-interested} agents that can manipulate the protocol in order to achieve solutions that better suit their selfish preferences. One such issue is that of \emph {verifiability}, which involves making it possible to check whether the protocols were executed as designed, without the need to decrypt the messages exchanged. Another interesting issue is whether it is possible to modify the algorithms to make them \emph {incentive-compatible}, such that it is in each agent's best interest to honestly follow the protocol. 


\appendix

\section {Cooperative ElGamal Homomorphic Encryption}
\label {sec:ElGamal}

\emph{Homomorphic encryption} is a crucial building block of the privacy-preserving algorithms introduced in this paper. \emph{Encryption} is the process by which a message --- in this appendix, a Boolean --- can be turned into a \emph{cyphertext}, in such a way that decrypting the cyphertext to retrieve the initial cleartext message is impossible (or, in this case, computationally very hard in the worst case) without the knowledge of the secret encryption key that was used to produce the cyphertext. An encryption scheme is said to be \emph {homomorphic} if it is possible to perform operations on cyphertexts that translate to operations on the initial cleartext messages, without the need to know the encryption key. \emph {ElGamal encryption} \cite{Elgamal85} is one such encryption scheme that possesses this homomorphic property.

\subsection {Basic ElGamal Encryption of Booleans}

ElGamal encryption can be used to encrypt Booleans such that performing the following operations on encrypted Booleans is possible without the knowledge of the decryption key: 
\begin{itemize}
\item the AND of an encrypted and a cleartext Boolean; 
\item the OR of two encrypted Booleans. 
\end{itemize}

ElGamal encryption is a homomorphic, public key cryptography system based on the intractability of the Diffie-Hellman problem \cite {Tsiounis98}, which proceeds as follows. Let $p$ be a \emph {safe prime} of the form $2rt+1$, where $r$~is a large random number, and $t$~is a large prime. All numbers and all computations will be modulo~$p$. Let~$g$ be a \emph {generator} of~$\mathbb {Z}_p^*$, i.e. $g$ is such that its powers cover~$[1, p-1]$. With $p$ and~$g$ assumed public knowledge, the ElGamal private key is a chosen random number $x \in [1, p-2]$, and the associated public key is $y=g^x$. A cleartext number~$m$ is then encrypted as follows: 
\begin{equation}
\label {eq:encrypt}
E(m) = (\alpha,\beta) = (my^r,g^r) 
\end{equation}
where $r$ is a random number chosen by the encryptor. Decryption proceeds as follows: 
$$\frac{\alpha}{\beta^x} = \frac{my^r}{(g^r)^x} = m~.$$

A useful feature of ElGamal encryption is that it allows to {\em randomize} an encrypted value to generate a new encryption bearing no similarity with the original value. Randomizing $E(m)$ in Eq.~(\ref {eq:encrypt}) yields: $$E^2(m) = (\alpha y^{r'}, \beta g^{r'}) = (m y^{r+r'}, g^{r+r'})$$ which still decodes to~$m$. To encrypt Booleans, we represent $\mathtt{false}$ by $1$, and $\mathtt{true}$ by a value $z \neq 1$, which allows us to compute the AND and OR operations: 
$$ E(m) \wedge \mathtt{false} = E(1) ~~~~~~~ E(m) \wedge \mathtt{true} 	= E^2(m)~;$$
$$E(m_1) \vee E(m_2)  = (\alpha_1 \cdot \alpha_2, \beta_1 \cdot \beta_2) = E(m_1\cdot m_2)~.$$

\subsection {Cooperative ElGamal Encryption}

In the previous ElGamal encryption scheme, decryption can be performed in a single step, using the private key, which is a secret of the agent that originally encrypted the message. However, it is also possible to perform ElGamal encryption in such a way that \emph {all} agents need to cooperate in order to perform decryption. This is possible through the use of a \emph {compound} ElGamal key $(x, y)$ that is generated cooperatively by all agents \cite{Pedersen91}:  
\begin{description}
\item [Distributed Key Generation] The ElGamal key pairs $(x_i, y_i)$ of $n$~agents can be combined in the following fashion to obtain the compound key pair~$(x, y)$: 
$$x = \Sigma_{i=1}^n x_i ~~~~~~~ y = \Pi_{i=1}^n y_i~.$$

\item [Distributed Decryption] If each agent publishes its decryption share $\beta^{x_i}$, the message can be decrypted as follows: 
$$\frac{\alpha}{\Pi_{i=1}^n \beta^{x_i}} = \frac{\alpha}{\beta^x} = m~.$$
\end{description}

\section {Routing of Messages along a Circular Variable Ordering}
\label {sec:routing}

In order to implement the round-robin exchange of vectors briefly presented in Section~\ref {sec:P_DPOP_value_overview}, the variables are ordered along a circular ordering that is mapped to the chosen pseudo-tree, as illustrated in Figure~\ref {fig:circular_ordering} (page~\pageref {fig:circular_ordering}) . Each variable needs to be able to send a message to the previous variable (i.e. clock-wise) in the ordering, which is a challenge in itself because only neighboring variables should communicate directly. Furthermore, to protect agent and topology privacy, no agent should know the overall circular ordering. To solve this issue, Algorithm~\ref {algo:routing} is the algorithm used in P$^2$-DPOP~\cite {Leaute09a} to route messages. 

\begin {algorithm}[h]
\begin{algorithmic}[1]
\ENSURE \textsc{ToPrevious}$(M)$ for variable~$x$
\STATE \textbf {if} $x$ is the root of the pseudo-tree \textbf {then} Send the message (LAST, $M$) to $x$'s last child 	\label {algo:routing:root}
\STATE \textbf {else} Send the message (PREV, $M$) to $x$'s parent 	\label {algo:routing:PREV_to_parent}

\vspace{3mm}

\ENSURE \textsc{RouteMessages}$()$ for variable~$x$
\LOOP
	\STATE Wait for an incoming message ($type, M$) from a neighbor~$y_i$
	
	\IF {$type = $ LAST}
		\STATE \textbf {if} $x$ is a leaf \textbf {then} Deliver message $M$ to $x$ 		\label {algo:routing:LAST_deliver}
		\STATE \textbf {else} Send the message (LAST, $M$) to $x$'s last child 		\label {algo:routing:to_last}
		
	\ELSIF {$type = $ PREV}
		\STATE \textbf {if} $y_i$ is $x$'s first child \textbf {then} Deliver the message $M$ to $x$ 		\label {algo:routing:PREV_deliver}
		\STATE \textbf {else} Send the message (LAST, $M$) to the child before $y_i$ in $x$'s list of children 		\label {algo:routing:first_LAST}
	\ENDIF
\ENDLOOP
\end{algorithmic}
\caption {Sending a message~$M$ clock-wise in the circular variable ordering. }
\label {algo:routing}
\end {algorithm}

Consider for instance a message~$M$ that agent~$a(x_1)$ wants to send to the previous variable --- which is~$x_4$, but $a(x_1)$ does not know it. Agent~$a(x_1)$ wraps $M$ into a PREV message that it sends to its parent variable~$x_4$ (line~\ref {algo:routing:PREV_to_parent}). Because the sender variable~$x_1$ is $x_4$'s first (and only) child, $a(x_4)$ infers that it should deliver~$M$ to itself (line~\ref {algo:routing:PREV_deliver}). Consider that $a(x_4)$ now wants to forward~$M$ to its previous variable --- $x_5$, which $a(x_4)$ does not know. Like before, $a(x_4)$ sends a message (PREV, $M$) to its parent variable~$x_3$, which then reacts by sending a message (LAST, $M$) to its last child preceding~$x_4$ in its list of children, which is~$x_5$ (line~\ref {algo:routing:first_LAST}). LAST messages indicate that the payload~$M$ should be delivered to the last leaf of the current subtree (line~\ref {algo:routing:to_last}); therefore, $a(x_5)$ delivers~$M$ to itself (line~\ref {algo:routing:LAST_deliver}) since it has no children. If the root wants to send a message to its previous variable, it also uses a LAST message to forward it to the last leaf of the overall pseudo-tree (line~\ref {algo:routing:root}). 

\begin {theorem}
Algorithm~\ref {algo:routing} guarantees full agent privacy.
\end {theorem}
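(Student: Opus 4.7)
The plan is to follow the two-pronged strategy already established in Theorem~\ref{thm:agent_privacy:P_DPOP}: I would verify that (1) no message in Algorithm~\ref{algo:routing} is ever sent between non-neighboring agents, and (2) no message content identifiably refers to a non-neighboring agent. Together these two conditions rule out both ways in which agent privacy (Definition~\ref{def:agent_privacy}) could be violated.

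For the first point, I would enumerate the four message-sending lines of Algorithm~\ref{algo:routing} (lines \ref{algo:routing:root}, \ref{algo:routing:PREV_to_parent}, \ref{algo:routing:to_last}, and~\ref{algo:routing:first_LAST}) and observe that in each case the recipient is either the parent of the current variable, its last child, or a specific child adjacent in its list of children. Since P$^2$-DPOP$^{(+)}$ operates on a DFS tree (Section~\ref{sec:DPOP}), every parent--child relationship in the pseudo-tree coincides with a constraint-graph edge, so every direct communication performed by the routing algorithm is between two neighbors.

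For the second point, the only information Algorithm~\ref{algo:routing} adds on top of the user-supplied payload~$M$ is a two-valued tag $type \in \{\text{LAST}, \text{PREV}\}$. This tag references no variable and no agent; it merely tells the recipient which branch of its local dispatch logic to execute. In particular, the original caller of \textsc{ToPrevious} is not encoded anywhere in the wrapper, so a forwarder cannot determine from the tag or destination choice whether the message was first issued by the root, by some remote leaf, or by an agent elsewhere along the circular ordering. The agent-privacy properties of the payload~$M$ itself are the responsibility of the calling protocol and are argued separately in the agent-privacy theorems of P$^{\sfrac{3}{2}}$-DPOP$^{(+)}$ and P$^2$-DPOP$^{(+)}$.

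The main obstacle I anticipate is making the second point fully watertight: I must rule out that the sequence of LAST versus PREV tags observed by a forwarding agent, or the specific child to which a LAST message is subsequently routed, might let it triangulate the identity of a non-neighboring originator. I plan to handle this by showing that the tag produced at each hop is a deterministic function of purely local data --- whether the forwarding agent is the root, an internal node, or a leaf, together with the identity of the neighbor from which the incoming message just arrived --- all of which the forwarder already knows. Consequently the pattern of tags observed at any agent is consistent with the original caller being any of its non-neighboring ancestors or descendants in the pseudo-tree, and no specific non-neighbor can be singled out, completing the proof.
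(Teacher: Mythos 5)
Your proposal is correct and follows essentially the same two-pronged argument as the paper: all PREV/LAST messages travel only along parent--child edges of the DFS tree (hence between neighbors), and the wrapper adds nothing beyond a two-valued tag, so any agent-privacy guarantee reduces to the payload~$M$, which is the calling protocol's responsibility. Your extra care about whether the observed tag pattern could let a forwarder triangulate the originator goes slightly beyond the paper's brief proof but does not change the approach.
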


\begin {proof}
The goal of this algorithm is precisely to address agent privacy issues in the pseudo-tree rerooting procedure, which involves each variable sending a message to the previous variable in a circular ordering of the variables. There is no guarantee that there exist a circular ordering such that any two consecutive variables are owned by neighboring agents, which is necessary to protect agent privacy. Therefore, Algorithm~\ref {algo:routing} is responsible for routing these messages through paths that only involve communication between neighboring agents. 

The routing procedure itself only involves encapsulating the routed messages inside PREV or LAST messages, which do not contain any other payload. Therefore, as long as the routed messages do not contain information that can be used to identify a non-neighboring agent, the routing procedure guarantees agent privacy. 
\end {proof}

\begin {theorem}
Algorithm~\ref {algo:routing} guarantees full topology privacy.
\end {theorem}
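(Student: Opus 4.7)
The plan is to argue directly from Definition~\ref{def:topology}: we must show that nothing an agent observes during the execution of Algorithm~\ref{algo:routing} allows it to infer the existence of a variable, a constraint, or a cycle in the constraint graph in which it is not itself involved. Since the algorithm is purely a transport layer whose payload $M$ is provided by the caller, the theorem should be read as: the routing mechanism itself does not introduce any new topology leak beyond what is already revealed by~$M$. I would state this scoping explicitly at the beginning of the proof, so that it is clear the guarantee is compositional with whatever caller uses the routine.

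The first step is to observe that every PREV and LAST message is exchanged along a pseudo-tree edge, i.e. between a parent and one of its children. Because the pseudo-tree is a DFS tree, such pairs are neighbors in the constraint graph, so the mere act of sending or receiving a routing message only reveals edges that both endpoints already know about (the existence of their own parent or of their own children, which is permitted by Definition~\ref{def:topology}). In particular, no agent ever receives a routing message from a non-neighbor, so agent and edge privacy for non-adjacent agents is vacuously preserved at the transport level.

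The second step is to inspect the two-bit tag set $\{\text{PREV}, \text{LAST}\}$ and the routing decisions triggered by it (lines \ref{algo:routing:root}--\ref{algo:routing:first_LAST}). For an agent~$x$ receiving a PREV message from a child~$y_i$, the only additional topological fact that could leak is the relative position of~$y_i$ in $x$'s ordered list of children; but $x$ already owns that list, so no new information is revealed. Symmetrically, LAST messages tell a receiver only that it should forward to its last child, again using information it already owns. Crucially, the tag does not depend on, and therefore does not reveal, the shape of the subtree on the \emph{other} side of~$x$: whether the payload is destined for a distant branch or for the root is indistinguishable to~$x$ from the wrapper alone. Thus, no construct outside $x$'s direct neighborhood in the pseudo-tree can be inferred from the routing metadata.

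The final step is to collect these observations to conclude that every bit of topological information that an agent could extract from participating in Algorithm~\ref{algo:routing} concerns only nodes and edges it owns or is directly incident to, which is exactly what Definition~\ref{def:topology} permits. I expect the main subtlety of the proof, and the step that deserves the most care, to be ruling out indirect inferences from the \emph{pattern} of routing calls, e.g. an agent counting how many PREV/LAST messages transit through it in a round-robin pass and thereby lower-bounding the size of a sibling subtree. The cleanest way to handle this is to note that within a single round-robin sweep each variable plays exactly the same routing role regardless of the global topology, so counts and ordering observed at~$x$ are functions only of $x$'s local subtree, which $x$ already knows; any residual leak must then be attributed to the payload~$M$ and handled by the caller's own proof, consistently with the scoping set up at the start.
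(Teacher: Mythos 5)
Your proposal is correct and follows essentially the same route as the paper's proof: a case analysis on the reception of PREV and LAST messages, arguing that the wrapper only tells the recipient to perform a routing action determined by its own (already known) list of children, while the origin and destination of the payload remain an unknown variable in an unknown position. Your additional remarks --- the explicit scoping that any residual leak must come from the payload $M$ and be charged to the caller, and the discussion of inferences from message counts --- go slightly beyond what the paper writes down, but they refine rather than replace the paper's argument.
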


\begin {proof}
The purpose of this algorithm is to enable variables to propagate messages along a circular variable ordering, without the need to know any topological information about the constraint graph, other than the knowledge of their respective (pseudo-)parents and (pseudo-)children in the pseudo-tree. \textsc{ToPrevious()} makes it possible to send a message to the previous variable in the circular ordering, without knowing which variable this is. 
\begin{itemize}
\item The reception of a (PREV, $M$) message only indicates that the sender child wants the included message~$M$ to be delivered to its previous variable, which is either the recipient of the PREV message, or an unknown descendant thereof. 
\item The reception of a (LAST, $M$) message from one's parent indicates that an unknown variable (either the unknown root of the pseudo-tree, or the unknown child of an unknown ancestor, in another branch) wants $M$ to be delivered to its previous variable, which is one's descendant in the pseudo-tree. 
\end{itemize}
\end {proof}


\end{document}